\newcommand*{\addFileDependency}[1]{
  \typeout{(#1)}
  \@addtofilelist{#1}
  \IfFileExists{#1}{}{\typeout{No file #1.}}
}
\newcommand*{\myexternaldocument}[1]{
    \externaldocument{#1}
    \addFileDependency{#1.tex}
    \addFileDependency{#1.aux}
}
\theoremstyle{plain}
\newtheorem{theorem}{Theorem}[section]
\newtheorem{lemma}[theorem]{Lemma}
\theoremstyle{definition}
\newtheorem{assumption}[theorem]{Assumption}
\theoremstyle{remark}
\def\nsize{N}
\def\RR{\mathbb{R}}
\def\EE{\mathbb{E}}
\def\PP{\mathbb{P}}
\def\uv{\mathbf{u}}
\def\xv{\mathbf{x}}
\def\vx{\mathbf{x}}
\def\vy{\mathbf{y}}
\def\vu{\mathbf{u}}
\def\PPtr{\PP_{\mathrm{train}}}
\def\PPtest{\PP_{\mathrm{test}}}
\def\Dtr{\mathcal{D}} 
\def\risk{\mathcal{R}}
\def\riskp{\mathcal{R}} 
\def\eriskp{\tilde{\mathcal{R}}} 
\newcommand{\dataset}{\mathcal{D}}
\newcommand{\classes}{\mathcal{C}}
\newcommand{\normalDistribution}{\mathcal{N}}
\def\t{\times}
\newcommand{\indicator}{\mathbbm{1}}
\title{Nonparametric Uncertainty Quantification \\ for Single Deterministic Neural Network}
\author[1]{\textbf{Nikita Kotelevskii}}
\author[1]{\textbf{Aleksandr Artemenkov}}
\author[2]{\textbf{Kirill Fedyanin}}
\author[1,3]{\\ \textbf{Fedor Noskov}}
\author[1]{\textbf{Alexander Fishkov}}
\author[6,7]{\textbf{Artem Shelmanov}}
\author[1,6]{\\ \textbf{Artem Vazhentsev}}
\author[4,5]{\textbf{Aleksandr Petiushko}}
\author[2]{\textbf{Maxim Panov}}
\affil[1]{Skolkovo Institute of Science and Technology, Moscow, Russia}
\affil[2]{Technology Innovation Institute, Abu Dhabi, UAE}
\affil[3]{HSE University, Moscow, Russia}
\affil[4]{Nuro, Inc.}
\affil[5]{Lomonosov Moscow State University, Moscow, Russia}
\affil[6]{AIRI, Moscow, Russia}
\affil[7]{Mohamed bin Zayed University of Artificial Intelligence, Abu Dhabi, UAE}
\begin{document}

\maketitle

\begin{abstract}
  This paper proposes a fast and scalable method for uncertainty quantification of machine learning models' predictions. First, we show the principled way to measure the uncertainty of predictions for a classifier based on Nadaraya-Watson's nonparametric estimate of the conditional label distribution. Importantly, the proposed approach allows to disentangle explicitly \textit{aleatoric} and \textit{epistemic} uncertainties. The resulting method works directly in the feature space. However, one can apply it to any neural network by considering an embedding of the data induced by the network. We demonstrate the strong performance of the method in uncertainty estimation tasks on text classification problems and a variety of real-world image datasets, such as MNIST, SVHN, CIFAR-100 and several versions of ImageNet.
\end{abstract}


\section{Introduction}
\label{sec:intro}  
  In many machine learning applications, it is crucial to complement model predictions with uncertainty scores that reflect the degree of trust in these predictions. The total uncertainty of a prediction sums from two uncertainty types, arising from different sources: \textit{aleatoric} and \textit{epistemic}~\citep{der2009aleatory,kendall2017uncertainties}. The former reflects the irreducible noise and ambiguity in the data due to class overlap, while the latter is related to the lack of knowledge about model parameters, and can be reduced by expanding the training dataset. Disentangling epistemic uncertainty can help to identify \textit{out-of-distribution (OOD) data} or to spot instances important for annotation during active learning~\citep{gal2017deep}. The areas with high aleatoric uncertainty might contain some incorrectly labeled instances. If we quantify both types of uncertainty well, we can effectively \textit{abstain from predictions} in unreliable areas and address a decision to a human expert~\citep{el2010foundations}, which is important in safe-critical fields, such as medicine~\citep{miotto2016deep}, autonomous driving~\citep{levinson2011towards,filos2020can}, and finance~\citep{brando2018uncertainty}.
  
  There is no universally accepted uncertainty measure, and diverse, often heuristic treatments are used in practice. For this purpose, one simply could use maximum softmax probabilities of deep neural network (NN). 
  However, MaxProb represents only aleatoric uncertainty, and the resulting measure is notorious to be overconfident in data areas the model did not see during training~\citep{Nguyen2015DeepNN}. Methods based on ensembling \citep{Lakshminarayanan2017SimpleAS} or Bayesian techniques~\citep{Gal2016DropoutAA} can capture both types of uncertainty and yield more reliable uncertainty estimates (UEs)\footnote{Terms uncertainty quantification and uncertainty estimation are often used in literature interchangeably.}. However, along with that, they introduce large computational overhead and might require big modifications to a model architecture and a training procedure.

  Recently, a series of UE methods based on a single deterministic neural network model has been developed~\citep{lee2018simple,van2020uncertainty,liu2020simple,van2021improving}. The main idea behind these methods is to leverage geometrical proximity of instances in a vector space of neural network hidden representations that capture semantic relationships between instances in the input space. Such methods are very promising since they are computationally efficient and usually do not require big changes in network architectures and training procedures, which gives them versatility.
  
  Many of these methods rely on the assumption that (conditionally on the class) training instances are distributed normally in the latent space of a trained neural network~\citep{van2021improving,lee2018simple}. However, this assumption does not always hold. For example, a border between classes might be more complex, as well as the border between the in-domain and the out-of-domain region. This happens when training is accomplished in a low-resource regime with small amounts of labeled instances.
  Another problem is that some methods are hard to train and the model performance can substantially deteriorate without proper hyperparameter tuning. Finally, some methods require computing the covariance matrix of training data~\cite{lee2018simple}. This operation might be computationally unstable if the training dataset is not large enough.

  \begin{figure}[t!]
    \centering
    \includegraphics[width=\textwidth]{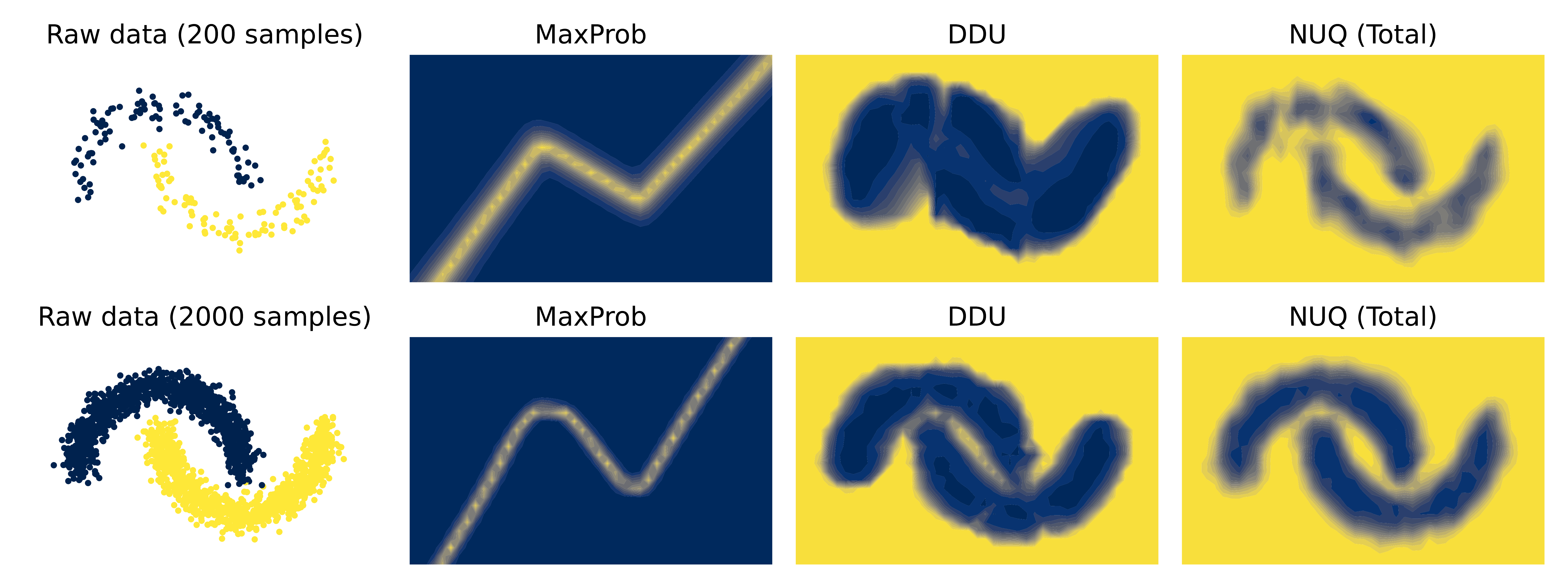}
    \caption{The first column on the left shows the raw data from the Two Moons dataset. The second column illustrates neural network prediction uncertainty obtained using MaxProb. The last two columns illustrate the results of DDU and the sum of aleatoric and epistemic uncertainties obtained using NUQ. The lighter color indicates higher uncertainty. 
    }
    \label{fig:toy_results_1}
  \end{figure}
  
  In this work, we propose a new principled approach to UE that overcomes these limitations and demonstrates very robust performance both in low-resource and in high-resource regimes. We suggest to look at the pointwise Bayes risk (probability of the wrong prediction) as the natural measure of the model prediction uncertainty at a particular data point. Then we consider the Nadaraya-Watson estimator of the conditional label distribution. Its asymptotic Gaussian approximation allows deriving uncertainty estimate based on the upper bound for the risk. The resulting Nonparametric Uncertainty Quantification (NUQ) method is amenable for uncertainty disentanglement (into aleatoric and epistemic) and is implemented in a scalable manner, which allows it to be used on large datasets such as ImageNet. 
  
  To illustrate the strong points of the proposed method, we suggest to look at a toy example in Figure~\ref{fig:toy_results_1}. Here, we compare NUQ with MaxProb and DDU~\cite{mukhoti2021deterministic}, one of the strong baselines in deterministic uncertainty estimation aimed at epistemic uncertainty. MaxProb as a measure of aleatoric uncertainty gives high values only in-between classes while being very confident far from the data. We also see that for a larger data set of 2000 points NUQ and DDU show similar results. However, when the neural network model is trained only on 200 labeled instances, DDU scores become very rough. It cannot spot an unreliable area in-between classes, while the NUQ-based total uncertainty measure does it precisely with high uncertainty score.
  Importantly, the presented total uncertainty for NUQ can be also disentangled into aleatoric and epistemic ones.

  Below, we give a \textbf{summary of contributions} of this paper.
  \begin{enumerate}
    \item We develop \textit{a new and theoretically grounded} method for nonparametric uncertainty quantification (see Section~\ref{sec:method}), which is (a) applicable to any deterministic neural network model, (b) has an efficient implementation, (c) allows to disentangle aleatoric and epistemic uncertainty, (d) outperforms other techniques on image and text classification tasks (see Section~\ref{sec:experiments}).
  
    \item We formally prove the consistency of the NUQ-based estimator when applied to the problem of classification with the reject option, see Section~\ref{sec:consistency}.

    \item We conduct a vast empirical investigation of NUQ and other UE techniques on image and text classification tasks that supports our claims, see Section~\ref{sec:experiments}. 
  \end{enumerate}


\section{Nonparametric Uncertainty Quantification}
\label{sec:method}

\subsection{Classification under Covariate Shift}
  In this section and below, for the sake of clarity, we provide derivations for the binary case. We derive a generalization to a multi-class setting in Supplementary Material (SM), Section~\ref{sec:multiclass_generalization}.
  
  Let us consider the standard binary classification setup \((X, Y) \in \RR^d \t \{0, 1\}\) with \((X, Y) \sim \PP\), where \(\PP\) is a joint distribution of objects and the corresponding labels. We assume that we observe the dataset \(\Dtr = \bigl\{(\xv_i, y_i)\}_{i = 1}^{\nsize}\) of \textit{i.i.d.} points from \(\PP = \PPtr\). Here, \(\xv\) and \(y\) are realizations of random variables \(X\) and \(Y\).

  The classical problem in statistics and machine learning is using the dataset \(\Dtr\), to find a rule \(\hat{g}\), which approximates the optimal one:
  
  \vspace{-24pt}

  \begin{EQA}[c]
    g^* = \arg\min_{g} \PP(g(X) \neq Y).
  \end{EQA}
  
  \vspace{-14pt}

  Here, \(g\colon \RR^d \to \{0, 1\}\) is any classifier, and the probability of wrong classification \(\PP(g(X) \neq Y)\) is usually called \textit{risk}.
  The rule \(g^*\) is given by the \textit{Bayes optimal classifier}:
  
  \vspace{-24pt}

  \begin{EQA}[c]
    g^*(\xv) = 
    \begin{cases}
      1, &\eta(\xv) \geq \frac{1}{2}, \\
      0, &\eta(\xv) < \frac{1}{2},
    \end{cases}
    \label{g_def}
  \end{EQA}

  \vspace{-14pt}

  where \(\eta(\xv) = \PP(Y = 1 \mid X = \xv)\) which is the conditional probability of \(Y\) given \(X = \xv\) under the distribution \(\PP\).

  In this work, we consider a situation when the distribution of the test samples \(\PPtest\) is different from the one for the training dataset \(\PPtr\), i.e. \(\PPtest \neq \PPtr\). Obviously, the rule \(g^*\) obtained for \(\PP = \PPtr\) might no longer be  optimal if the aim is to minimize the error on the test data \(\PPtest(g(X) \neq Y)\).

  In order to formulate a meaningful estimation problem, some additional assumptions are needed. We assume that the conditional label distribution \(\eta(\xv) = \PP(Y \mid X = \xv)\) \textbf{is the same} under both \(\PPtr\) and \(\PPtest\). This assumption has two important consequences:

  \vspace{-5pt}

  \begin{enumerate}
    \item The entire difference between \(\PPtr\) and \(\PPtest\) is due to the difference between marginal distributions of \(X\): \(p_{\mathrm{train}}(X)\) and \(p_{\mathrm{test}}(X)\). The situation when \(p_{\mathrm{test}}(X) \neq p_{\mathrm{train}}(X)\) is known as \textit{covariate shift}.

    \item The rule \(g^*\) is still valid, i.e., optimal under \(\PPtest\).
  \end{enumerate}

  \vspace{-5pt}

  However, while the classifier \(g^*\) is still optimal under covariate shift, its approximation \(\hat{g}\) might be arbitrarily bad. The reason for that is that we cannot expect \(\hat{g}\) to approximate \(g^*\) well in the areas where we have few objects from the training set or do not have them at all. Thus, some special treatment of the covariate shift is required.

\subsection{Pointwise Risk and Its Estimation}
\label{sec:pointwise_risk}
  We consider a classification rule \(\hat{g}(X) = \hat{g}_{\Dtr}(X)\) constructed based on the dataset \(\Dtr\). Let us start from defining the pointwise risk of a prediction:
  
  \vspace{-24pt}

  \begin{EQA}[c]
    \riskp(\xv) = \PP(\hat{g}(X) \neq Y \mid X = \xv),
  \end{EQA}

  \vspace{-14pt}

  where \(\PP(\hat{g}(X) \neq Y \mid X = \xv) \equiv \PPtr(\hat{g}(X) \neq Y \mid X = \xv) \equiv \PPtest(\hat{g}(X) \neq Y \mid X = \xv)\) under the assumptions above. The value \(\riskp(\xv)\) is independent of the covariate distribution \(p_{\mathrm{test}}(X)\) and essentially allows to define a meaningful target of estimation, which is based solely on the quantities known for the training distribution.

  Let us note that the total risk value \(\riskp(\xv)\) admits the following decomposition:
  
  \vspace{-24pt}

  \begin{EQA}[c]
    \riskp(\xv) = \eriskp(\xv) + \risk^{*}(\xv),
  \end{EQA}

  \vspace{-14pt}

  where \(\risk^{*}(\xv) = \PP(g^*(X) \neq Y \mid X = \xv)\) is the Bayes risk and \(\eriskp(\xv) = \PP(\hat{g}(X) \neq Y \mid X = \xv) - \PP(g^*(X) \neq Y \mid X = \xv)\) is an excess risk.
  Here, $\risk^{*}(\xv)$ corresponds to aleatoric uncertainty as it completely depends on the data distribution. The excess risk $\eriskp(\xv)$ directly measures imperfectness of the model \(\hat{g}\) and, thus, can be seen as a measure of epistemic uncertainty.

  To proceed, we first assume that the classifier \(\hat{g}\) has the standard form:
  
  \vspace{-24pt}

  \begin{EQA}[c]
    \hat{g}(\xv) = 
    \begin{cases}
      1, &\hat{\eta}(\xv) \geq \frac{1}{2}, \\
      0, &\hat{\eta}(\xv) < \frac{1}{2},
    \end{cases}
    \label{hat_g_def}
  \end{EQA}

  \vspace{-14pt}

  where \(\hat{\eta}(\xv) = \hat{p}(Y = 1 \mid X = \xv)\) is an estimate of the conditional density \(\eta(\xv)\).

  For such an estimate, we can upper bound the excess risk via the following classical inequality~\cite{devroye2013probabilistic}:
  
  \vspace{-24pt}

  \begin{EQA}[c]
    \eriskp(\xv) = \PP(\hat{g}(X) \neq Y \mid X = \xv) - \PP(g^*(X) \neq Y \mid X = \xv)
    \leq 2|\hat{\eta}(\xv) - \eta(\xv)|.
  \label{eq:main_ineq}
  \end{EQA}

  \vspace{-14pt}

  It allows us to obtain an upper bound for the total risk:
  
  \vspace{-24pt}

  \begin{EQA}[c]
    \mathcal{R}(\xv) \leq \mathcal{L}(\xv) = \risk^{*}(\xv) + 2  |\hat{\eta}(\xv) - \eta(\xv)|,
  \label{eq:surrogate_loss_ood}
  \end{EQA}

  \vspace{-14pt}

  where \(\risk^{*}(\xv) = \min\{\eta(\xv), 1 - \eta(\xv)\}\) in the case of binary classification. While this upper bound still depends on the unknown quantity \(\eta(\xv)\), we will see in the next section that \(\mathcal{L}(\xv)\) allows for an efficient approximation under mild assumptions.

\subsection{Nonparametric Uncertainty Quantification}
\label{sec:nuq_main}
\subsubsection{Kernel Density Estimate and Its Asymptotic Distribution}
  To obtain an estimate of \(\mathcal{L}(\xv)\) and, consequently, bound the risk, we need to consider some particular type of estimator \(\hat{\eta}\). In this work, we choose the classical kernel-based Nadaraya-Watson estimator of the conditional label distribution as it allows for a simple description of its asymptotic properties.

  Let us denote by $K_h\colon\mathbb{R}^d \mapsto\mathbb{R}$ the multi-dimensional kernel function with bandwidth \(h\). Typically, we consider a  multi-dimensional Gaussian kernel, but other choices are also possible.

  The conditional probability estimate is expressed as ($y_i$ is either 0 or 1):

  \vspace{-14pt}

  \begin{equation}
    \hat{\eta}(\xv) =
    \frac{\sum_{i=1}^N \indicator[y_i = 1] \cdot  K_h(\xv - \xv_i)}{\sum_{i=1}^N K_h(\xv - \xv_i)}.
  \label{eq:class_conditional}
  \end{equation}
  
  \vspace{-5pt}

  The difference between $\hat{\eta}(\xv) - \eta(\xv)$ for properly chosen bandwidth \(h\) converges in distribution as follows (see, e.g.~\cite{powellreg}):
  
  \vspace{-14pt}

  \begin{equation}
    \hat{\eta}(\xv) - \eta(\xv) \rightarrow \mathcal{N}\biggl(0,\frac{\tilde{C}}{N} \frac{\sigma^2(\xv)}{p(\xv)}\biggr),
    \label{gaussian_asymptotic}
  \end{equation}
  
  \vspace{-5pt}
  
  where $\nsize$ is the number of data points in the training set, $p(\xv)$ is the marginal distribution of covariates (see details in SM, Section~\ref{appendix:hyperparams_kde}), and $\sigma^2(\xv)$ is the standard deviation of the data label at point $\xv$. For binary classification, $\sigma^2(\xv) = \eta(\xv)\bigl(1 - \eta(\xv)\bigr)$. The constant $\tilde{C}=\int [K_h(\vu)]^2 d\vu$, where $\uv$ is an integration variable, depends only on the choice of the kernel $K_h$ and could be computed in closed form for popular kernels. See in details in SM, Section~\ref{appendix:hyperparams_kernels}.
  
  Now, we are equipped with an estimate of the distribution for \(\hat{\eta}(\xv) - \eta(\xv)\). Let us denote by $\tau(\xv)$ the standard deviation of a Gaussian from the  equation~\eqref{gaussian_asymptotic}:
  
  \vspace{-24pt}

  \begin{EQA}[c]
    \tau^2(\xv) = \frac{\tilde{C}}{N} \frac{\sigma^2(\xv)}{p(\xv)}.
  \end{EQA}
  
  \vspace{-14pt}

  In the following sections, we first show how to use the obtained property for uncertainty estimation, and then, we show how it can be computed.

\subsubsection{Total, Aleatoric, and Epistemic Uncertainty and Their Estimates}
  In this work, we suggest a particular uncertainty quantification procedure inspired by the derivation above, which we call \textit{Nonparametric Uncertainty Quantification (NUQ)}. More specifically, we suggest to consider the following measure of the total uncertainty:
  
\vspace{-24pt}

\begin{EQA}[c]
    \mathbf{U}_t(\xv) = \min\bigl\{\eta(\xv), 1 - \eta(\xv)\bigr\} + 2 \sqrt{\frac{2}{\pi}} \tau(\xv).
  \label{eq:total_un_def}
  \end{EQA}
  
  \vspace{-14pt}

  This measure is obtained by considering an asymptotic approximation of the expected value of the total risk upper bound:
  
  \vspace{-24pt}

  \begin{EQA}[c]
    \EE_{\Dtr} \mathcal{L}(\xv) = \min\bigl\{\eta(\xv), 1 - \eta(\xv)\bigr\} + 2 \EE_{\Dtr} \bigl|\hat{\eta}(\xv) - \eta(\xv)\bigr|
  \end{EQA}
  
  \vspace{-14pt}

  in view of~\eqref{gaussian_asymptotic} and the fact that \(\EE |\xi| = \text{std}(\xi)\sqrt{\frac{2}{\pi}}\) for the zero-mean normal variable \(\xi\). The resulting estimate upper bounds the average error of estimation at the point \(\xv\) and thus indeed can be used as the measure of total uncertainty. 

  We also can write the corresponding measures of aleatoric and epistemic uncertainties:
  
  \vspace{-24pt}

  \begin{EQA}[c]
    \mathbf{U}_a(\xv) = \min\bigl\{\eta(\xv), 1 - \eta(\xv)\bigr\}, \quad
    \mathbf{U}_e(\xv) = 2 \sqrt{\frac{2}{\pi}} \tau(\xv). \quad
  \label{eq:epistemic_un_def}
  \end{EQA}
  
  \vspace{-14pt}

  Finally, the data-driven uncertainty estimates \(\hat{\mathbf{U}}_a(\xv) , \hat{\textbf{U}}_e(\xv)\) and \(\hat{\textbf{U}}_t(\xv)\) can be obtained via plug-in using estimates \(\hat{\eta}(\xv)\), \(\hat{\sigma}(\xv)\), \(\hat{p}(\xv)\) and, consequently, \(\hat{\tau}^2(\xv) = \frac{1}{N} \frac{\hat{\sigma}^2(\xv)}{\hat{p}(\xv)} \tilde{C}\).

  We should note that despite being based on asymptotic approximation, the resulting formulas for uncertainties are very natural and make sense for finite sample size (for example, epistemic uncertainty is proportional to $\sigma^2(\xv) / p(\xv)$). We also should note that even known non-asymptotic decompositions for the risk of the NW-estimator still contain the same $\sigma^2(\xv) / p(\xv)$ component, see Proposition 1 in~\cite{chagny2016adaptive}. That means that the proposed estimate well captures the general uncertainty trend.

\paragraph{Efficient computation.}
  We note that computation of the nonparametric estimate~\eqref{eq:class_conditional} involves a sum over the whole available data. This could be intractable in practice when we are working with large datasets. However, the typical kernel $K_h$ quickly approaches zero with the increase of the norm of the argument: $\|\xv - \xv_i\|$. Thus, we can use an approximation of the kernel estimate: instead of the sum over all elements in the dataset, we consider the contribution of only several nearest neighbors (see SM, Section~\ref{appendix:hyperparams_nn} for details). It requires a fast algorithm for finding the nearest neighbors. For this purpose, we use the approach of~\cite{malkov2018efficient} based on Hierarchical Navigable Small World graphs (HNSW). It provides a fast, scalable, and easy-to-use solution to the computation of nearest neighbors.

\paragraph{Application to NN and Comparison with Existing Methods.}
  The resulting NUQ method can be applied to NN in the postprocessing fashion, i.e. one can fit it on top of the embeddings of the trained NN model. One may wonder about the difference between NUQ and other embedding based methods such as, for example, DUQ~\cite{van2020uncertainty} or DDU~\cite{mukhoti2021deterministic}. The difference is twofold: (i) NUQ is based on the rigorous derivation of total, aleatoric, and epistemic uncertainties, while other methods usually consider more heuristic treatment and do not allow for uncertainty disentanglement; (ii) NUQ considers a more flexible estimator of density in the embedding space that allows to achieve better quality in a small training data regime or for complicated data; see experimental evaluation in Section~\ref{sec:experiments}.

\section{Detailed Algorithmic Description of NUQ Approach}
  In this section, we provide a detailed algorithmic description of the NUQ approach.
  On the training stage, NUQ uses training data embeddings obtained from the pre-trained neural network and builds a Bayesian classifier based on conditional label probabilities estimated in a non-parametric way:
  \begin{EQA}[c]
    \hat{p}(Y = c \mid X = \xv) =
    \frac{\sum_{i=1}^N \indicator[y_i = c] \cdot K_h(\xv - \xv_i)}{\sum_{i=1}^N K_h(\xv - \xv_i)}, ~~ c = 1, \dots, C.
  \label{eq:class_conditional_multi}
  \end{EQA}
  The bandwidth $h$ is tuned via cross-validation optimizing the classification accuracy on the training data (see SM, Section~\ref{appendix:hyperparams_bandwidth}). 

  On the inference stage, the new object is passed through the neural network and a corresponding embedding is computed. Then, this embedding is used to compute the uncertainty estimates employing the estimated bandwidth $h$, see Algorithm~\ref{algo:nuq} that details the computation of all the necessary intermediate quantities as well as the resulting uncertainty estimates based on the embeddings $\xv$ provided by the neural network. Algorithm~\ref{algo:nuq} also takes into account the usage of nearest neighbours to speed up the computation of kernel-based estimates. Finally, the computation of an estimate of the embeddings density $\hat{p}(\xv)$ can be done either via the kernel density estimate (KDE) or via the Gaussian Mixture Model (GMM). We study the relative benefits of these approaches in Section~\ref{sec:ablation}.

  \begin{algorithm}[t]
    \KwIn{Training set $\{(\xv_i, y_i)\}_{i=1}^N$,  inference point $\xv$, bandwidth $h$}
    \KwOut{Prediction $\hat{g}(\xv)$ and uncertainty estimate $\hat{\mathbf{U}}_t(\xv)$}
    \BlankLine
    
    $\{\xv_{i_k}\}_{k=1}^K \gets K$ nearest neighbors of $\xv$ among $\{\xv_i\}_{i=1}^N$
    
    $ \hat{p}(Y = c \mid X = \xv) \gets \frac{\sum_{k=1}^K K_h(\xv_{i_k} - \xv)\indicator[y_{i_k}=c]}{\sum_{k=1}^K K_h(\xv_{i_k} - \xv)}$
    
    $\hat{\sigma}_c^2(\xv) = \hat{p}(Y = c \mid X = \xv) \bigl(1 - \hat{p}(Y = c \mid X = \xv)\bigr)$
    
    $\hat{g}(\xv) \gets \underset{c}{\operatorname{argmax}} \;\; \hat{p}(Y = c \mid X = \xv)$
    
    $\hat{p}(\xv) \gets \text{either KDE: } \frac{1}{Nh^d} \sum_{k=1}^K K_h(\xv_{i_k} - \xv) \text{ or GMM}$ 
    
    $\hat{\tau}^2(\xv) \gets \frac{1}{N} \frac{\underset{c}{\max} \; \hat{\sigma}_c^2(\xv)}{\hat{p}(\xv)} \tilde{C}, \text{where } \tilde{C}=\int [K_h(\vu)]^2 d\vu$ \;
    
    
    $\hat{\mathbf{U}}_t(\xv) \gets \underset{c}{\min} \bigl\{1 - \hat{p}(Y = c \mid X = \xv)\bigr\} + 2 \sqrt{\frac{2}{\pi}} \hat{\tau}(\xv)$
    \caption{NUQ inference algorithm.}
    \label{algo:nuq}
  \end{algorithm}

\section{Consistency of NUQ-based Classification with a Reject Option}
\label{sec:consistency}
  Above, we obtained uncertainty estimates that characterize the classical risk of a prediction. However, they are also helpful to solve the formal problem of classification with the reject option. In this problem, for any input \(\xv\) we can choose either we perform prediction or reject it. Following~\cite{chow_optimum_1970}, we assume that in the case of prediction we pay a binary price depending whether the prediction was correct or not, while in the case of the rejection we pay the constant price \(\lambda \in (0, 1)\). For this task, the risk function is
  \begin{align*}
    \risk_{\lambda}(\xv) = \risk(\xv) \indicator\{\alpha(\xv) = 0\} + \lambda \indicator\{\alpha(\xv) = 1\},
  \end{align*}
  where \(\alpha(\xv)\) is an indicator of the rejection.

  The minimizer of \(\risk_{\lambda}(\xv)\) is given by the optimal Bayes classifier \(g^*(\xv)\) and the abstention function
  \begin{align*}
    \alpha^*(\xv) =
    \begin{cases}
      0, & \risk^*(\xv) \leq \lambda, \\
      1, & \risk^*(\xv) > \lambda.
    \end{cases}
  \end{align*}
  To approximate \(\alpha^*(\xv)\), we utilize hypothesis testing:
  \begin{align*}
    H_0\colon \risk^*(\xv) > \lambda \text{ vs. } H_1\colon \risk^*(\xv) \leqslant \lambda.
  \end{align*}
  We choose the confidence level \(\beta > 0\) and consider the statistic
  $  \hat{\mathbf{U}}_{\beta}(\xv) = \min\bigl\{\hat{\eta}(\xv), 1 - \hat{\eta}(\xv)\bigr\} + z_{1 - \beta} \hat{\tau}(\xv),
  $
  where \(z_{1 - \beta}\) is the \(1 - \beta\) quantile of the standard normal distribution. The statistic \(\hat{\mathbf{U}}_{\beta}(\xv)\) combines the plug-in estimate of the Bayes risk \(\min\bigl\{\hat{\eta}(\xv), 1 - \hat{\eta}(\xv)\bigr\}\) and the term \(z_{1 - \beta} \hat{\tau}(\xv)\) accounting for the confidence of estimation. The resulting abstention rule is given by:
  
  \vspace{-24pt}

  \begin{EQA}[c]
    \hat{\alpha}_{\beta}(\xv) =
    \begin{cases}
      0, & \hat{\mathbf{U}}_{\beta}(\xv) \leq \lambda, \\
      1, & \hat{\mathbf{U}}_{\beta}(\xv) > \lambda.
    \end{cases}
  \end{EQA}
  
  \vspace{-14pt}

  Finally, if we consider the pair of kernel classifier \(\hat{g}(\xv)\) and \(\hat{\alpha}_{\beta}(\xv)\) then, we can prove the consistency result for the corresponding risk \(\hat{\risk}_\lambda(\xv)\) under standard assumptions on nonparametric densities, see SM, Section~\ref{sec:consistency_proof} for details.
  \begin{theorem}
  \label{theorem: asymptotic consistency}
    Suppose that assumptions~\ref{assumption: regression funciton}-\ref{assumption: kernel} hold and $p(\xv) > 0$, the bandwidth $h \to 0$ and $\nsize h^d \to \infty$ as $\nsize$ tends to infinity. Then, for any $\beta < 1/2$:
    \begin{align*}
      \EE_\dataset \hat{\risk}_\lambda(\xv) - \risk^*_\lambda(\xv) \underset{\nsize \to \infty}{\longrightarrow} 0.
    \end{align*}
    %
  \end{theorem}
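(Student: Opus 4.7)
The key quantity is the pointwise difference $\EE_\Dtr \hat{\risk}_\lambda(\xv) - \risk^*_\lambda(\xv)$, where $\hat{\risk}_\lambda$ corresponds to the pair $(\hat{g}, \hat{\alpha}_\beta)$ and $\risk^*_\lambda$ to the optimal pair $(g^*, \alpha^*)$. My strategy is to decompose this difference into the four cases of $(\alpha^*(\xv), \hat{\alpha}_\beta(\xv)) \in \{0,1\}^2$ and show that each contribution vanishes as $\nsize \to \infty$. The two engines driving the proof are (i) the classical pointwise consistency of the Nadaraya--Watson regression estimator and the kernel density estimator under the stated bandwidth regime $h \to 0$, $\nsize h^d \to \infty$, and (ii) the surrogate inequality~\eqref{eq:main_ineq} bounding the excess risk $\eriskp(\xv)$ by $2|\hat{\eta}(\xv) - \eta(\xv)|$.

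\textbf{Step 1: convergence of the plug-in confidence bound.} Under Assumptions~\ref{assumption: regression funciton}--\ref{assumption: kernel} together with $p(\xv) > 0$, standard nonparametric theory gives $\hat{\eta}(\xv) \to \eta(\xv)$ and $\hat{p}(\xv) \to p(\xv)$ in probability, whence also $\hat{\sigma}^2(\xv) = \hat{\eta}(\xv)(1-\hat{\eta}(\xv)) \to \sigma^2(\xv)$. For a kernel of the form $K_h(\vu) = h^{-d} K(\vu/h)$ one has $\tilde{C} = \int K_h^2(\vu) d\vu = h^{-d} \int K^2$, so the confidence radius $\hat{\tau}^2(\xv) = \tilde{C} \hat{\sigma}^2(\xv) / (\nsize \hat{p}(\xv))$ is of order $O_P\bigl(1/(\nsize h^d)\bigr)$ and vanishes by assumption. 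Combining these, $\hat{\mathbf{U}}_\beta(\xv) = \min\{\hat{\eta}(\xv), 1 - \hat{\eta}(\xv)\} + z_{1-\beta}\hat{\tau}(\xv) \to \min\{\eta(\xv), 1 - \eta(\xv)\} = \risk^*(\xv)$ in probability; the restriction $\beta < 1/2$ is used only to keep $z_{1-\beta} > 0$ so that $\hat{\mathbf{U}}_\beta$ is an honest (asymptotic) upper bound on the Bayes risk.

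\textbf{Step 2: case analysis.} For $\xv$ with $\risk^*(\xv) \neq \lambda$, the continuous-mapping argument together with Step~1 yields $\PP_\Dtr\bigl(\hat{\alpha}_\beta(\xv) \neq \alpha^*(\xv)\bigr) \to 0$. Expanding $\hat{\risk}_\lambda(\xv) - \risk^*_\lambda(\xv)$ across the four cases, the two disagreement cases contribute losses bounded by $1$ (all quantities lie in $[0,1]$) multiplied by these vanishing probabilities, and dominated convergence removes them from the expectation. Of the two agreement cases, ``both abstain'' contributes exactly $0$, while ``neither abstains'' contributes $\riskp(\xv) - \risk^*(\xv) = \eriskp(\xv)$, whose expectation is bounded via~\eqref{eq:main_ineq} by $2\,\EE_\Dtr |\hat{\eta}(\xv) - \eta(\xv)|$, which tends to $0$ by dominated convergence applied to the bounded integrand $|\hat{\eta} - \eta| \leq 1$.

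\textbf{Main obstacle.} The genuinely delicate point is the boundary event $\risk^*(\xv) = \lambda$: there $\hat{\mathbf{U}}_\beta(\xv) \to \lambda$ in probability, yet the decision indicator $\indicator\{\hat{\mathbf{U}}_\beta(\xv) > \lambda\}$ need not converge. I would handle it by noting that at the boundary both the optimal and the plug-in risks equal $\lambda$ in the limit regardless of the abstention choice, because $\riskp(\xv) \to \risk^*(\xv) = \lambda$ in probability, so the pointwise difference still tends to zero in expectation. A secondary, routine issue is upgrading the above in-probability statements to convergence of expectations; this is immediate from the uniform $[0,1]$-boundedness of all losses via dominated convergence, which ties the whole argument together.
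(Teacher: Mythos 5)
Your argument is correct in outline, but it reaches the conclusion by a genuinely different route than the paper. The paper does not argue by qualitative consistency plus dominated convergence: it first proves a fully non-asymptotic excess-risk bound (Theorem~\ref{theorem: main theorem}), starting from essentially the same top-level decomposition you use --- the excess risk on the non-abstention event, controlled by $2\max_c|\eta_c(\xv)-\hat\eta_c(\xv)|$, plus the penalty $|\lambda-\risk^*(\xv)|\,\PP_\dataset(\alpha(\xv)\neq\alpha^*(\xv))$ --- but then controls both terms quantitatively via Bernstein's inequality applied to the kernel sums, with the bias handled by explicit Taylor expansions of $\eta_c$ and $p$ under Assumptions~\ref{assumption: regression funciton}--\ref{assumption: kernel} (Lemma~\ref{lemma: Bernstein bounding}); Theorem~\ref{theorem: asymptotic consistency} then falls out by letting $h\to 0$, $\nsize h^d\to\infty$. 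Your softer argument is shorter and more elementary, but it outsources to ``standard nonparametric theory'' exactly the content the paper proves from scratch, and it yields no rates --- which the paper explicitly flags as the point of having the finite-sample bound. Two small things to tighten if you keep your route: (i) $\hat\eta(\xv)$ is a ratio that is undefined on the event $\sum_i K_h(\xv-\xv_i)=0$; the paper's convention is to abstain when $\hat p(\xv)=0$, and you should either adopt that or note that the event has vanishing probability, so that the bound $\eriskp(\xv)\leq 2|\hat\eta(\xv)-\eta(\xv)|$ is applied only where $\hat\eta$ is defined; (ii) your handling of the boundary case $\risk^*(\xv)=\lambda$ is right for the right reason --- there the difference is at most the excess risk whatever $\hat\alpha_\beta$ does --- but note that in your asymptotic argument the condition $\beta<1/2$ plays no role at all (since $\hat\tau(\xv)\to 0$, any fixed $\beta$ gives $\hat{\mathbf{U}}_\beta(\xv)\to\risk^*(\xv)$), whereas in the paper's finite-sample bound the sign of $z_{1-\beta}$ actually enters the exponents.
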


  This result shows the validity of the NUQ-based abstention procedure. Interesting future work is to obtain a precise convergence rate for the method. It should be possible based on the finite sample bounds provided in SM, Section~\ref{sec:consistency_proof}. We also experimentally illustrate the benefits of the proposed estimator in SM, Section~\ref{sec:reject_example}.


\section{Related Work}
\label{sec:related_work}
  The notion of uncertainty naturally appears in Bayesian statistics~\citep{gelman2013bayesian}, and, thus, Bayesian methods are often used for uncertainty quantification. The exact Bayesian inference is computationally intractable, and approximations are used. Two popular ideas are the Markov Chain Monte Carlo sampling (MCMC; \cite{neal2011mcmc}) and the Variational Inference (VI; \cite{blei2017variational}). MCMC has theoretical guarantees to be asymptotically unbiased, but it has a high computational cost. VI-based approaches~\citep{rezende2015variational,dinh2016density,papamakarios2019normalizing,kobyzev2020normalizing} are more scalable, they are biased and at least double the number of parameters. That is why some alternatives are considered, such as the Bayesian treatment of Monte-Carlo dropout~\citep{Gal2016DropoutAA}. 

  Deep Ensemble~\citep{Lakshminarayanan2017SimpleAS} is usually considered as a quite strong yet expensive approach. A series of papers developed ways of approximating the distribution obtained using an ensemble of models by a single probabilistic model~\citep{malinin2018predictive,malinin2019ensemble, sensoy2018evidential}. These methods require changing the training procedure and need more parameters to train.

  Recently, a series of uncertainty quantification approaches for a single deterministic neural network model was proposed. In DUQ~\citep{van2020uncertainty}, an RBF layer is added to the network with a custom training procedure to adjust the centroid points (in the embedding space). The downside of the method is its inability to distinguish aleatoric and epistemic uncertainty. Another approach to capture epistemic uncertainty was proposed in DDU~\citep{mukhoti2021deterministic}. It uses a Gaussian mixture model to estimate the density of objects in the embedding space of a trained neural network. The density values are then used as a confidence measure. SNGP~\citep{liu2020simple} and DUE~\citep{van2021improving} are similar but use a Gaussian process as the final layer, requiring estimating covariance with the use of inducing points or RFF expansion. 

  There is a wide range of papers discussing classification with the reject option. Most likely, the problem was firstly studied by Chow in~\cite{chow_optimum_1957, chow_optimum_1970}. Moreover, in ~\cite{chow_optimum_1970}, he introduced a risk function used across this paper. Herbei et al. \cite{herbei_classification_2006} studied an optimal procedure for this risk and provided a plug-in rule. In the following works, empirical risk minimization among a class of hypotheses (see~\cite{bartlett_classification_2008, cortes_learning_2016}) or other types of risk (see~\cite{denis_consistency_2015, el-yaniv_foundations_2010, lei_classification_2014}) were investigated. Besides, a number of practical works have been presented, see, for example,~\cite{grandvalet_support_2009, geifman_selectivenet_2019, nadeem_accuracy-rejection_2009}.


\section{Experiments}
\label{sec:experiments}
  We conduct a series of experiments on image and text classification datasets. In each experiment, (1) we train a parametric model -- a neural network, which we call a \textit{base model}; (2) fit NUQ on the training data using the embeddings obtained from the base model. We use logits as extracted features, if not explicitly stated otherwise. However, other options are also possible; see Section~\ref{sec:ablation}.

  Following SNGP and DDU, we use spectral normalization to train the base model to achieve bi-Lipschitz property and avoid the feature collapse and non-smoothness. However, NUQ works sufficiently good even without this regularization (see Table~\ref{tab:spectral} in Section~\ref{sec:ablation}). In all experiments on OOD detection, we use \(\hat{\textbf{U}}_e(\vx)\) as a measure of uncertainty. An additional illustrative experiment on detecting actual aleatoric and epistemic uncertainties is presented in SM, Section~\ref{appendix:toy_example_actual_uncertainties}.

\subsection{How NUQ Affects Model Predictions?} 
  One may ask whether the nonparametric classification method used in NUQ, trained on some embedding from the base model, has any relation to the original neural network. To reassure the reader, we provide an argument that it well approximates the predictions of the base model and NUQ-based uncertainty estimates can be used for the base model as well. Specifically, we compute the agreement 
  between predictions obtained from the Bayes classifier based on kernel estimate (i.e. the one used in NUQ) and base models' predictions. This metric formally can be defined as
  $
    \text{agreement}(\hat{p}, p) = \frac{1}{n} \sum_{i=1}^n I \bigl[\arg\max_j \hat{p}(y=j \mid \vx_i) = \arg\max_j p(y=j \mid \vx_i)\bigl].
  $ 
  For CIFAR-100 (see experiments with this dataset in Section~\ref{sec:cifar100_exps}), this metric gives us the agreement of 0.975, which shows that the approach is accurate. Additionally, we computed the aleatoric uncertainty for all test objects and found the average percentile of the objects with disagreement is 94.94 $\pm$ 4.75. Thus, disagreement appears for high uncertainty points as expected.

\subsection{Image Classification}
  The main experiments with image classification are conducted on CIFAR-100~\citep{cifar} and ImageNet~\citep{imagenet}. However, several additional experiments with other image classification datasets such as MNIST and SVHN can be found in SM, Sections~\ref{appendix:rotated_mnist} and~\ref{appendix:mnist_svhn}.

  We compare NUQ with popular UE methods, which do not require significant modifications to model architectures and training procedures. More specifically, we consider Maximum probability (MaxProb) of softmax response of a NN, entropy of the predictive distribution, the Monte-Carlo (MC) dropout~\citep{Gal2016DropoutAA}, an ensemble of models trained with different random seeds (deep ensemble), the Test-Time Augmentation (TTA; \citep{lyzhov2020greedy}), DDU~\citep{mukhoti2021deterministic}, SNGP~\citep{liu2020simple}, DUQ~\citep{van2020uncertainty}, and an energy-based approach~\citep{liu2020energy}.

  For Monte-Carlo dropout, ensembles, and TTA, we first compute average predicted class probabilities and then compute their entropy (see the ablation study in SM, Section~\ref{appendix:choice_of_uncertainty_measure_for_ensemble}). More details can be found in SM, Section~\ref{sec:architechtures}. For deep ensembles, we fixed the number of models to 5. Additional experiments, where we changed the number of models, are presented in SM, Section~\ref{sec:ensemble:size}.

\subsubsection{CIFAR-100}
\label{sec:cifar100_exps}
  \begin{table*}[t]
    \centering
    \resizebox{\textwidth}{!}{
      \begin{tabular}{|c|c|c|c|c|c|c|c|c|c|c|}
        \hline
        OOD dataset & MaxProb* & Entropy* & Dropout & Ensemble & TTA  & Energy* & DUQ*      & SNGP* &  DDU*    & NUQ*   \\ \hline
        SVHN        & 79.7±1.3 & 81.1±1.6 & 77.6±2.5       & 82.9±0.9  & 81.6±1.2 & 62.0±1.7 & 88.7±6.3 & 86.2±7.4 & \underline{89.6±1.6}          & \textbf{89.7±1.6} \\ \hline
        LSUN        & 81.5±2.0 & 83.0±2.1 & 76.8±5.1       & 86.5±0.8            & 85.0±2.7 & 82.7±0.1 & 90.8±6.7 & 83.7±8.6 & \underline{92.1±0.6} & \textbf{92.3±0.6} \\ \hline
        Smooth     & 76.6±3.5 & 77.8±5.2 & 63.3±3.8       & 83.7±1.2  & 73.2±10.8 & 71.5±4.6 & 91.1±8.4 & 60.9±12.5 & \textbf{97.1±3.1}         & \underline{96.8±3.8}          \\ \hline
      \end{tabular}
    }
    \caption{OOD detection for CIFAR-100 in-distribution dataset with the ResNet-50 neural network. The top two results are shown in bold and underline correspondingly. Evaluation is done for three models trained with different seeds to estimate the standard deviation. Methods requiring a single pass over the data to compute uncertainty estimates are marked with $\protect\mbox{*}$.}
  \label{tab:cifar}
  \end{table*}

  In this experiment, we test UE methods on the out-of-distribution detection task. We treat the OOD detection as binary classification (OOD/not-OOD) using only the uncertainty score. Following the setup from the recent works~\citep{van2020uncertainty,van2021improving,Sastry2020DetectingOE}, we use SVHN, LSUN~\citep{lsun}, and Smooth~\citep{hein2019relu} as OOD datasets. The reported metric is ROC-AUC. 
  
  As a base model, we train ResNet-50 from scratch on the CIFAR-100 dataset. NUQ was applied to the features from the penultimate layer of the model, and the density estimate is given by GMM, as it provides the best results (see the results for other choices of hyperparameters in Section~\ref{sec:ablation}).

  The results are presented in Table~\ref{tab:cifar}. We can clearly see that NUQ and DDU show close results while outperforming the competitors with a significant margin.

\subsubsection{ImageNet}
  \begin{table}
    \centering
    \resizebox{\textwidth}{!}{
    \begin{tabular}{|l|c|c|c|c|c|c|c|c|c|}
      \hline
      OOD dataset & \multicolumn{1}{l|}{MaxProb*} & \multicolumn{1}{l|}{Entropy*} & \multicolumn{1}{l|}{TTA}  & \multicolumn{1}{l|}{Energy*} & \multicolumn{1}{l|}{Ensemble} & DDU* & \multicolumn{1}{l|}{DUQ*} & \multicolumn{1}{l|}{SNGP*} & \multicolumn{1}{l|}{NUQ*} \\
      \hline
      ImageNet-R  & 80.4 & 83.6 & 85.8 & 78.34 & 84.4 & 80.1 & 73.3 & 85.0 & \textbf{99.5} \\
      \hline
      ImageNet-O  & 28.2 & 29.1 & 30.5 & 60.0 & 51.9 & 74.1 & 71.4 & 75.8 & \textbf{82.4} \\
      \hline
    \end{tabular}
    }
    \caption{ROC-AUC score for ImageNet out-of-distribution detection tasks for different methods. Methods requiring a single pass over the data to compute uncertainty estimates are marked with $\protect\mbox{*}$.}
  \label{tab:imagenet}
  \end{table}
  
  \begin{figure*}[t]
    \footnotesize
    \centering
    \begin{minipage}[h]{0.32\linewidth}
    \center{\includegraphics[width=4.2cm]{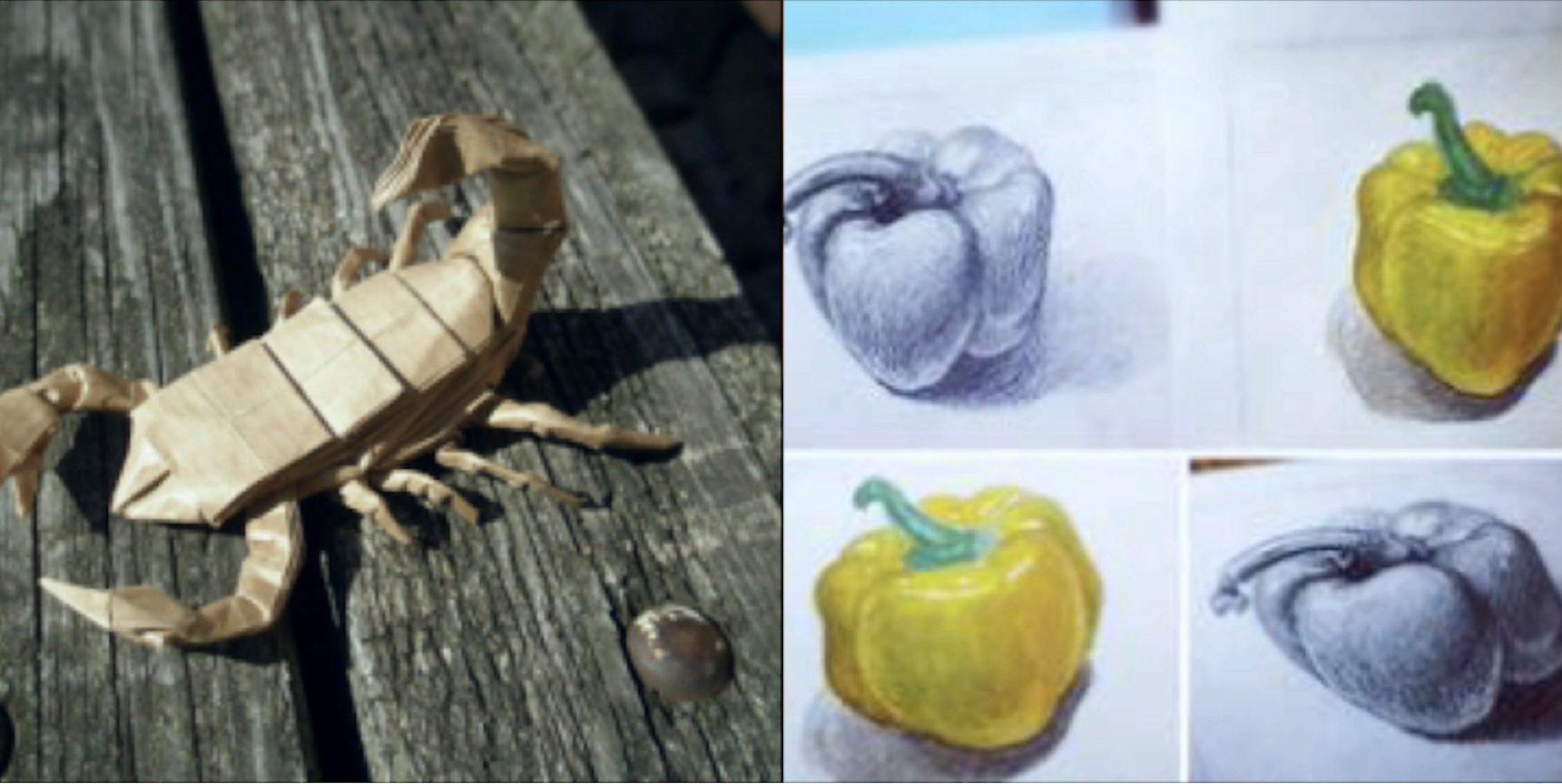} \\
    a) Low uncertainty 
    }
    \end{minipage}~
    \begin{minipage}[h]{0.32\linewidth}
    \center{\includegraphics[width=4.2cm]{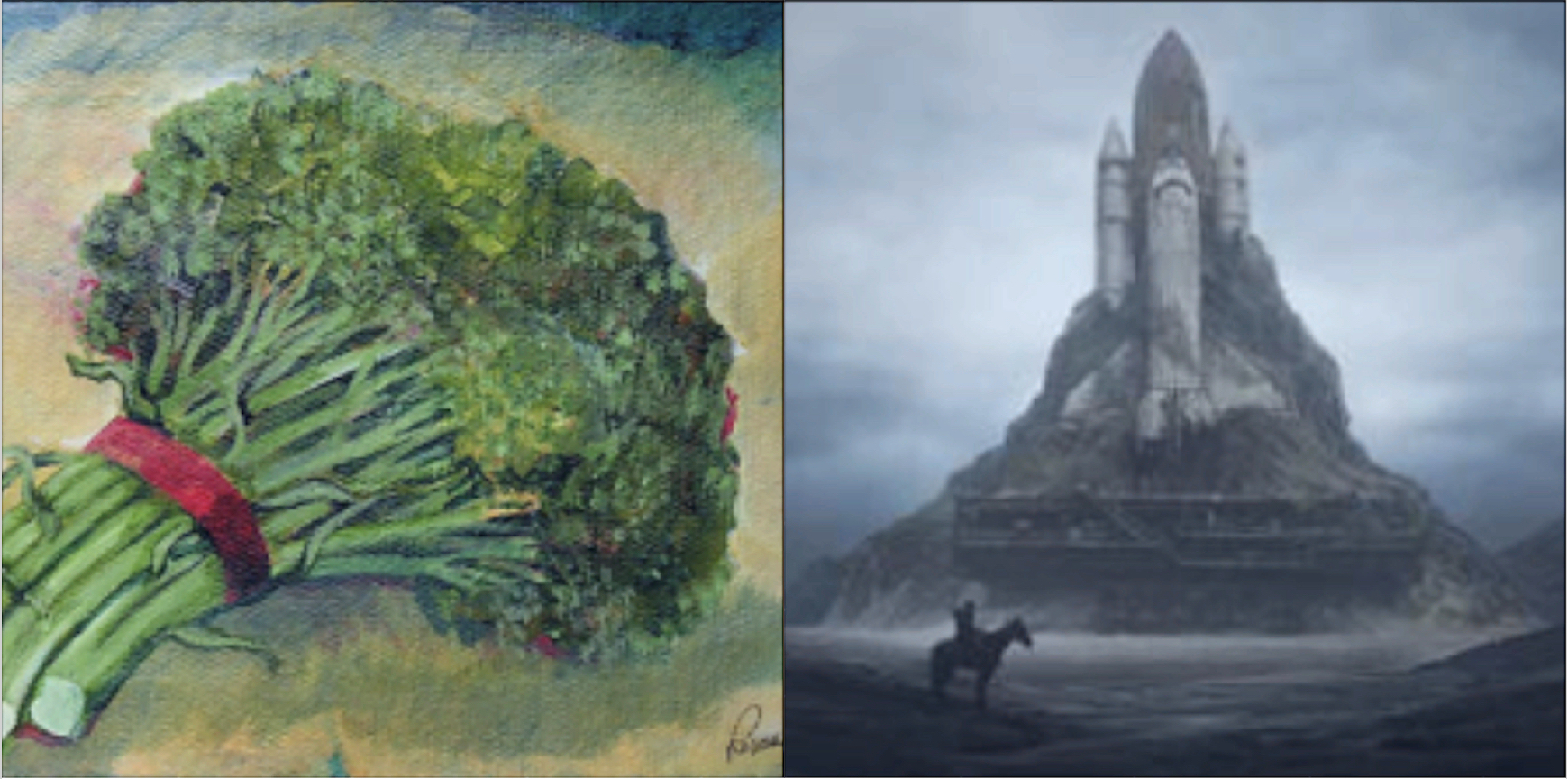} \\
    b) Medium uncertainty 
    }
    \end{minipage}~
    \begin{minipage}[h]{0.32\linewidth}
    \center{\includegraphics[width=4.2cm]{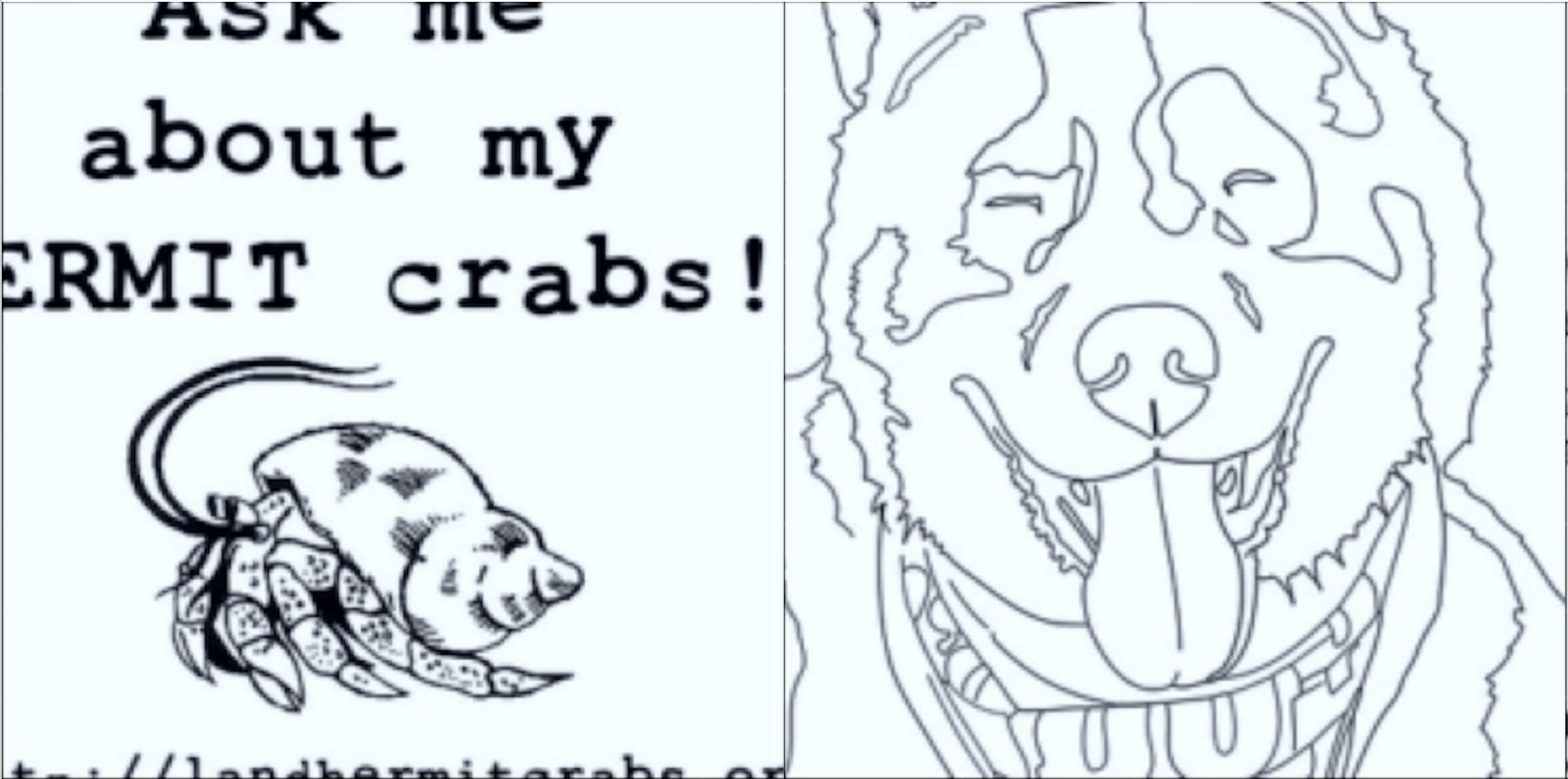} \\
    c) High uncertainty 
    }
    \end{minipage}
    \caption{ Typical OOD images from ImageNet-R ordered from low uncertainty (bottom 10\%) (a) to high uncertainty (top 90\%) (c) by the NUQ \(\hat{\textbf{U}}_e(\vx)\) scores. We can clearly see that low-uncertainty images resemble real-world objects presented in vanilla ImageNet.}
    \label{fig:imagenet_uncertainty_panel}
    \vspace{-0.3cm}
  \end{figure*}

  To demonstrate the applicability of NUQ to large-scale data, we evaluate it in the OOD detection task on ImageNet~\citep{imagenet}. As OOD data, we use the ImageNet-O~\citep{imagenet_o} and ImageNet-R~\citep{imagenet_r} datasets. ImageNet-O consists of images from classes not found in the standard ImageNet dataset. ImageNet-R contains different artistic renditions of ImageNet classes.
  
  In contrast to the previous experiment, we found that for NUQ, it is more beneficial to use KDE as a density estimator $p(\xv)$, rather than GMM. Importantly, it took us approximately 5 minutes to receive uncertainties over all ImageNet datasets with a CPU, i.e. our NUQ implementation is readily applicable to large-scale data (see more details in SM, Section~\ref{appendix:computation_costs}).
  
  The results are summarized in Table~\ref{tab:imagenet}. We see that for ImageNet-O, many methods show good OOD detection quality, but NUQ achieves an almost perfect result. For ImageNet-R, simple approaches completely fail while DDU, SNGP, and DUQ perform well, and NUQ shows the best result with a large margin.

  Note that unlike in the CIFAR-100 experiment, for ImageNet, NUQ significantly outperforms DDU. We conjecture that GMM struggles to approximate density here as an embedding structure is much more complicated for ImageNet compared to CIFAR-100 (see some visualizations in Section~\ref{sec:imagenet_performance}). NUQ is beneficial in this case as KDE is much more flexible than GMM and provides a better result.
  
  Additionally, we looked at some typical samples from ImageNet-R with low, moderate, and high levels of uncertainty as assigned by NUQ, see Figure~\ref{fig:imagenet_uncertainty_panel}. Here, low, medium, and high uncertainties correspond to 10, 50, and 90\% quantiles of the epistemic uncertainty distribution for images from the ImageNet-R dataset. We observe that uncertainty values correspond well to intuitive degree of image complexity compared to the original ImageNet data. Some additional ImageNet experiments are presented in SM, Section~\ref{appendix:additional_imagenet_experiments}.

\subsection{Text Classification}
  Experiments on textual data are performed in low-resource settings, where we train models on small subsamples of original datasets. This regime can be challenging for many UE methods, while NUQ is naturally adapted to it. We compare NUQ to the best performing methods on image classification: DDU and deep ensemble, and to the standard baselines: MC dropout and MaxProb.

  The methods are evaluated on two tasks: OOD detection and classification with a reject option.
  Classification with rejection experiments are conducted on SST-2~\cite{socher-etal-2013-recursive}, MRPC~\cite{dolan-brockett-2005-automatically}, and CoLA~\cite{warstadt-etal-2019-neural}. The evaluation metric is RCC-AUC~\cite{el2010foundations}. Experiments with OOD detection are conducted on ROSTD~\cite{GangalAEG20} and CLINC~\cite{larson-etal-2019-evaluation}, which originally contain instances marked as OOD, and a benchmark composed from SST-2 (in-domain), 20~News Groups~\cite{Lang95}, TREC-10~\cite{li-roth-2002-learning, hovy-etal-2001-toward}, WMT-16~\cite{bojar-etal-2016-findings}, Amazon~\cite{McAuleyL13} (sports and outdoors categories), MNLI~\cite{williams-etal-2018-broad}, and RTE~\cite{rte2005,rte2006,rte2007,rte2009}, where SST-2 is used as an in-domain dataset, while the rest as OOD datasets. The final score is averaged across all OOD datasets. Evaluation metric is ROC-AUC as in the image classification task. The dataset statistics are presented in SM, Section~\ref{sec:textual_hyperparams}.

  We use a pre-trained ELECTRA model with 110 million parameters. The features for NUQ and DDU are taken from the penultimate classification layer. The details of the model, hyperparameter optimization, and UE methods are presented in SM, Section~\ref{sec:textual_hyperparams}.

\subsubsection{Classification with a Reject Option}
  \begin{figure*}[t]
    \footnotesize
    \centering
    \begin{minipage}[h]{0.32\linewidth}
    \center{\includegraphics[width=4.cm]{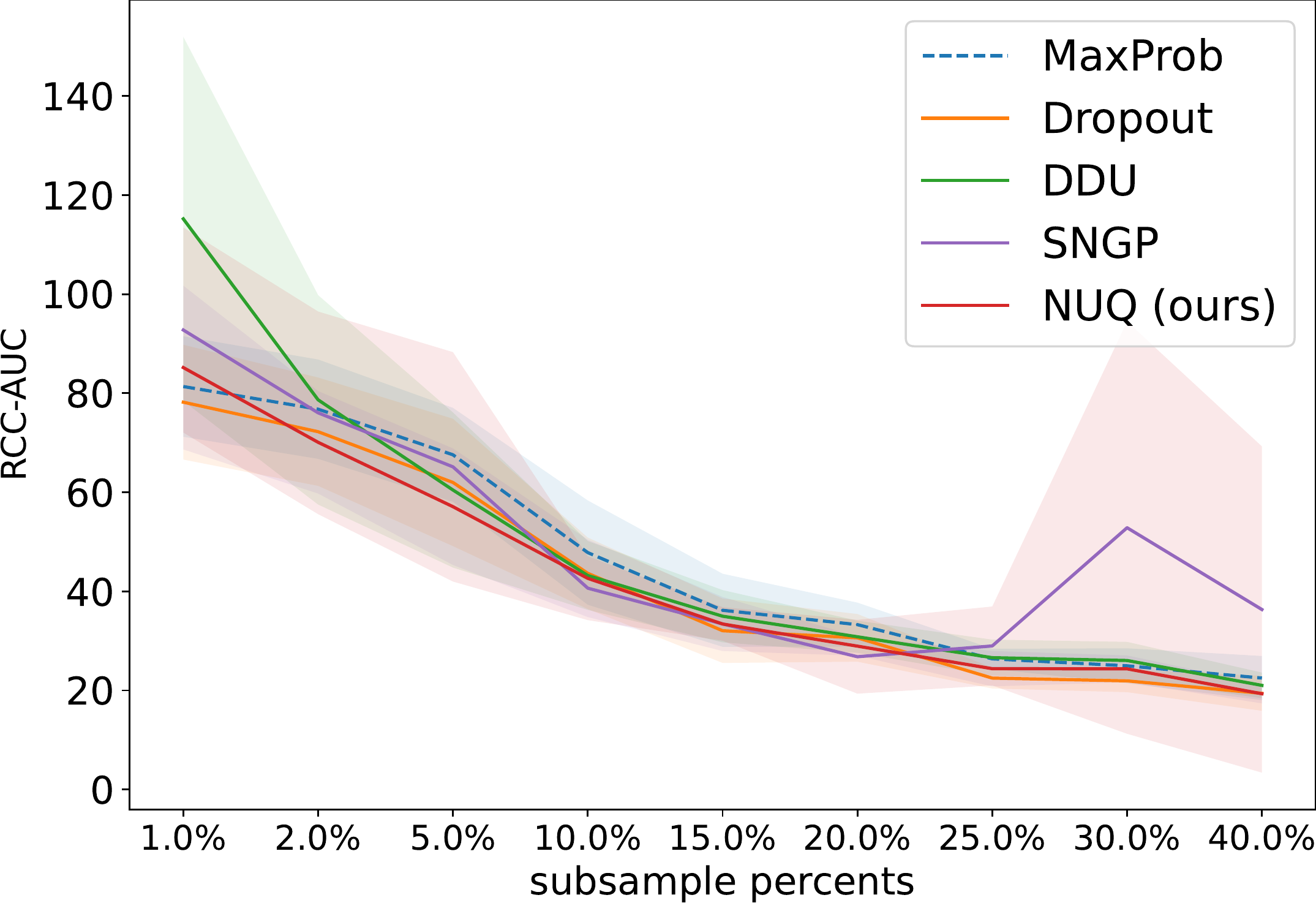} \\
    a) MRPC
    }
    \end{minipage}
    \begin{minipage}[h]{0.32\linewidth}
    \center{\includegraphics[width=4.cm]{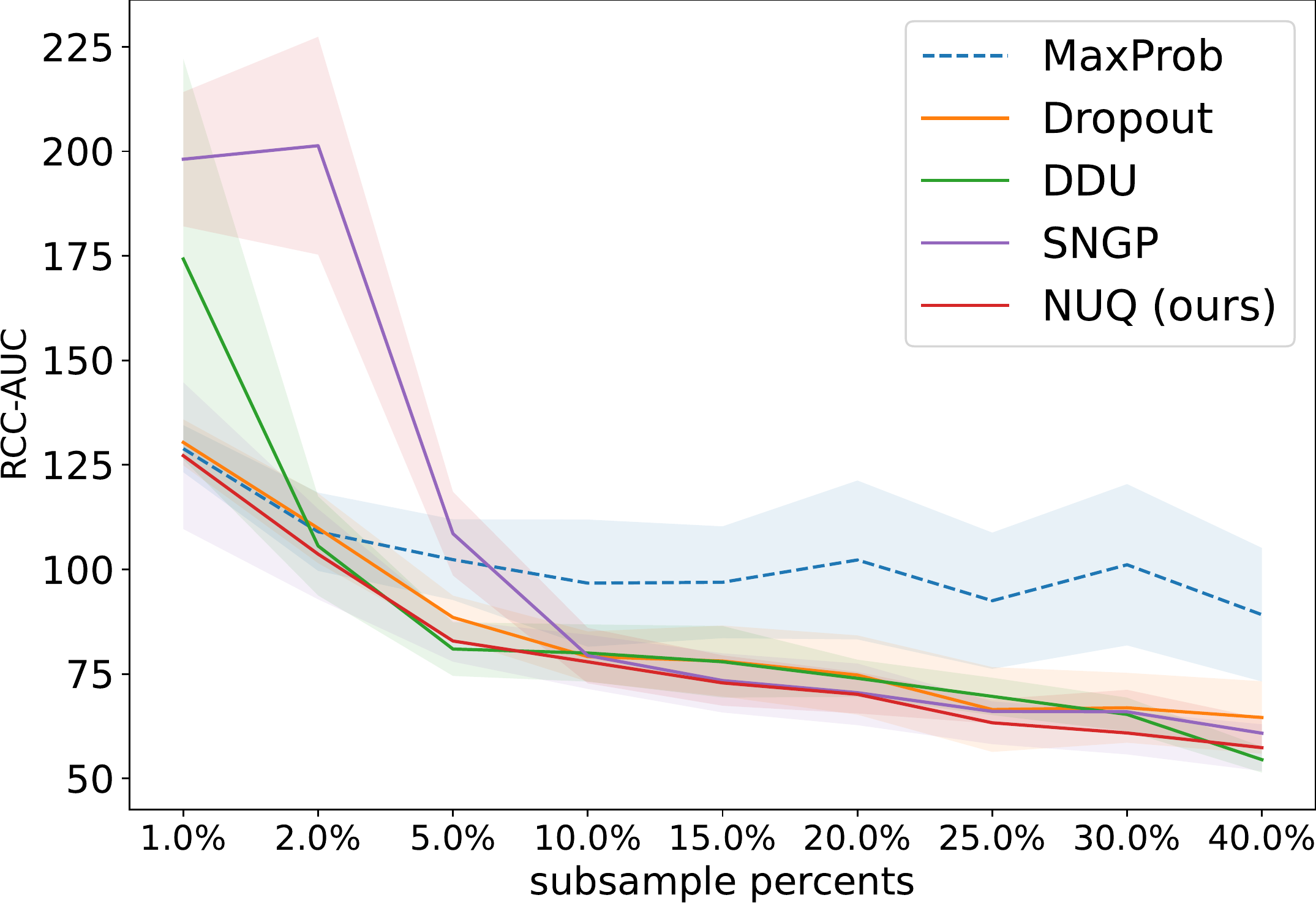} \\
    b) CoLA 
    }
    \end{minipage}
    \begin{minipage}[h]{0.32\linewidth}
    \center{\includegraphics[width=4.cm]{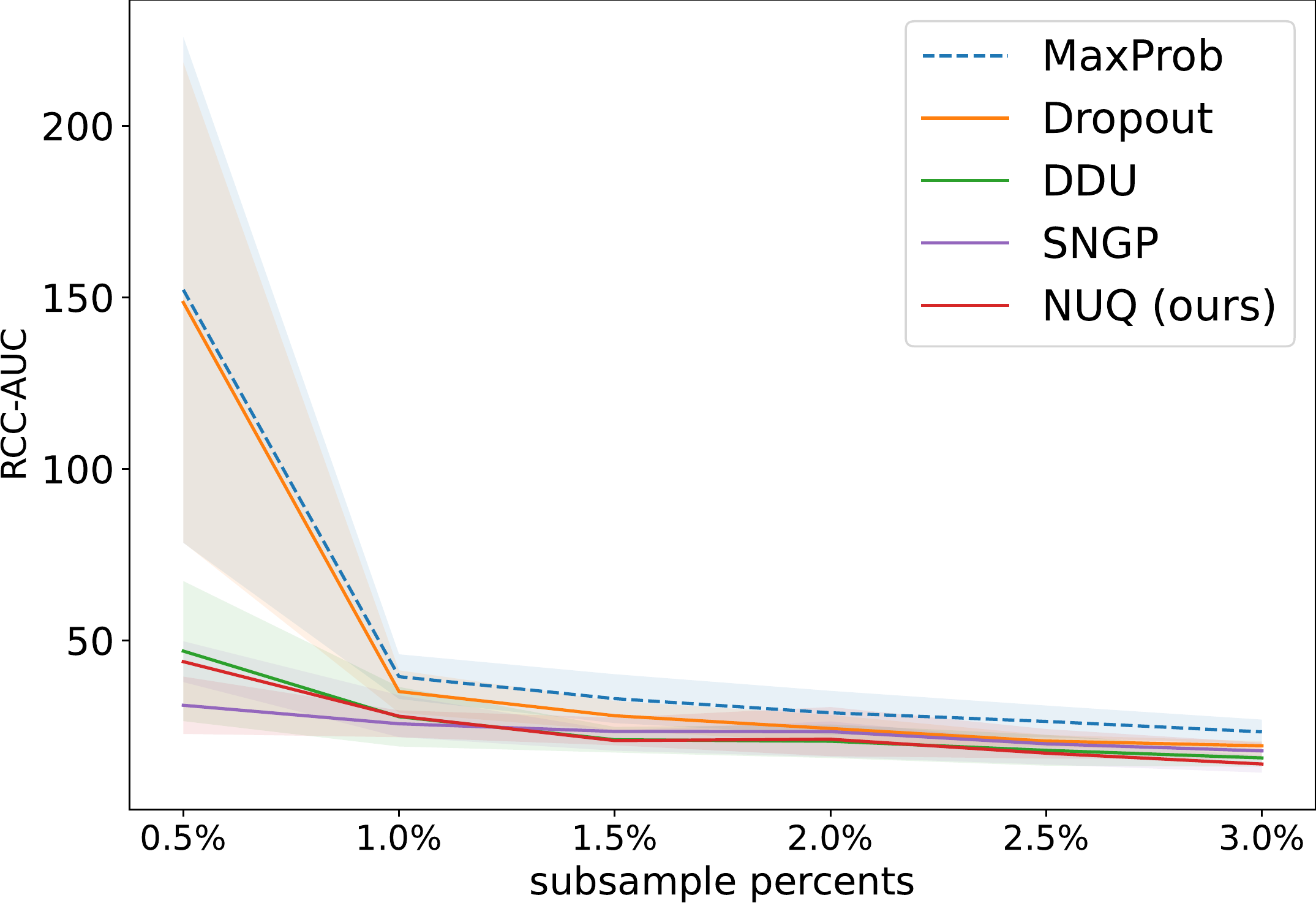} \\
    c) SST-2
    }
    \end{minipage}
    \caption{RCC-AUC$\downarrow$ of classification with rejection depending on  fraction of unlocked training data.}
    \label{fig:miscl_subsamples}
  \end{figure*}

  Figure~\ref{fig:miscl_subsamples} presents the results for classification with rejection. For the MRPC dataset, we can note that NUQ, SNGP, and MC dropout show similar results to the MaxProb baseline. DDU stands out, demonstrating substantially worse performance in settings with small amount of training data.

  A similar dynamics for DDU can be noted on the CoLA dataset, where it works substantially worse than the baseline when only 1\% of the training data is available. SNGP in this experiment manages to reach other methods only when 10\% of training data is unlocked. On contrary, NUQ is always on par or better than the baseline outperforming all other computationally efficient methods and has similar performance as computationally expensive MC dropout. 

  On SST-2, NUQ works similar to SNGP and DDU, substantially outperforming the MaxProb baseline, and MC dropout in the extremely low resource setting. Starting from 1.5\%, all methods work similarly with small advantage over the baseline.

  Overall, we can conclude that in the settings with a small amount of training data, NUQ can be the best choice for estimating uncertainty for the classification with a reject option: it works similar to other methods when there is much training data and does not deteriorate in the low-resource regime, demonstrating much better results than others. SNGP is able to approach NUQ on SST-2. However, its performance is not stable, which is illustrated by poor results on CoLA. DDU sometimes fails to outperform the baseline with small amount of training data, which might be due to its reliance on the assumption that training data has a Gaussian distribution, which does not hold in this setting.

\subsubsection{Out of Distribution Detection} 
  \begin{figure*}[t]
    \footnotesize
    \centering
    \begin{minipage}[h]{0.32\linewidth}
    \center{\includegraphics[width=4.cm]{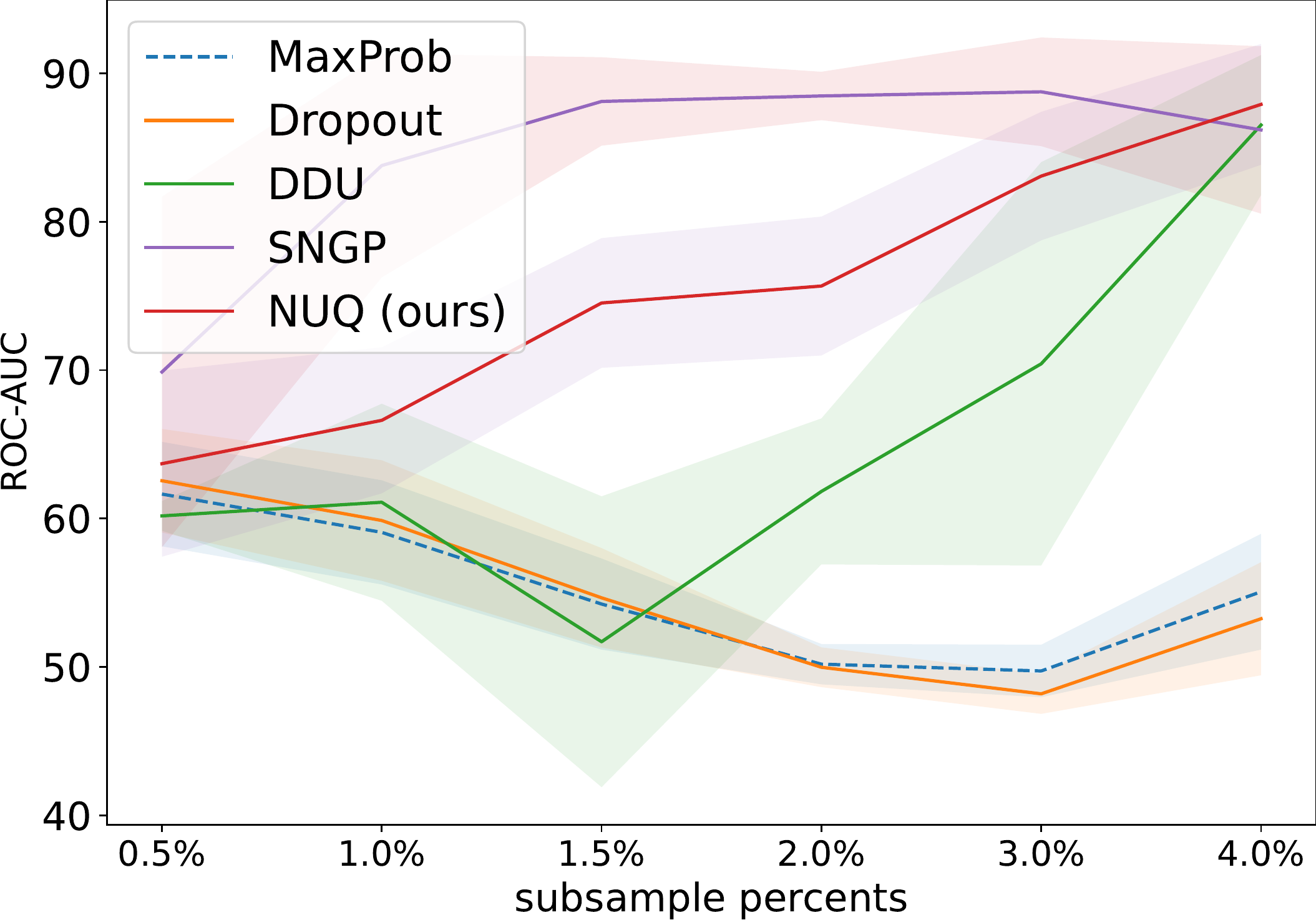}\\
    a) ROSTD
    }
    
    \end{minipage}
     \begin{minipage}[h]{0.32\linewidth}
    \center{\includegraphics[width=4.cm]{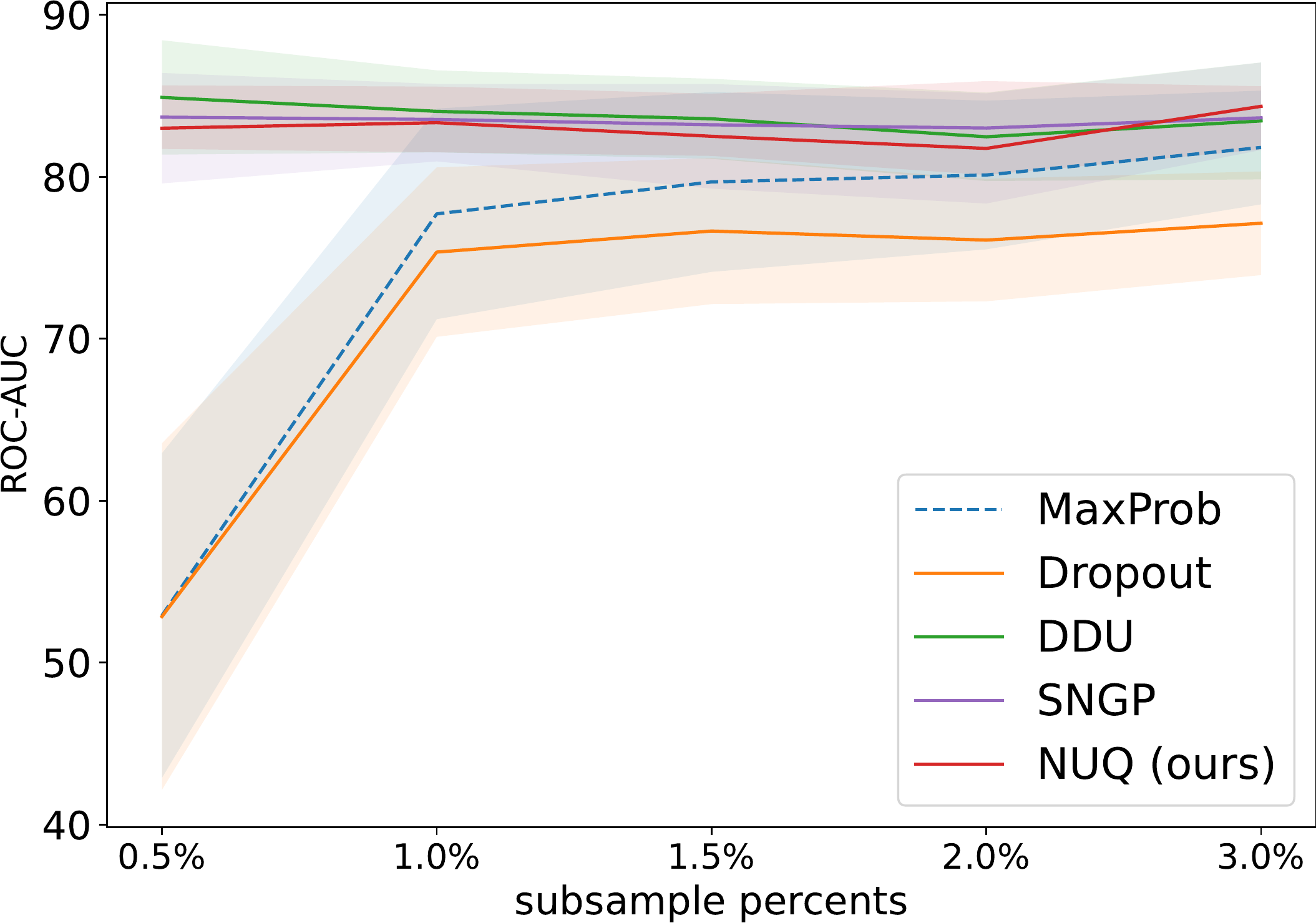}\\
    b) SST-2
    }
    
    \end{minipage}
    \begin{minipage}[h]{0.32\linewidth}
    \center{\includegraphics[width=4.cm]{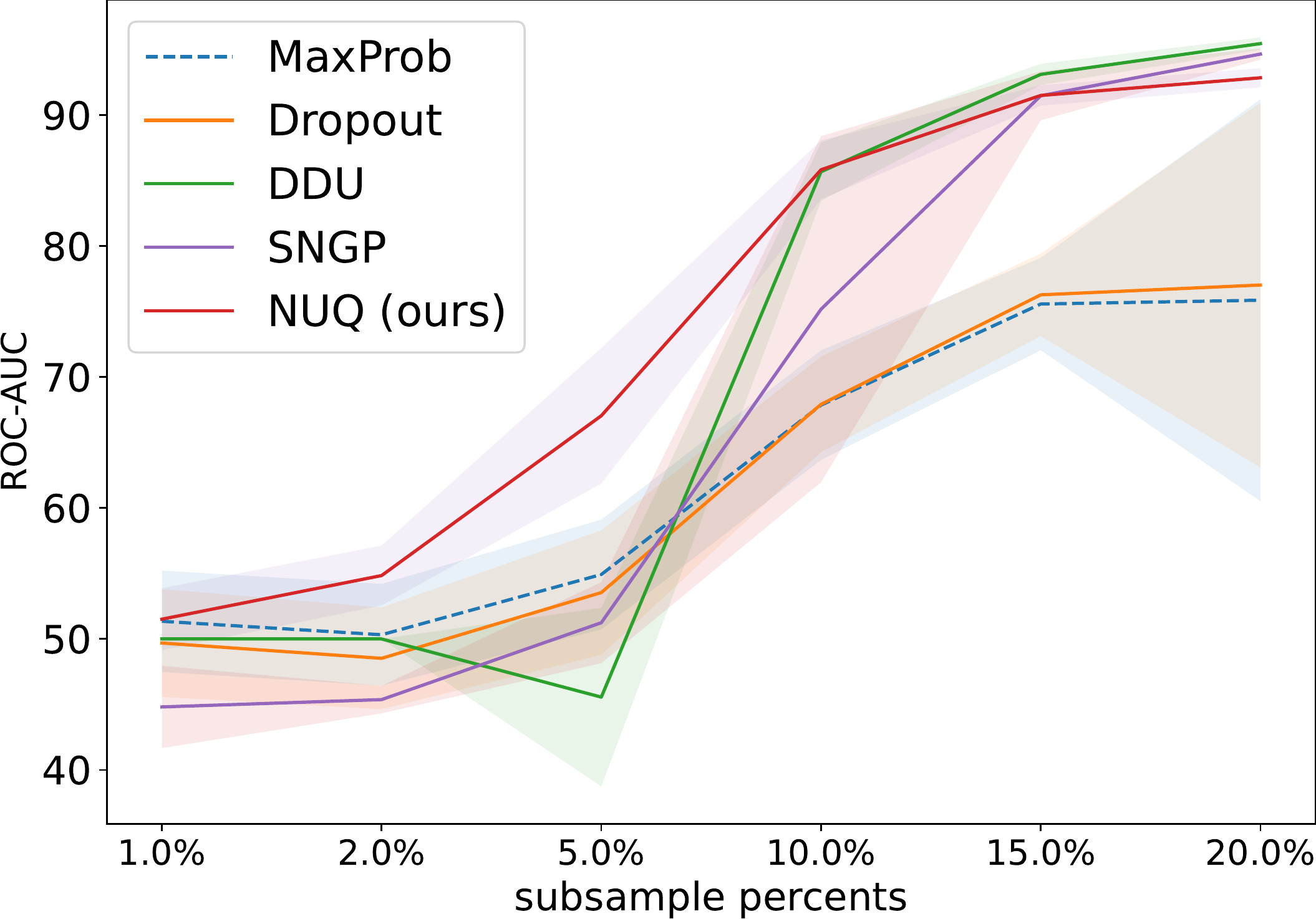}\\
    c) CLINC 
    }

    \end{minipage}
    \caption{ROC-AUC$\uparrow$ of OOD detection depending on the fraction of unlocked training data.}
    \label{fig:ood_subsamples}
    
    \vspace{-0.5cm}
  \end{figure*}

  Figure~\ref{fig:ood_subsamples} presents the results of OOD detection. We see that NUQ confidently outperforms MC dropout on all datasets and outperforms DDU on ROSTD and CLINC. This is  especially notable for extremely low-resource settings. SNGP has some advantage over NUQ on ROSTD, but it substantially falls behind on CLINC. Moreover, unlike SNGP, NUQ always outperforms the MaxProb baseline, therefore, it might be a better choice for OOD detection in low-resource regimes. Finally, for ROSTD and CLINC datasets, DDU works poorly, sometimes failing to outperform the MaxProb baseline, which stems from the incorrect assumption about the Gaussian distribution of training data.


\section{Conclusions}
\label{sec:conclusions}
  This work proposes NUQ, a new principled uncertainty estimation method that applies to a wide range of neural network models. It does not require retraining the model and acts as a postprocessing step working in the embedding space induced by the neural network. NUQ significantly outperforms the competing approaches with only the recently proposed DDU method~\citep{mukhoti2021deterministic} showing comparable results. Importantly, in the most practical example of OOD detection for ImageNet data, NUQ shows the best results with a significant margin. NUQ is also superior to DDU and other methods on text classification datasets in both OOD detection and classification with rejection. The code to reproduce the experiments is available online at~\url{https://github.com/stat-ml/NUQ}. 

  We hope that our work opens a new perspective on uncertainty quantification methods for deterministic neural networks. We also believe that NUQ is suitable for in-depth theoretical investigation, which we defer to future work.

  \textbf{Acknowledgements.} The research was supported by the Russian Science Foundation grant 20-71-10135

\bibliographystyle{plain}
\bibliography{bibliography}

\section*{Checklist}

The checklist follows the references.  Please
read the checklist guidelines carefully for information on how to answer these
questions.  For each question, change the default \answerTODO{} to \answerYes{},
\answerNo{}, or \answerNA{}.  You are strongly encouraged to include a {\bf
justification to your answer}, either by referencing the appropriate section of
your paper or providing a brief inline description.  For example:
\begin{itemize}
  \item Did you include the license to the code and datasets? \answerYes{See Section ...}
  \item Did you include the license to the code and datasets? \answerNo{The code and the data are proprietary.}
  \item Did you include the license to the code and datasets? \answerNA{}
\end{itemize}
Please do not modify the questions and only use the provided macros for your
answers.  Note that the Checklist section does not count towards the page
limit.  In your paper, please delete this instructions block and only keep the
Checklist section heading above along with the questions/answers below.

\begin{enumerate}

\item For all authors...
\begin{enumerate}
  \item Do the main claims made in the abstract and introduction accurately reflect the paper's contributions and scope?
    \answerYes{}
  \item Did you describe the limitations of your work?
    \answerYes{}
  \item Did you discuss any potential negative societal impacts of your work?
    \answerNo{Our proposed method is widely applicable, almost any modern neural network can be equipped with this mechanism. Potential social impact will highly depend on the specific domain and task.}
  \item Have you read the ethics review guidelines and ensured that your paper conforms to them?
    \answerYes{}
\end{enumerate}

\item If you are including theoretical results...
\begin{enumerate}
  \item Did you state the full set of assumptions of all theoretical results?
    \answerYes{}
        \item Did you include complete proofs of all theoretical results?
    \answerYes{Proofs for our results are in Supplementary Material, see Section~\ref{sec:consistency_proof}}
\end{enumerate}

\item If you ran experiments...
\begin{enumerate}
  \item Did you include the code, data, and instructions needed to reproduce the main experimental results (either in the supplemental material or as a URL)?
    \answerYes{We provide the link to our code in Conclusion.}
  \item Did you specify all the training details (e.g., data splits, hyperparameters, how they were chosen)?
    \answerYes{We discuss this in Supplementary Material, Sections \ref{sec:architechtures} and \ref{sec:textual_hyperparams}}
        \item Did you report error bars (e.g., with respect to the random seed after running experiments multiple times)?
    \answerYes{}
        \item Did you include the total amount of compute and the type of resources used (e.g., type of GPUs, internal cluster, or cloud provider)?
    \answerNo{It is not included in the main paper, but will be addressed in Supplementary material.}
\end{enumerate}

\item If you are using existing assets (e.g., code, data, models) or curating/releasing new assets...
\begin{enumerate}
  \item If your work uses existing assets, did you cite the creators?
    \answerYes{We provide references to the datasets in the \ref{sec:experiments} section. For our comparisons with existing approaches we have used the author's implementations that were provided in the respective original papers that we cite.}
  \item Did you mention the license of the assets?
    \answerNo{We use only open source code and datasets.} 
  \item Did you include any new assets either in the supplemental material or as a URL?
    \answerYes{We provide the link to our code in Conclusion.}
  \item Did you discuss whether and how consent was obtained from people whose data you're using/curating?
    \answerNA{All the assets used are free for research purposes.}
  \item Did you discuss whether the data you are using/curating contains personally identifiable information or   offensive content?
    \answerNo{We use publicly available datasets to benchmark our results and show only numeric values and abstract plots. To the best of our knowledge we have not exposed any personal information or offensive content.}
\end{enumerate}

\item If you used crowdsourcing or conducted research with human subjects...
\begin{enumerate}
  \item Did you include the full text of instructions given to participants and screenshots, if applicable?
    \answerNA{}
  \item Did you describe any potential participant risks, with links to Institutional Review Board (IRB) approvals, if applicable?
    \answerNA{}
  \item Did you include the estimated hourly wage paid to participants and the total amount spent on participant compensation?
    \answerNA{}
\end{enumerate}

\end{enumerate}

\newpage
\appendix


\section{Multiclass Generalization for Uncertainties}
\label{sec:multiclass_generalization}
  In this section we show, how our method can be generalized from binary classification to multiclass problems. Consider data pairs $(X, Y) \sim \PP$. Now, $X \in \mathbb{R}^d$ and $Y \in \{1, \dots, C\}$, where $C$ is the number of classes. We also denote $\eta_c(\xv) = \PP(Y = c \mid X = \xv)$.

  Let us start with the Bayes risk: 
  \begin{EQA}[c]
    \PP(Y\neq g^*(X) \mid X = \xv) = 1 - \PP(Y = g^*(X) \mid X = \xv)
    = 1 - \max_c \eta_c(\xv) = \min_c \bigl\{1 - \eta_c(\xv)\bigr\},
  \end{EQA}
  where $g^*(\xv):= \text{arg}\max_c \eta_c(\xv)$ is the Bayes optimal classifier.

  Let us further move to the excess risk and denote by $\hat{\eta}_c(\xv)$ some estimator of conditional probability. Analogously, $g(\xv):= \text{arg}\max_c \hat{\eta}_c(\xv)$ and we can bound the excess risk in the following way:
  \begin{EQA}
    && \PP(Y \neq g(X) \mid X = \xv) - \PP(Y \neq g^*(X) \mid X = \xv) = \eta_{g^*(\xv)}(\xv) - \eta_{g(\xv)}(\xv)
    \\ 
    &=& \eta_{g^*(\xv)}(\xv) - \hat{\eta}_{g^*(\xv)}(\xv) + \hat{\eta}_{g^*(\xv)}(\xv) -\hat{\eta}_{g(\xv)}(\xv) + \hat{\eta}_{g(\xv)}(\xv) - \eta_{g(\xv)}(\xv)
    \\
    &\leq& \bigl|\eta_{g^*(\xv)}(\xv) - \hat{\eta}_{g^*(\xv)}(\xv)\bigr| + \bigl|\eta_{g(\xv)}(\xv) - \hat{\eta}_{g(\xv)}(\xv)\bigr|,
  \end{EQA}
  where we used the fact that $\hat{\eta}_{g^*(\xv)}(\xv) - \hat{\eta}_{g(\xv)}(\xv) \leq 0$ for any $\xv$.

  The expectation of the right hand side in the case of kernel density estimator can be upper bounded by $2\sqrt{\frac{2}{\pi}}\tau(\xv)$, where 
  \begin{EQA}[c]
    \tau^2(\xv) = \frac{1}{N} \frac{\max_c \bigl\{\sigma_c^2(\xv)\bigr\}}{p(\xv)} \int \bigl[K_h(\uv)\bigr]^2d\uv
  \end{EQA}
  and $\sigma_c^2(\xv) = \eta_c(\xv) \bigl(1 - \eta_c(\xv)\bigr)$. Total uncertainty for multiclass problem is thus
  \begin{EQA}[c]
    \textbf{U}_t(\xv) = \min_c \bigl\{1 - \eta_c(\xv)\bigr\} + 2 \sqrt{\frac{2}{\pi}} \tau(\xv).
  \label{eq:total_un_def_multi}
  \end{EQA}

\section{Architecture and Training Details for Image Datasets}
\label{sec:architechtures}  

\paragraph{Base Model.}
  For CIFAR-100 and ImageNet-like datasets, we are using ResNet50 as a base model, with or without spectral normalization~\citep{Miyato2018SpectralNF}. For the spectral normalization, we use 3 iterations of the power method. We use a ResNet50 architecture with implementation from PyTorch~\citep{NEURIPS2019_9015}. This architecture was implemented for the ImageNet dataset; thus, for the CIFAR-100, we had to adapt it. We changed the first convolutional layer and used kernel size 3x3 with stride 1 and padding 1 (instead of kernel size 7x7 with stride 2 and padding 3 used for ImageNet). For CIFAR-100, we train the model for 200 epochs with an SGD optimizer, starting with a learning rate of 0.1 and decaying it 5 times on 60, 120, and 160 epoch. For ImageNet, we train the model for 90 epochs with an SGD optimizer and learning rate decaying 10 times every 30 epochs.
  
  For MNIST, we train a small convolutional neural network with three convolutional layers with padding of 1 and kernel size of 3. Each of these layers is followed by a batch normalization layer. Finally, it has a linear layer with Softmax activation. This network achieves an accuracy of 0.99 on the holdout set. 
  
  We refer readers to our code for more specific details.
  
\paragraph{Ensemble.}
  For ensemble, we use a combination of 5 base models trained with different random seeds.

\paragraph{Test-Time Augmentation (TTA).}
  For TTA, we use a base model and apply different transformations on the inference stage. Images of CIFAR-100 are randomly cropped with padding 4, randomly horizontally flipped, and randomly rotated up to 15 degrees. ImageNet is randomly cropped from 256 to 224, randomly horizontally flipped, and the color was jittered (0.02).

\paragraph{Spectrally Normalized Models.}
  For SNGP, DDU and NUQ, we need spectral normalized models to extract features. We wrapped each convolutional and linear layer with spectral normalization (PyTorch implementation). We used 3 iterations of the power method in our experiments.

\section{Hyperparameters Selection Strategies}
\label{appendix:hyperparams}
  In this section, we collect all the hyperparameters one should choose to use NUQ, as well as we discuss strategies for choosing them.

\subsection{Bandwidth}
\label{appendix:hyperparams_bandwidth}
  Probably the most important hyperparameter is bandwidth for kernels used in Nadaraya-Watson estimator. It is extensively studied in literature (see, for example, \cite{liao2010improving,turlach1993bandwidth} and references therein). However, there is still no silver bullet, and often the best approach for bandwidth selection is specific to a particular applied problem.

  In this work we use cross-validation to estimate the accuracy of resulting Nadaraya-Watson based classifier, and tune the bandwidth to maximize it. We implicitly use an assumption, that the bandwidth, which is good for in-distribution classification, will be still good for out-of-distribution detection. Our experiments show, that this simple idea works well in practice.

\subsection{Number of Nearest Neighbors}
\label{appendix:hyperparams_nn}
  In this section, we demonstrate, how the number of nearest neighbors affects the results. For the sake of demonstration, we have chosen the most challenging dataset (ImageNet) and its out-of-distribution counterparts. It is important to note, that to make the experiment faster, we used only a subsample (75k) of the ImageNet dataset.

  In Figure~\ref{fig:ablation_nn} we can see, that this hyperparameter has a wide interval, within which values do not affect the performance of the overall method. Already starting from 20 nearest neighbours the results are close to the optimal ones (both in terms of accuracy and out-of-distribution ROC AUC). This is due to the fact that typical kernels decay fast with distance, and there is no need to take a lot of nearest neighbors into account.

  \begin{figure}
    \centering
    \includegraphics[width=\textwidth]{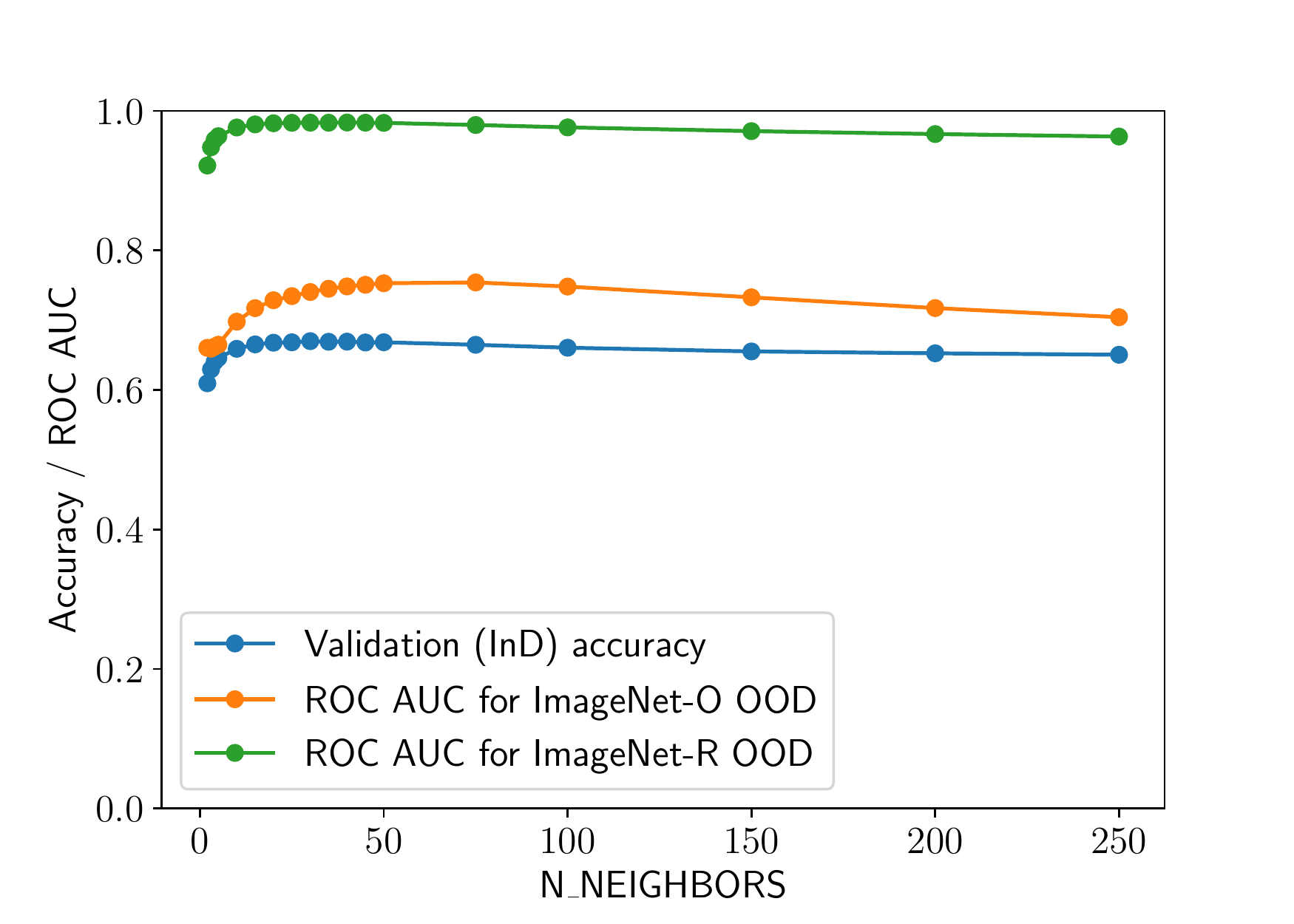}
    \caption{The dependence of ROC AUC for OOD detection and classification accuracy on the cross-validated training dataset as a function of the number of nearest neighbours used for approximation.}
    \label{fig:ablation_nn}
  \end{figure}

\subsection{Marginal Density Estimation}
\label{appendix:hyperparams_kde}
  To compute epistemic uncertainty according to equation~\eqref{gaussian_asymptotic}, we need to estimate $p(\xv)$. In general, there are multiple possible ways how one can do it (including Variational Autoencoders~\cite{kingma2013auto} and Normalizing Flows~\cite{papamakarios2021normalizing}).
  However, for computational efficiency something lightweight is required. Specifically, we propose two ways. The first one is to use another KDE with the same kernel and bandwidth used for NW estimator. Another approach is to use training dataset embeddings and corresponding labels to assign a Gaussian distribution per class (GMM). Note, that it is not a Gaussian Mixture Model in the traditional sense, typically fitted with EM-algorithm. From our experiments (see Table~\ref{tab:ablation_density}) we see, that both of them perform on par, with some advantages of GMM for CIFAR-100 dataset, while for ImageNet KDE works better.

\subsection{Kernel for KDE}
\label{appendix:hyperparams_kernels}
  In principle, one could choose different kernels for both, KDE (see Table~\ref{tab:ablation_density}) and NW-regression. But typically, as kernels decay fast with the distance, it does not affect much on the result. Thus, Gaussian (RBF) kernel is a good default choice.
  
  Below, we study the choice of a kernel to plug-in in our approach. First, let us rewrite $K_h(\uv)$ as follows:
  \begin{EQA}[c]
    K_h(\uv) = \prod_{i=1}^d K\Bigl(\frac{u_i}{h}\Bigr) = \prod_{i=1}^d K(z_i).
  \end{EQA}
  
  We consider the different choices of kernels, see Table~\ref{tab:kernels}.
  \begin{table}[ht!]
    \centering
    \begin{tabular}{c|c|c}
      \textbf{Kernel name}  & \textbf{Formula $K(z)$} & \textbf{Integral} $\int K_h(\uv)^2d\uv$ \\
      \hline 
      Gaussian (RBF) & $\frac{1}{\sqrt{2\pi}}\exp{\bigl\{-z^2\bigr\}}$ & $\frac{h^d}{2\sqrt{\pi}}$ \\ 
      \hline 
      Sigmoid & $\frac{2}{\pi} \frac{1}{\exp\{-z\} + \exp\{z\}}$ & $\frac{2h^d}{\pi^2}$ \\
      \hline
      Logistic & $\frac{1}{\exp\{-z\} + 2 + \exp\{z\}}$ & $\frac{h^d}{6}$ \\
      \hline
    \end{tabular}
    \caption{Different types of kernels $K(z)$ considered and corresponding values of the integral $\int K_h(\uv)^2d\uv$.}
  \label{tab:kernels}
  \end{table}

\section{Consistency of NUQ-based Classification with the Reject Option}
\label{sec:consistency_proof}
  \def\vt{\mathbf{t}}
  \def\vy{\mathbf{y}}
  
  In this section, we derive a non-asymptotic upper bound of the excess risk used to obtain the consistency result in Section~\ref{sec:consistency}. First, using results of the previous section, notice, that for an arbitrary rejection rule $\alpha(\xv)$ the excess risk of $\risk_{\lambda}(\xv)$ is at most
  \begin{align*}
    \EE_{\dataset} \left \{ \risk_{\lambda}(\xv) - \risk_{\lambda}^*(\xv) \right \} \leqslant 2 \EE_{\dataset} \left \{ \max_{c \in C} |\eta_c(\xv) - \hat \eta_c(\xv)| \indicator\{\alpha(\xv) = 0\} \right \} + |\lambda - \risk^*(\xv)| \PP_{\dataset} \left (\alpha(\xv) \neq \alpha^*(\xv) \right ).
  \end{align*}
  As previously,
  \begin{align*}
    \frac{\eta_c(\xv) - \hat \eta_c(\xv)}{\tau_c(\xv)} \to \normalDistribution(0, 1), \quad \tau_c(\xv) = \Vert K \Vert_2 \sqrt{\frac{\eta_c(\xv)(1 - \eta_c(\xv)}{N h^d p(\xv)}}.
  \end{align*}
  Thus, asymptotically
  \begin{align*}
    \PP \left (
        \eta_c(\xv) - \hat \eta_c(\xv) \leqslant z_{\beta} \Vert K \Vert_2 \sqrt{\frac{\eta_c(\xv) (1 - \eta_c(\xv))}{N h^d \hat p(\xv)}}
    \right ) \leqslant \beta.
  \end{align*}
  Consequently,
  \begin{align*}
    \PP \left (
        \min_c \{1 - \hat \eta_c(\xv)\} \leqslant \lambda - \max_c z_{1 - \beta} \Vert K \Vert_2 \sqrt{\frac{\hat \eta_c(\xv) (1 - \hat \eta_c(\xv))}{N h^d \hat p(\xv)}}
    \right ) \leqslant \beta |C|.
  \end{align*}

  That leads us to the procedure described as Algorithm~\ref{algo: test classification}.
  \begin{algorithm}
        
        \KwIn{Samples $(X_i, Y_i)$, bandwidth $h$, parameters $\lambda, \beta$}
        \KwOut{Accept or reject the regression result}
        
        \BlankLine
        
        Calculate $\hat{p}(\xv), \hat{\eta}_c(\xv)$ for $c \in C$;

        \eIf{ the density estimation $\hat{p}(\xv) > 0$ and
            the criterion holds:
            \begin{align*}
                \min_c(1 - \hat \eta_c(\xv)) \leqslant \lambda - \max_c z_{1 - \beta / |\classes|} \Vert K \Vert_2 \sqrt{\frac{\hat \eta_c (\xv) (1 - \hat \eta_c (\xv))}{N h^d \hat p(\xv)}} 
            \end{align*}
        }
        { Accept results of the regression }
        { Reject}
        \caption{Acceptance testing for classification}
        \label{algo: test classification}
  \end{algorithm}
      
  We formulate a number of mild assumptions:
  \begin{assumption}
  \label{assumption: regression funciton}
    There exist Hessians of functions $\eta_c(\xv)$, $c \in \classes$, and their spectral norms are bounded by some constant $M_\eta$.
  \end{assumption}

  \begin{assumption}
  \label{assumption: density}
    $L_2$-norms of $\nabla p$ and $\nabla^2 p$ are bounded by constants $L_d$ and $M_d$ respectively, i.e.
    \begin{align*}
        \Vert \nabla p(\xv) \Vert_2 = \sqrt{\sum_i \bigl(\nabla_i p(\xv)\bigr)^2} \leqslant L_d, \quad \sup_{\xv: \Vert \xv \Vert_2 = 1} \Vert \xv^T \nabla^2 p(\xv) \Vert_2 \leqslant M_d.
    \end{align*}
  \end{assumption}

  \begin{assumption}
  \label{assumption: kernel}
    There exist finite values
        \begin{align*}
            \max_\vt K(\vt),  \quad \int_{\RR^d} \Vert \vt \Vert_2 K^2(\vt) d\vt, \quad \int_{\RR^d} K^2(\vt) d\vt,
            \quad \int_{\RR^d} \Vert \vt \Vert_2^2 K(\vt) d\vt,
        \end{align*}
        while 
        \begin{align*}
            \int_{\RR^d} \vt K(\vt) d\vt = 0.
        \end{align*}
  \end{assumption}

    Under these assumptions, we state the following theorem:
  \begin{theorem}
  \label{theorem: main theorem}
    Suppose that assumptions~\ref{assumption: regression funciton}-\ref{assumption: kernel} hold, $p(\xv) > 0$, and $\beta < 1/2$. Define $\Delta = |\lambda - \risk^*(\xv)|$. Then, if $\lambda < \risk^*(\xv)$
    \begin{align*}
        \EE_{\dataset} \left \{ \risk_{\lambda}(\xv) - \risk_{\lambda}^*(\xv) \right \}
        \leqslant |\classes| A_\Delta,
    \end{align*}
    where
    \begin{align*}
        A_\Delta & = 2 \left \{
            R  \wedge (\indicator\{\Delta \leqslant h^2 \kappa_\Delta \} \vee q_\Delta
        ) \right \} + \Delta (\indicator\{\Delta \leqslant h^2 \kappa_\Delta \} \vee q_\Delta), \\ 
        R & =  2 h^2 \kappa_\Delta + 2 \sqrt{
            \frac{\pi \left \{12 \left ( \Vert K \Vert_2^2 + \frac{h L_d}{p(\xv)} \int_{\RR^d} \Vert \vt \Vert_2 K^2(\vt) d\vt \right ) + \max_\vt K(\vt) \right \}}{ N h^d p(\xv)}
        }, \\
        q_\Delta & = \exp \left (
            - \frac{1}{2} \frac{N h^d p(\xv) (\Delta - h^2 \kappa_\Delta)^2}{(1 + \Delta)^2 \left (
                \Vert K \Vert_2^2
                +
                \frac{h L_d}{p(\xv)} \int_{\RR^d} \Vert \vt \Vert K^2(\vt) d\vt
            \right )
            + \frac{1}{3} (\Delta - h^2 \kappa_\Delta) \max_\vt K(\vt)
            }
        \right ),  
    \end{align*}
    and
    \begin{align*}
        \kappa_\Delta & = \frac{1}{p(\xv)} \left (
            M_d + 2 d M_\eta L_d + 2 d L_d \sqrt{M_\eta}
        \right ) \int_{\RR^d} \Vert t \Vert_2^2 K(\vt) d\vt.
    \end{align*}
    If $\risk^*(\xv) \leqslant \Delta$, 
    \begin{align*}
        \EE_{\dataset} \risk(\xv) - \risk^*(\xv)
        \leqslant
        |\classes| \left (R + \indicator\left \{\Delta \leqslant h^2 \kappa_{\Delta} + z_{1 - \beta/|\classes|} \Vert K \Vert_2 \sqrt{\frac{1}{2 N h^2 p(\xv)}}
        \right \} \vee \tilde q_\Delta
        + \indicator \left \{1/2 \leqslant h^2 \kappa_p \right \} \vee q_p \right ).
    \end{align*}
    Here $\tilde q_\Delta$ differs from $q_\Delta$ by replacing $h^2 \kappa_{\Delta}$ with $h^2 \kappa_\Delta + z_{1 - \beta/|\classes|} \Vert K \Vert_2 \sqrt{\frac{1}{2 N h^2 p(\xv)}}$, while
    \begin{align*}
        q_p & = \exp \left (
            - \frac{\frac 1 2 N h^2 p(\xv) (1/2 - h^2 \kappa_p)^2}{\Vert K \Vert_2^2 + \frac{h L_d}{p(\xv)} \int_{\RR^d} \Vert \vt \Vert_2 K^2(\vt) + \max_\vt K(\vt) (1/2 - h^2 \kappa_p)}
        \right ), \\
        \kappa_p & = \frac{M_d}{2 p(\xv)} \int_{\RR^d} \Vert \vt \Vert^2 K(\vt) d\vt.
    \end{align*}
  \end{theorem}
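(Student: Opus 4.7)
The plan is to start from the pointwise excess-risk decomposition stated at the top of Section~\ref{sec:consistency_proof},
$$\EE_\dataset[\risk_\lambda(\xv) - \risk_\lambda^*(\xv)] \leq 2\EE_\dataset\bigl[\max_{c \in \classes} |\eta_c(\xv) - \hat\eta_c(\xv)|\indicator\{\alpha(\xv)=0\}\bigr] + |\lambda - \risk^*(\xv)|\PP_\dataset\bigl(\alpha(\xv) \neq \alpha^*(\xv)\bigr),$$
and to control each summand by two ingredients: a bias-variance bound for the Nadaraya-Watson estimator $\hat\eta_c(\xv)$ and the density estimator $\hat p(\xv)$, plus a Bernstein-type concentration inequality applied to the relevant kernel sums. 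The two cases of the theorem ($\lambda < \risk^*(\xv)$ vs. $\risk^*(\xv) \leq \lambda$) are handled separately, since the oracle decision $\alpha^*(\xv)$ flips between them.

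For the bias, I would Taylor-expand $\eta_c(\xv')$ and $p(\xv')$ at $\xv$ to second order inside the integrals $\int K_h(\xv - \xv')\eta_c(\xv')p(\xv')d\xv'$ and $\int K_h(\xv - \xv')p(\xv')d\xv'$. The first-order contributions vanish by $\int \vt K(\vt)d\vt = 0$ from Assumption~\ref{assumption: kernel}, while the second-order pieces are controlled by the Hessian bounds $M_\eta, M_d$ of Assumptions~\ref{assumption: regression funciton} and~\ref{assumption: density}. Because $\hat\eta_c$ is a ratio, both numerator and denominator have to be expanded, and the cross terms between $\nabla p$ and $\nabla \eta_c$ account precisely for the $2dM_\eta L_d + 2dL_d\sqrt{M_\eta}$ contributions inside $\kappa_\Delta$. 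This produces $|\EE\hat\eta_c(\xv) - \eta_c(\xv)| \lesssim h^2\kappa_\Delta$ and $|\EE\hat p(\xv) - p(\xv)| \lesssim h^2\kappa_p$. For the stochastic fluctuation I would apply Bernstein to $\sum_i K_h(\xv - \xv_i)(\indicator\{y_i=c\} - \EE[\cdot])$: each summand is bounded by $\max_\vt K(\vt)$ and has variance of order $h^d\Vert K\Vert_2^2 p(\xv)$ (plus an $h L_d\int\Vert\vt\Vert K^2$ correction from Taylor-expanding $p$ inside the variance integral); dividing by the denominator $\approx N p(\xv)$ gives the variance proxy $\Vert K\Vert_2^2/(Nh^d p(\xv))$ appearing in $R$ and in the exponents of $q_\Delta, \tilde q_\Delta, q_p$. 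Combining bias and variance and union-bounding over $c \in \classes$ yields the $R$ term, with the $|\classes|$ factor accumulating there.

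For the disagreement probability I would rewrite the event $\{\hat{\mathbf{U}}_\beta(\xv) \leq \lambda\}$ (respectively its complement) as a deviation of $\hat\eta_c - \eta_c$ of size at least $\Delta - h^2\kappa_\Delta$ after subtracting the bias, and apply Bernstein with the parameters above; this produces exactly the exponent defining $q_\Delta$. In the second case the same argument yields $\tilde q_\Delta$, where the extra threshold $z_{1-\beta/|\classes|}\Vert K\Vert_2\sqrt{1/(2Nh^d p(\xv))}$ comes from the definition of the statistic $\hat{\mathbf{U}}_\beta$, and the additional $q_p$ arises by applying Bernstein to $\hat p(\xv) - \EE\hat p(\xv)$ (needed because the abstention rule of Algorithm~\ref{algo: test classification} also rejects when $\hat p(\xv) \approx 0$). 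When the required deviation is negative, i.e.\ the bias exceeds the margin, Bernstein is vacuous and we bound the probability trivially by $1$: this is precisely the role of the indicators $\indicator\{\Delta \leq h^2\kappa_\Delta\}$ and their analogues for $\tilde q_\Delta$ and $q_p$. Finally, the first summand is bounded by the minimum of $R$ and $\PP(\alpha(\xv) = 0)$, since $\max_c|\hat\eta_c - \eta_c| \leq 1$; this produces the $\wedge$ structure in $A_\Delta$.

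The main obstacle, in my view, is the precise bookkeeping of the bias constants $\kappa_\Delta$ and $\kappa_p$: because $\hat\eta_c$ is a ratio, a second-order expansion of both numerator and denominator produces cross terms that have to be tracked carefully to recover the exact form of $\kappa_\Delta$ stated in the theorem (including the $2d$ factors coming from the spectral-norm Hessian bounds in Assumptions~\ref{assumption: regression funciton},~\ref{assumption: density}). Once these constants and the matching Bernstein parameters are pinned down, plugging them into the excess-risk decomposition above assembles the two bounds of Theorem~\ref{theorem: main theorem} directly, with no further difficulty.
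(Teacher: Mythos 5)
Your proposal matches the paper's proof in all essentials: the same excess-risk decomposition into an estimation term and a disagreement term, the same second-order Taylor expansion of the bias using $\int \vt K(\vt)\,d\vt = 0$ and the Hessian bounds (correctly locating the cross terms that produce the $2dM_\eta L_d + 2dL_d\sqrt{M_\eta}$ pieces of $\kappa_\Delta$), Bernstein's inequality for the fluctuations, the indicators for when the bias exceeds the margin, the $R \wedge \PP(\alpha(\xv)=0)$ structure, and the extra $q_p$ term from de-randomizing $\hat p(\xv)$ in the second case. The one point where you diverge is precisely the "main obstacle" you flag: the paper never linearizes the ratio $\hat\eta_c$ — in Lemma~\ref{lemma: Bernstein bounding} it multiplies the deviation inequality $\hat\eta_c(\xv) - \eta_c(\xv) \geq r$ through by the positive denominator $\sum_i K_h(X_i - \xv)$, reducing the event to a deviation of a single linear sum $\sum_i e_i$ whose mean is controlled by the Taylor argument and to which Bernstein applies directly, thereby sidestepping the lower bound on the denominator that a genuine numerator-plus-denominator expansion would force you to establish.
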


  Notice, that Theorem~\ref{theorem: asymptotic consistency} follows from Theorem~\ref{theorem: main theorem} as all the terms in the upper bound tend to zero when $h \to 0$ and $\nsize h^d \to \infty$ with $\nsize \to \infty$.

  \begin{proof}[Proof of Theorem~\ref{theorem: main theorem}]
    For a reminder, the excess risk is
    \begin{align*}
        \EE_{\dataset} \left \{ \risk_{\lambda}(\xv) - \risk_{\lambda}^*(\xv) \right \} \leqslant 2 \EE_{\dataset} \left \{ \max_{c \in \classes} |\eta_c(\xv) - \hat \eta_c(\xv)| \indicator\{\alpha(\xv) = 0\} \right \} + \Delta \cdot \PP_{\dataset} \left (\alpha(\xv) \neq \alpha^*(\xv) \right ).
    \end{align*}
    First, rewrite the expectation as an integral
    \begin{align*}
        \EE |\hat \eta_c(\xv) - \eta_c(\xv)| \indicator \{\alpha(\xv) = 0\}
        & =
        \int_{0}^{+ \infty}
        \PP (|\hat \eta_c(\xv) - \eta_c(\xv)| \indicator \{\alpha(\xv) = 0\} \geqslant t) dt \\
        & = 
        \int_{0}^{+ \infty}
        \PP \left (
            |\hat \eta_c(\xv) - \eta_c(\xv)| \geqslant t \text{ and } \alpha(\xv) = 0
        \right ) dt \\
        & \leqslant
        \int_{0}^{+\infty}
            \PP \left (
                |\hat \eta_c(\xv) - \eta_c(\xv)| \geqslant t \text{ and } \sum_i K_h(X_i - \xv) \neq 0
            \right ) dt \\
        & \leqslant
        \int_0^1
            \PP \left (
                |\hat \eta_c(\xv) - \eta_c(\xv)| \geqslant t \text{ and } \sum_i K_h(X_i - \xv) \neq 0
            \right ) dt,
    \end{align*}
    since we abstain if $\hat p(\xv) = 0$. Due to Lemma~\ref{lemma: Bernstein bounding},
    \begin{align*}
        & \int_0^1
            \PP \left (
                |\hat \eta_c(\xv) - \eta_c(\xv)| \geqslant t \text{ and } \sum_i K_h(X_i - \xv) \neq 0
            \right ) dt
        \leqslant \\
        & \leqslant 2 h^2 \kappa + 2 \sqrt{
            \frac{\pi \left \{\left ( \Vert K \Vert_2^2 + \frac{h L_d}{p(\xv)} \int_{\RR^d} \Vert \vt \Vert_2 K^2(\vt) d \vt \right ) + \max_\vt K(\vt) \right \}}{ 2 N h^d p(\xv)}
        }.
    \end{align*}
    Here we use the Poisson integral. Denote this upper bound by $R$.

    \textbf{Now, assume $\lambda < \risk^*(\xv)$.} Then the excess risk can be estimated in the following way:
    \begin{align*}
        & 2 \EE_{\dataset} \left \{ \max_{c \in C} |\eta_c(\xv) - \hat \eta_c(\xv)| \indicator\{\alpha(\xv) = 0\} \right \} \leqslant 2 \left [R \wedge \PP \left ( \alpha(\xv) =  0\right ) \right ], \\
        & \Delta \cdot \PP \left ( \alpha (\xv) \neq \alpha^*(\xv) \right ) = \Delta \PP \left (\alpha(\xv) =  0 \right ), \\
        & \PP( \alpha(\xv) = 0) = \PP \left (\sum_i K_h(X_i - \xv) > 0 \text{ and } \min_c(1 - \hat \eta_c(\xv)) \leqslant \lambda - z_{1 - \beta/|\classes|} \Vert K \Vert_2 \sqrt{\frac{\hat \eta(\xv) \bigl(1 - \hat \eta(\xv)\bigr)}{N h^d \hat p(\xv)}} \right ).
    \end{align*}

    The event from the RHS implies that there is $c \in C$ such that 
    \begin{align}
        \hat \eta_c(\xv) - \eta_c(\xv) \geqslant \Delta,
    \end{align}
    and, consequently,
    \begin{align*}
        \PP(\alpha(\xv) = 0) \leqslant \sum_{c \in \classes} \PP \left (
            \hat \eta_c(\xv) - \eta_c(\xv) \geqslant \Delta \text{ and } \sum_i K_h(X_i - \xv) > 0
        \right ).
    \end{align*}
    The upper bound was obtained using Lemma~\ref{lemma: Bernstein bounding}.
    
    \textbf{Finally, consider the case $\risk^*(\xv) \geqslant \lambda$.} Then, we estimate
    \begin{align*}
        \EE_{\dataset} \{\max_c |\hat \eta_c(\xv) - \eta_c(\xv)| \indicator\{\alpha(\xv) = 0\}\} \leqslant R |\classes|,
    \end{align*}
    and
    \begin{align*}
        \PP (\alpha(\xv) = 1) & \leqslant  \sum_{c \in \classes} \PP \left (
            \eta_c(\xv) - \hat \eta_c(\xv) \geqslant \Delta - z_{1 - \beta/|\classes|} \Vert K \Vert_2 \max_c \sqrt{
                \frac{\hat \eta_c(\xv) (1 - \hat \eta_c(\xv))}{N h^d \hat p(\xv)}
            }
            \text{ or } \hat p(\xv) = 0
        \right ) \\
        & \leqslant \sum_{c \in \classes}
        \PP \left (
            \eta_c(\xv) - \hat \eta_c(\xv) \geqslant \Delta - z_{1 - \beta/|\classes|}\Vert K \Vert_2
            \sqrt{
                \frac{1}{2 N h^d p(\xv)}
            }
            \text{ and } \hat p(\xv) > 0
        \right ) \\
        & +
        |\classes| \PP \left ( \hat p(\xv) \leqslant \frac{p(\xv)}{2} \right ).
    \end{align*}
    Similarly to Lemma~\ref{lemma: Bernstein bounding}, we bound the last probability by Bernstein's inequality:
    \begin{align*}
        \PP \left ( \hat p(\xv) \leqslant \frac{p(\xv)}{2} \right )
        \leqslant
        \indicator \left \{ \frac{1}{2} \leqslant h^2 \kappa_p \right \}
        \vee
        \exp \left (
        - \frac{\frac{1}{2} h^d N p(\xv) \left (\frac{1}{2} - h^ \kappa_p \right ) }{
            \Vert K \Vert_2^2 + \frac{h L_d}{p(\xv)} \int_{\RR^d} \Vert \vt \Vert_2 K^2(\vt)
            + \max_\vt K(\vt) (1/2 - h^2 \kappa_p)
        }
        \right ).
    \end{align*}
  \end{proof}

  \begin{lemma}
  \label{lemma: Bernstein bounding}
    Suppose all conditions of Theorem~\ref{theorem: main theorem} holds. Then, for any non-negative $r$
    \begin{align*}
        & \PP (\hat \eta_c(\xv) - \eta_c(\xv) \geqslant r \text{ and } \sum_i K_h(X_i - \xv) > 0)
        \leqslant \\
        & \; \leqslant \indicator\{r \leqslant h^2 \kappa\} \vee 
        \exp \left \{- \frac{1}{2} \frac{Nh^d p(\xv) (r - h^2 \kappa)^2}{(1 + r)^2 \left ( \Vert K \Vert_2^2 + \frac{h L_d}{p(\xv)} \int_{\RR^d} \Vert \vt \Vert_2 K^2(\vt) d \vt \right ) + (\max_\vt K(\vt) + r) |r - h^2 \kappa|} \right \}
    \end{align*}
    and
    \begin{align*}
        & \PP (\eta_c(\xv) - \hat \eta_c(\xv) \geqslant r \text{ and } \sum_i K_h(X_i - \xv) > 0)
        \leqslant \\
        & \; \leqslant \indicator\{r \leqslant h^2 \kappa\} \vee 
        \exp \left \{- \frac{1}{2} \frac{Nh^d p(\xv) (r - h^2 \kappa)^2}{(1 + r)^2 \left ( \Vert K \Vert_2^2 + \frac{h L_d}{p(\xv)} \int_{\RR^d} \Vert \vt \Vert_2 K^2(\vt) d \vt \right ) + (\max_\vt K(\vt) + r) (r - h^2 \kappa)} \right \}.
    \end{align*}
  \end{lemma}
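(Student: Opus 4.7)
The plan is to apply Bernstein's inequality to a sum of i.i.d.\ bounded random variables whose sign encodes the deviation event exactly, combined with a Taylor-expansion bias analysis that exploits the zero first moment of $K$.

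First, define $\xi_i = \bigl(\indicator[y_i = c] - \eta_c(\xv) - r\bigr) K_h(X_i - \xv)$. The $\xi_i$ are i.i.d., and on $\{\sum_i K_h(X_i - \xv) > 0\}$ clearing the positive denominator gives $\hat\eta_c(\xv) - \eta_c(\xv) \geq r \iff \sum_i \xi_i \geq 0$; hence it suffices to bound $\PP\bigl(\sum_i (\xi_i - \EE\xi_i) \geq -N\EE\xi_i\bigr)$. For the bias, the tower property plus the change of variables $\vt = (X_i - \xv)/h$ yields $\EE\xi_i = h^d \int \bigl(\eta_c(\xv + h\vt) - \eta_c(\xv) - r\bigr) K(\vt) p(\xv + h\vt) d\vt$; Taylor-expanding both $\eta_c$ and $p$ to second order, the linear-in-$h$ contribution vanishes by Assumption~\ref{assumption: kernel} ($\int \vt K(\vt) d\vt = 0$), while the quadratic remainders are controlled through $M_\eta, M_d, L_d$ from Assumptions~\ref{assumption: regression funciton}--\ref{assumption: density} and through the inequality $\|\nabla\eta_c\|_2 \leq \sqrt{2d M_\eta}$ that follows from the Hessian bound on $\eta_c \in [0,1]$ via a one-dimensional Landau–Kolmogorov argument. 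Collecting terms and using $r \leq 1$ to absorb $r$-scaled remainders into the leading constant gives $\EE\xi_i \leq -h^d p(\xv)(r - h^2 \kappa)$ with $\kappa$ as stated in Theorem~\ref{theorem: main theorem}.

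Next, since $|\indicator[y_i=c] - \eta_c(\xv) - r| \leq 1 + r$, one has $\Var\xi_i \leq (1+r)^2 h^d \int K^2(\vt) p(\xv + h\vt) d\vt \leq (1+r)^2 h^d p(\xv) \bigl(\|K\|_2^2 + (hL_d/p(\xv))\int \|\vt\|_2 K^2(\vt) d\vt\bigr)$ by a first-order Taylor bound on $p$, and $|\xi_i| \leq (1+r)\max_\vt K(\vt)$ almost surely. If $r > h^2 \kappa$ so that $-N\EE\xi_i > 0$, the classical Bernstein bound applied to $\sum_i (\xi_i - \EE\xi_i)$ at level $t = -N\EE\xi_i$ produces, after one factor of $Nh^d p(\xv)$ cancels between numerator and denominator, the stated exponential; if $r \leq h^2 \kappa$ the mean estimate becomes vacuous and we fall back to $\PP(\cdot) \leq 1$, yielding the $\indicator\{r \leq h^2 \kappa\}$ alternative. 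The lower-deviation inequality follows identically after swapping $\xi_i$ for $\xi'_i = \bigl(\eta_c(\xv) - \indicator[y_i=c] - r\bigr) K_h(X_i - \xv)$, whose mean, variance, and sup-norm satisfy the same bounds by symmetry.

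The main obstacle will be step two: recovering the exact constant $\kappa = p(\xv)^{-1}\bigl(M_d + 2dM_\eta L_d + 2dL_d\sqrt{M_\eta}\bigr)\int \|\vt\|_2^2 K(\vt) d\vt$ requires carefully tracking the three quadratic bias contributions (the pure Hessian of $\eta_c$, the cross $\nabla\eta_c \cdot \nabla p$ term, and the $r$-scaled Hessian of $p$) and combining them using Cauchy–Schwarz and the dimensional factor $d$ that emerges from the gradient bound, while ensuring the resulting constant in the Bernstein denominator matches the specific packaging $(1+r)^2(\cdot) + (\max_\vt K + r)|r - h^2\kappa|$ stated in the lemma.
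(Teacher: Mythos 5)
Your proposal follows essentially the same route as the paper's proof: you clear the positive denominator to reduce the event to $\sum_i \xi_i \geq 0$ for the same centered variables (the paper's $e_i$), control the mean via a second-order Taylor expansion using $\int \vt K(\vt)\,d\vt = 0$ and a gradient bound for $\eta_c$ derived from the Hessian bound plus boundedness, and then apply Bernstein's inequality with the same variance and sup-norm estimates, falling back to the trivial bound when $r \leq h^2\kappa$ and obtaining the second inequality by symmetry. The only discrepancies are in bookkeeping (your gradient constant $\sqrt{2dM_\eta}$ versus the paper's $2d\sqrt{M_\eta}$, and the power of $h$ in the intermediate variance bound, which depends on the kernel normalization convention), which you correctly flag as the delicate part and which do not affect the structure of the argument.
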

  \begin{proof}
    Let us prove the first inequality, the second one can be proved in the same way. Since $\hat p(\xv) > 0$, we can multiply both sides of the inequality $\hat \eta_c(\xv) - \eta_c(\xv) \geqslant r$ by $\sum_{i} K_h(X_i - \xv)$. Thus, the inner event implies
    \begin{align*}
        \sum_i \indicator\{Y_i = c\} K_h(X_i - \xv) - \eta_c(\xv) K_h(X_i - \xv) - r K_h(X_i - \xv) \geqslant 0.
    \end{align*}
    Define
    \begin{align*}
        e_i & = \indicator\{Y_i = c \} K_h(X_i - \xv) - \eta_c(\xv) K_h(X_i - \xv) - r K_h(X_i - \xv), \\
        e & = \EE e_i = \EE \{ \eta_c(X_i) - \eta_c(\xv) \} K_h(X_i - \xv) - r \cdot \EE K_h(X_i - \xv).
    \end{align*}
    In order to write a concentration we should analyze $e$. Inequalities
    \begin{align*}
        \Vert \nabla \eta_c(\xv) - \nabla \eta_c(\vy) \Vert \leqslant \sqrt{d} M_\eta \Vert \xv - \vy \Vert, \\
        \left |
            \int_{0}^\lambda \nabla_i \eta_c(\xv + s e_i) ds
        \right |
        \leqslant 
        \left |
            \eta_c(\xv + \lambda e_i) - \eta_c(\xv)
        \right |
        \leqslant 1,
    \end{align*}
    which hold for each $\lambda$, $\xv$ and $\vy$, guarantee us that the norm of the gradient $\nabla \eta_c(\xv)$ is bounded by
    \begin{align*}
        L_\eta = 2 d \sqrt{M_\eta}.
    \end{align*}
    Moreover, non-negativity of $p(\xv)$, the $L_d$-Lipschitz property and its $L_1$-norm imply that $p(\xv)$ is bounded by $2 L_d d$. Then, Taylor's expansion delivers the following:
    \begin{align*}
        \left | 
            \EE \{\eta_c(X_1) - \eta_c(\xv)\} K_h(X_1 - \xv)
        \right | & = \left | \int_{\RR^d} (\eta(\xv') - \eta(\xv)) K_h(\xv' - \xv) p(\xv') d\xv' \right | \\
        & \leqslant \left |
            h \int_{\RR^d} \langle \nabla \eta (\xv),  t \rangle K(t) p(\xv + h t) dt
        \right | + h^2  d M_\eta L_d \int_{\RR^d} \Vert \vt \Vert_2^2 K(\vt) d\vt \\
        & \leqslant h^2 L_d  d \left (
            \sqrt{M_\eta} + M_\eta
        \right ) \int_{\RR^d} \Vert \vt \Vert_2^2 K(\vt) d \vt.
    \end{align*}
    Similarly,
    \begin{align*}
        \left |
            \EE K_h(X_i - \xv) - p(\xv)
        \right |
        \leqslant 
        \frac{h^2}{2} M_d \int_{\RR^d} \Vert \vt \Vert_2^2 K(\vt) d \vt.
    \end{align*}
    Thus,
    \begin{align*}
        (-e) \geqslant p(\xv) r - \frac{h^2}{2} \left (
            M_d + 2 d M_\eta L_d + 2 d L_d \sqrt{M_\eta}
        \right ) \int_{\RR^d} \Vert \vt \Vert_2^2 K(\vt) d\vt = p(\xv) (r - h^2 \kappa).
    \end{align*}
    If $e > 0$ we estimate the probability by 1. Otherwise, we utilize Bernstein's inequality:
    \begin{align}
        \PP \left ( 
            \sum_i e_i
            -
            N e
            \geqslant N (-e)
        \right )
        & \leqslant
        \exp \left (
            - \frac{
                \frac {1}{2} N^2 e^2
            }{
                \sum_i \operatorname{Var} e_i
                +
                \frac{1}{3} h^{-d} \max_t K(t) N (-e)
            }
        \right ) \nonumber \\
        & \leqslant
        \exp \left (
            - \frac{
                \frac {1}{2} N e^2
            }{
                \EE e_1^2
                +
                \frac{1}{3} h^{-d} \max_t K(t) (-e)
            }
        \right ) \nonumber \\
        & \leqslant 
        \exp \left (
            - \frac{
                \frac {1}{2} N e^2
            }{
                (1 + r)^2 \EE K_h^2(X_i - \xv)
                +
                h^{-d} (\max_\vt K(\vt) + r) (-e)
            }
        \right ). \label{eq: bernstein exponent}
    \end{align}
    We estimate $\EE K_h^2(X_i - \xv)$ as follows:
    \begin{align*}
        \EE K_h^2(X_i - \xv) = h^{-d} \int_{\RR^d} K^2(\vt) p(\xv + h\vt) dt \leqslant h^{-d} \left (p(\xv) \Vert K \Vert_2^2 + h L_d \int_{\RR^d} \Vert \vt \Vert K^2(\vt) d \vt \right ).
    \end{align*}
  \end{proof}

\subsection{Toy Example of Classification with Reject Option}
\label{sec:reject_example}
  For illustration, we present some experimental results with a toy example, see Figure~\ref{fig: nuq vs plug-in experiments}. We consider the smoothed step function as $\eta(\xv)$, while data $X_i$ is distributed according to the normal distribution with the mean $0.5$ and variance~$0.04$ (see Figure~\ref{subfig: nuq vs plug-in data distribution}). We study the point-wise excess risk of NUQ and the plug-in rule. For the points with high covariate mass (Figures~\ref{fig: nuq vs plug-in experiments}c and~~\ref{fig: nuq vs plug-in experiments}d) methods show comparable results. NUQ is useful for points lying with low covariate density, see Figure~\ref{fig: nuq vs plug-in experiments}e. However, for points without any label noise (Figure~\ref{fig: nuq vs plug-in experiments}b) plug-in is still better as it quickly learns the correct class while NUQ is more conservative.
 
 \begin{figure}[!h]
     \centering
     \begin{subfigure}{0.45\textwidth}
        \centering
        \includegraphics[width=\textwidth]{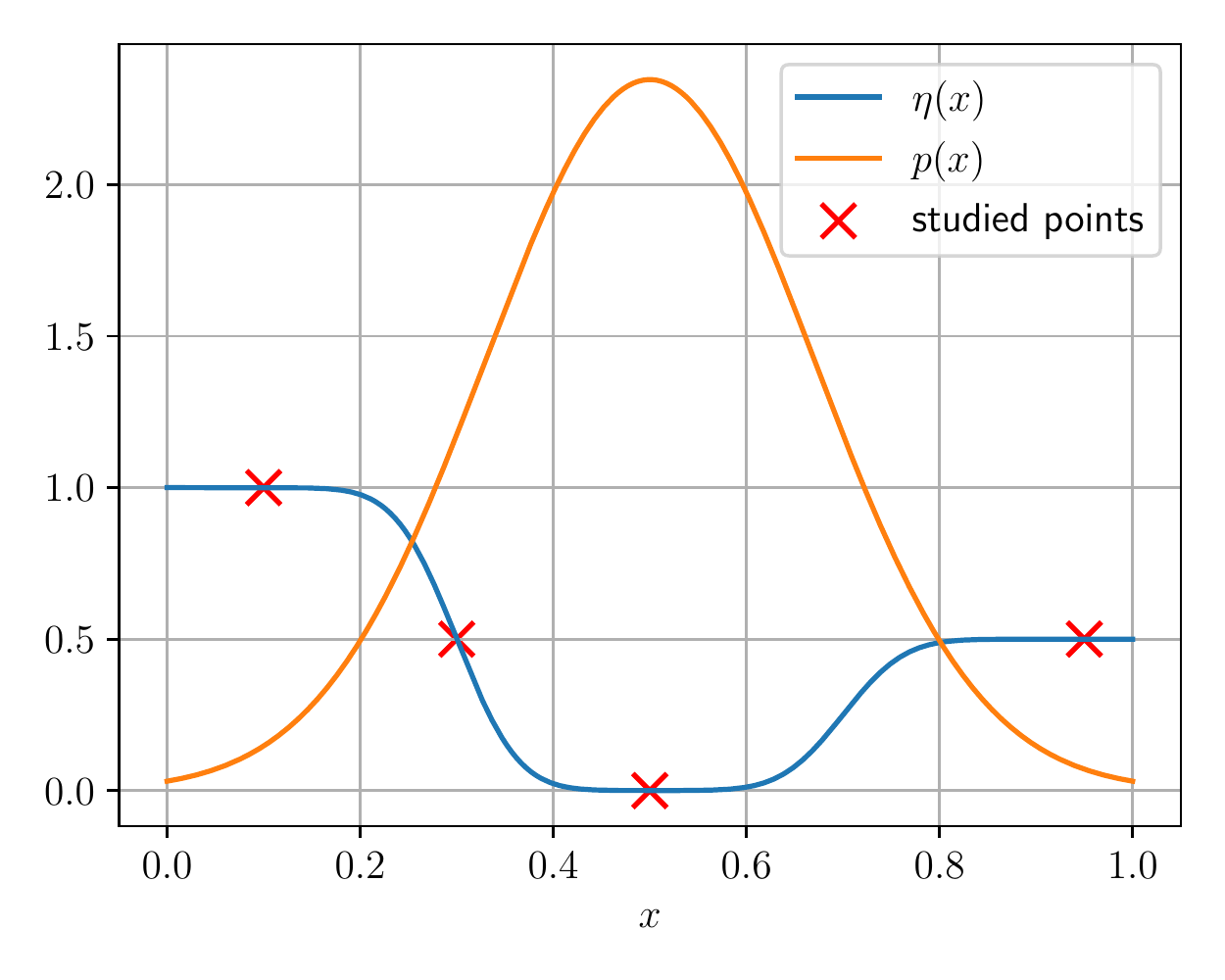}
        \caption{data}
        \label{subfig: nuq vs plug-in data distribution}
     \end{subfigure}
     \begin{subfigure}{0.48\textwidth}
        \centering
        \includegraphics[width=\textwidth]{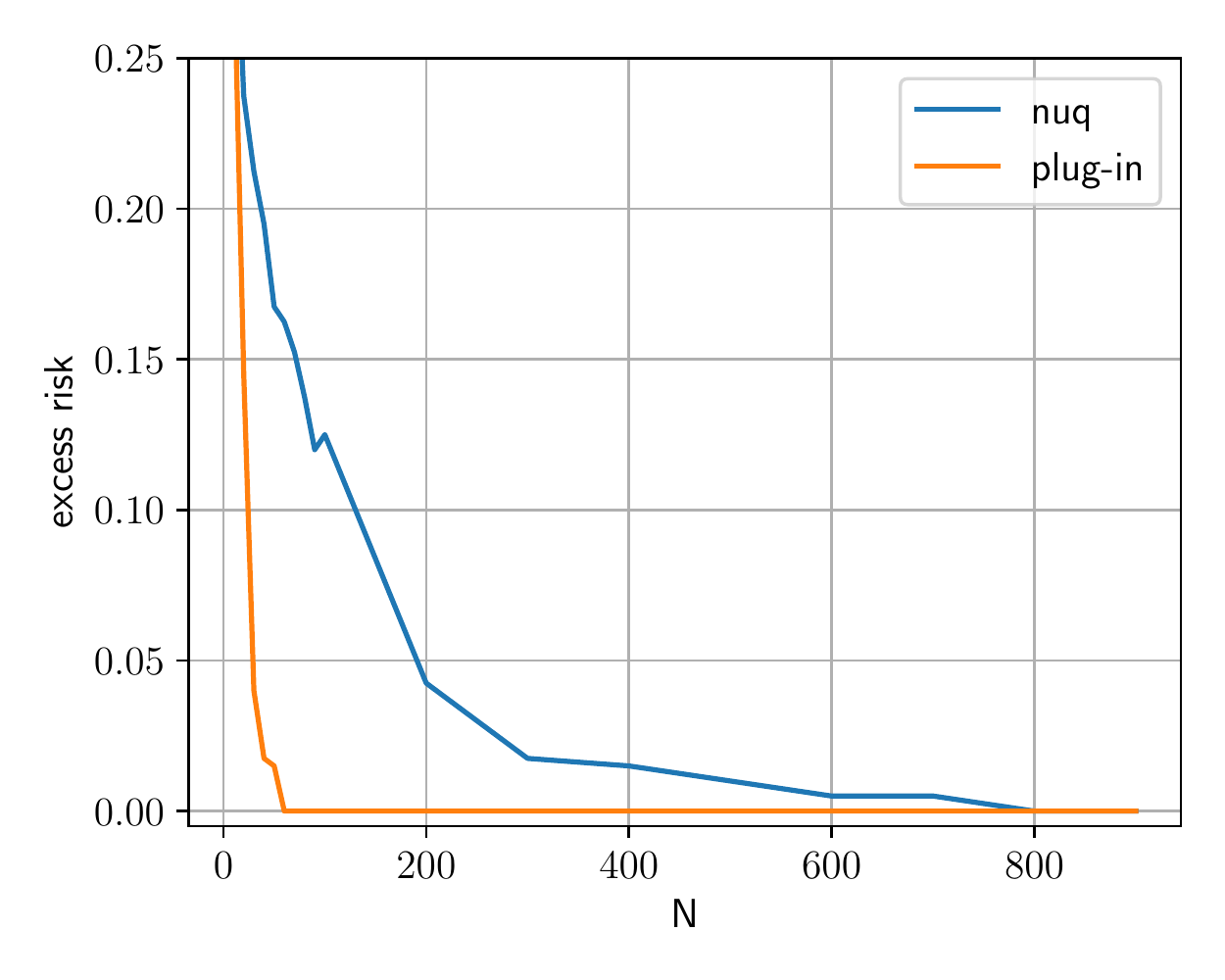}
        \caption{$x=0.1$}
     \end{subfigure}
     \begin{subfigure}{0.48\textwidth}
        \centering
         \includegraphics[width=\textwidth]{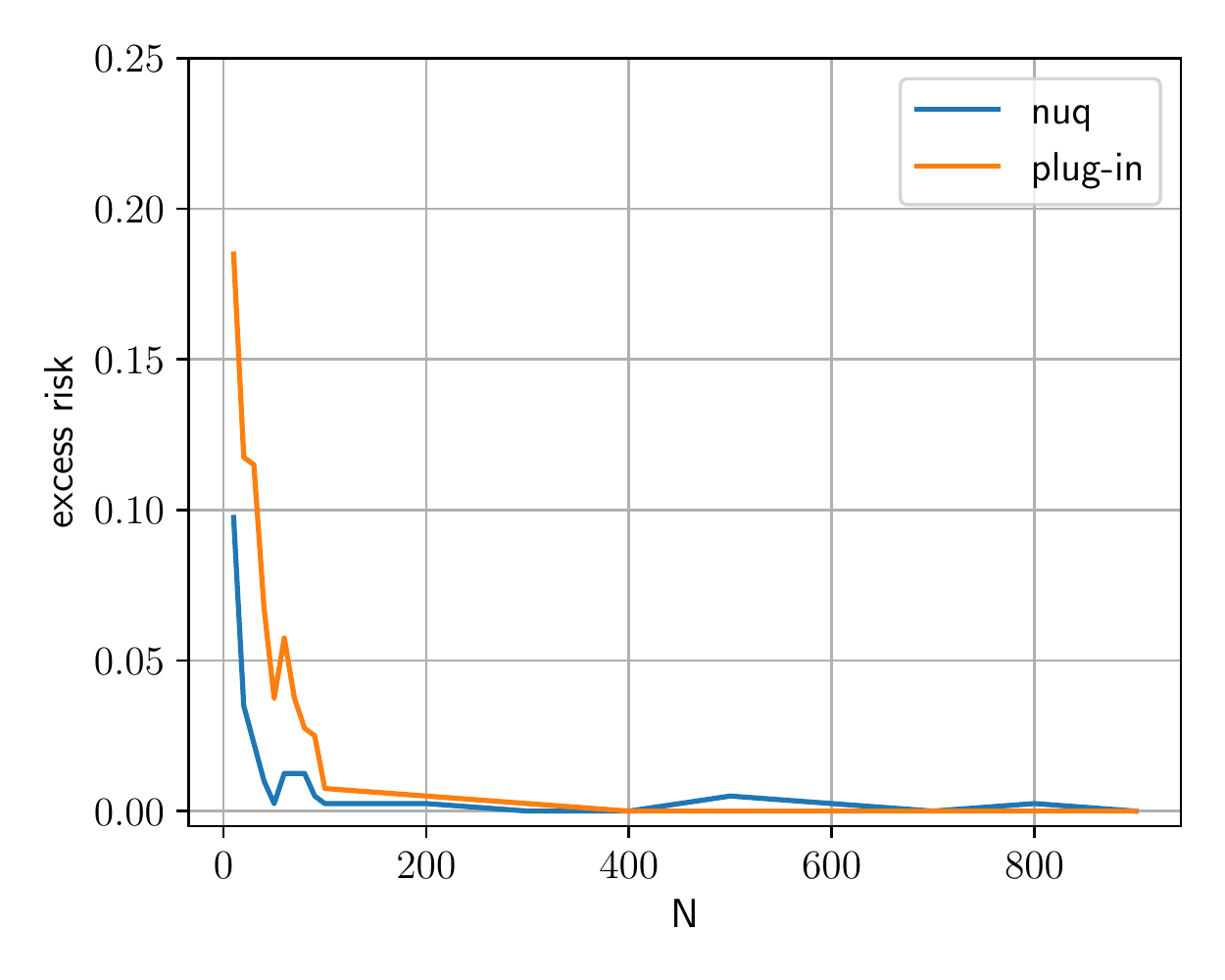}
         \caption{$x=0.3$}
     \end{subfigure}
     \begin{subfigure}{0.48\textwidth}
        \centering
         \includegraphics[width=\textwidth]{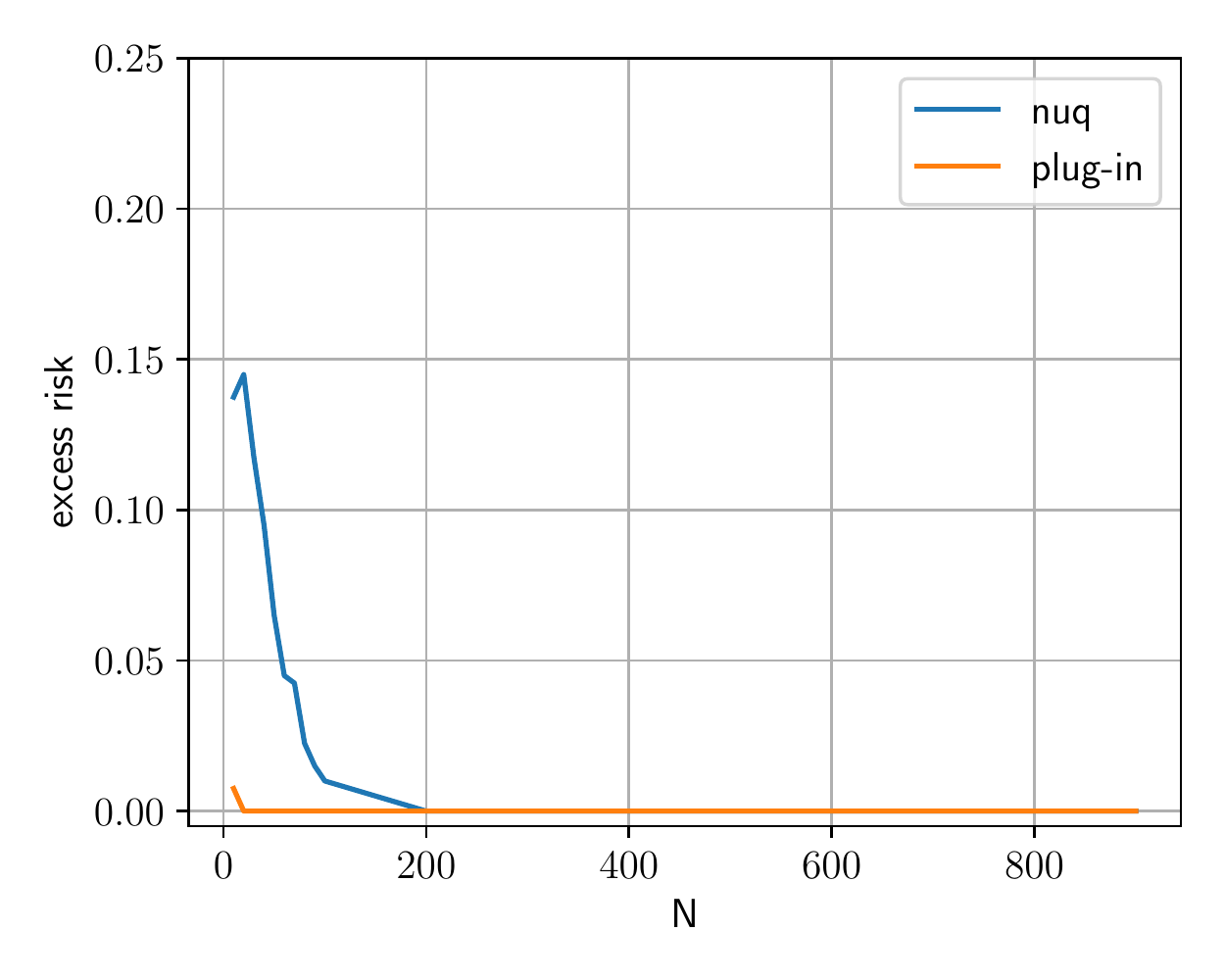}
         \caption{$x=0.5$}
     \end{subfigure}
     \begin{subfigure}{0.48\textwidth}
        \centering
         \includegraphics[width=\textwidth]{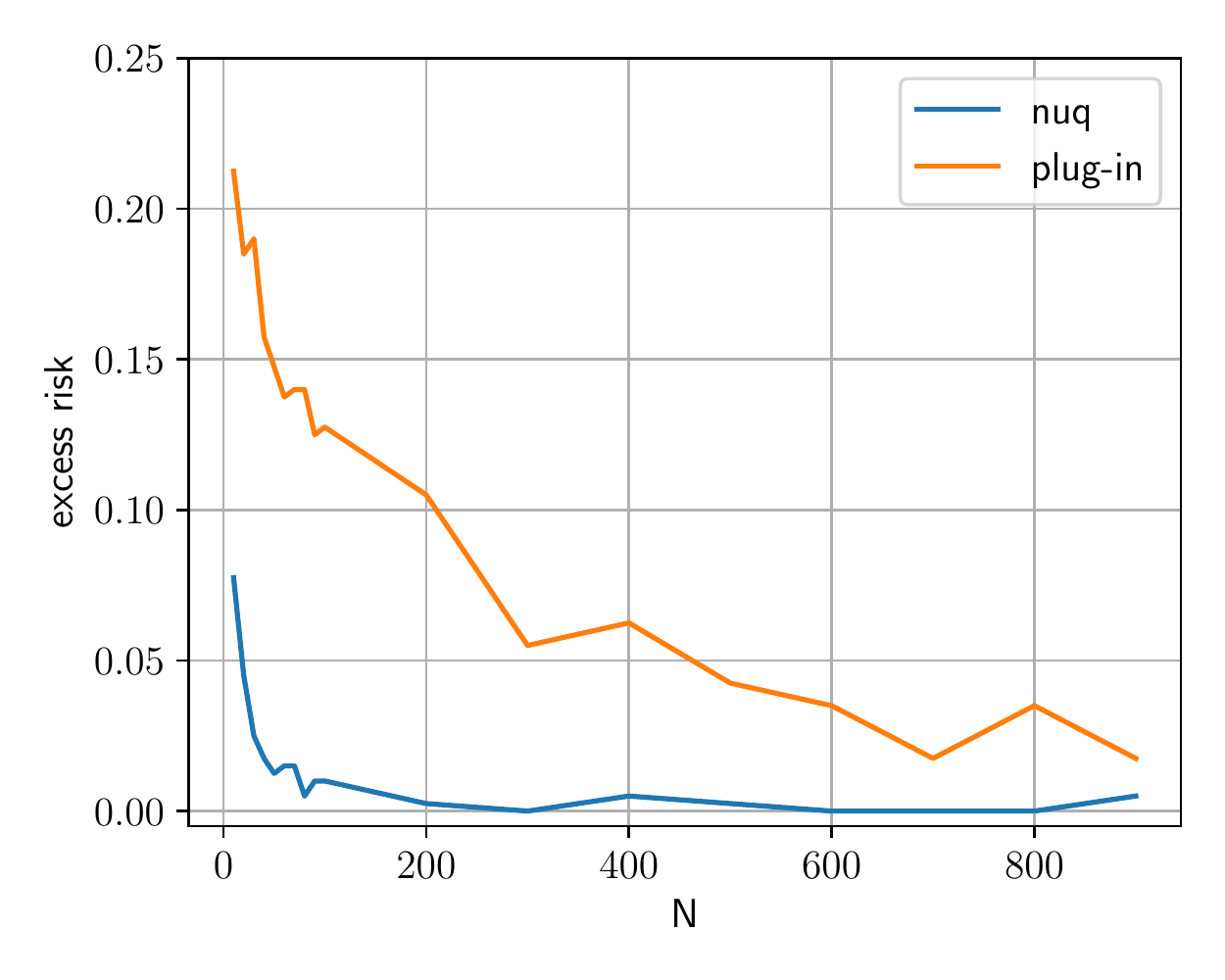}
         \caption{$x=0.95$}
     \end{subfigure}
     \caption{Excess risk at studied points marked on Figure~\ref{subfig: nuq vs plug-in data distribution}. We choose RBF-kernel and quantile $1-\beta$ to be equal~$0.95$.}
     \label{fig: nuq vs plug-in experiments}
 \end{figure}

\section{Toy Experiment on Detecting Actual Aleatoric and Epistemic Uncertainties}
\label{appendix:toy_example_actual_uncertainties}
  In this section, we conduct a toy experiment, for which we explicitly know what should be the true probability of class one, as well as the true data density.

  Let us consider a binary classification problem. Our dataset consists of 5000 samples from three different one-dimensional Gaussians, located to mix classes. Colors denote class label: red -- 0; green -- 1, see Figure~\ref{fig:toy_example_appendix_gaussians}.
  For this particular data model, we can compute the conditional probability of a data point $\xv$ belongs to class 1: $\eta(\xv) = p(Y = 1 \mid X = \xv)$. We build an estimate of this conditional using our Nadaraya-Watson kernel-based approach $\hat{\eta}(\xv)$.
  Further, we generate a uniform grid, and for each point of this grid, using our method, we can upper bound difference between the true conditional and our approximation. This difference, according to our approach, is considered as an epistemic uncertainty (see Figure~\ref{fig:toy_example_appendix_epistemic}). The green line in this plot denotes an absolute difference between the true conditional and our approximation. The red line denotes our epistemic uncertainty. From the picture, we can see that our epistemic uncertainty approximates the probabilities difference well.
  Next, we show how our aleatoric uncertainty relates to the true class 1 conditional probability. In the Figure~\ref{fig:toy_example_appendix_aleatoric} we show true conditional distribution $\eta(\xv)$ (orange line) and our approximation of the aleatoric uncertainty $\min\{\hat{\eta}(\xv), 1 - \hat{\eta}(\xv)\}$ (red line). We can see that our approximation is high exactly in the same regions where the true conditional is absolutely unsure about the class label.

  \begin{figure*}[!ht]
    \begin{subfigure}{0.3\textwidth}
      \includegraphics[width=\linewidth]{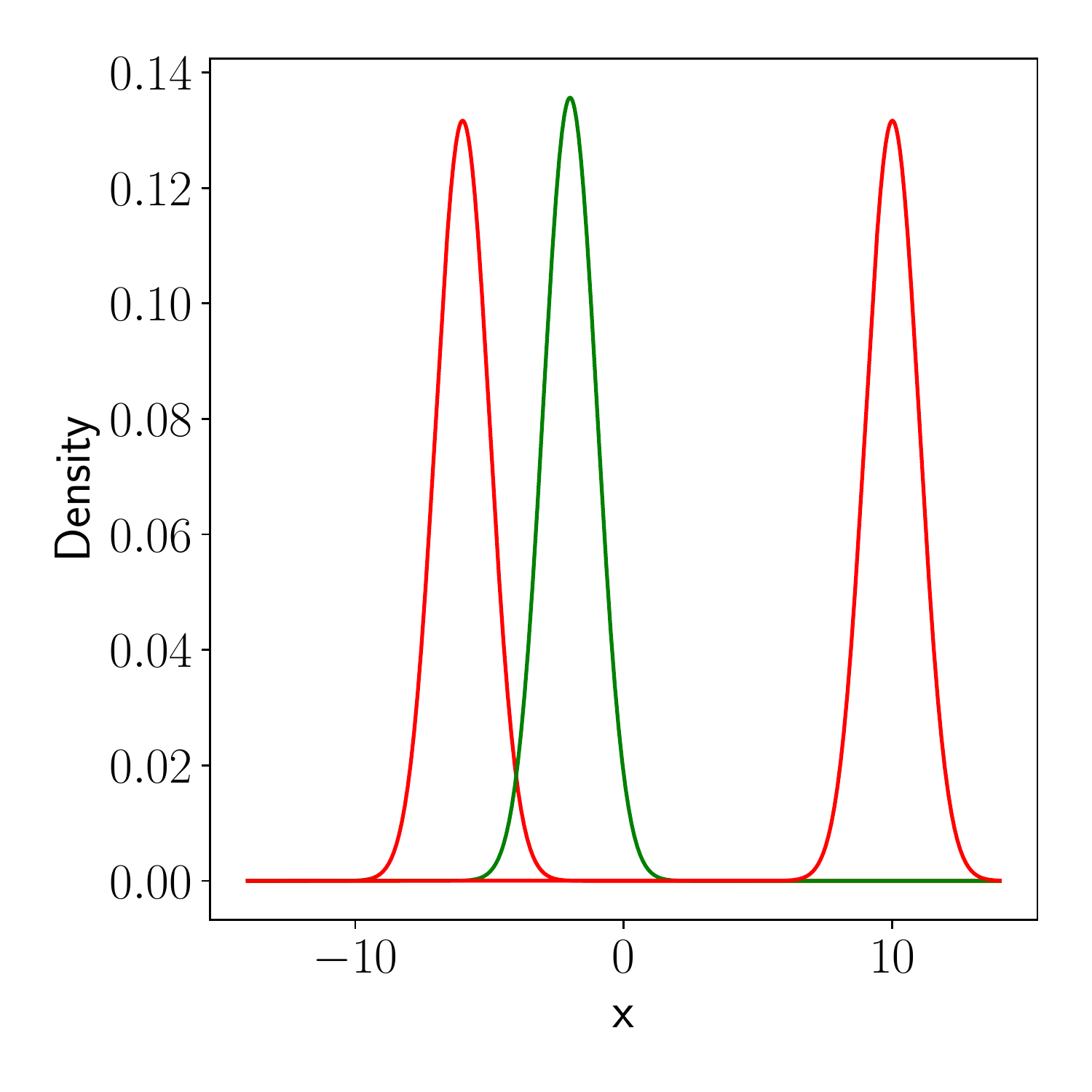}
      \caption{}
      \label{fig:toy_example_appendix_gaussians}
    \end{subfigure}%
    \hspace*{\fill}   
    \begin{subfigure}{0.3\textwidth}
      \includegraphics[width=\linewidth]{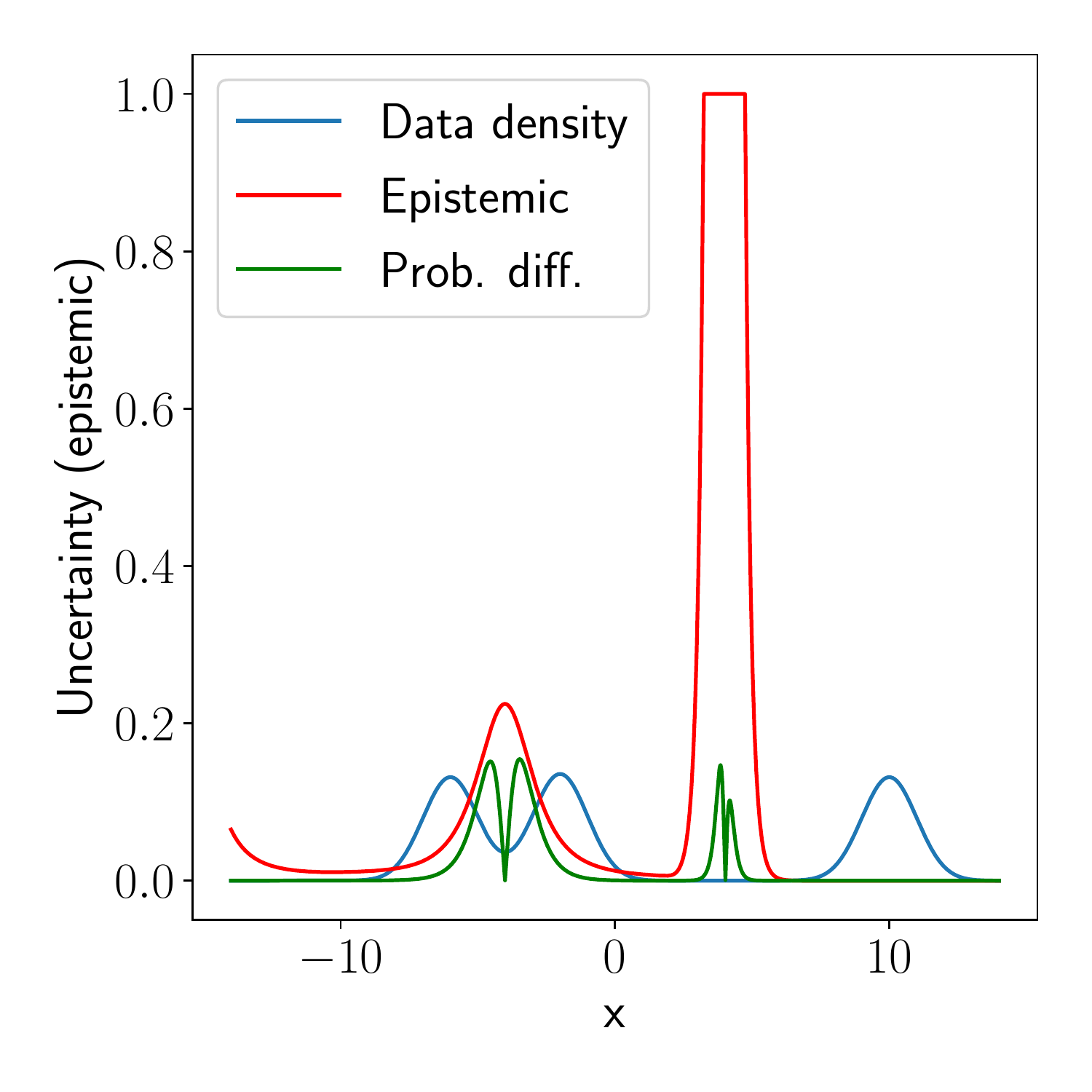}
      \caption{}
      \label{fig:toy_example_appendix_epistemic}
    \end{subfigure}%
    \hspace*{\fill}   
    \begin{subfigure}{0.3\textwidth}
        \includegraphics[width=\linewidth]{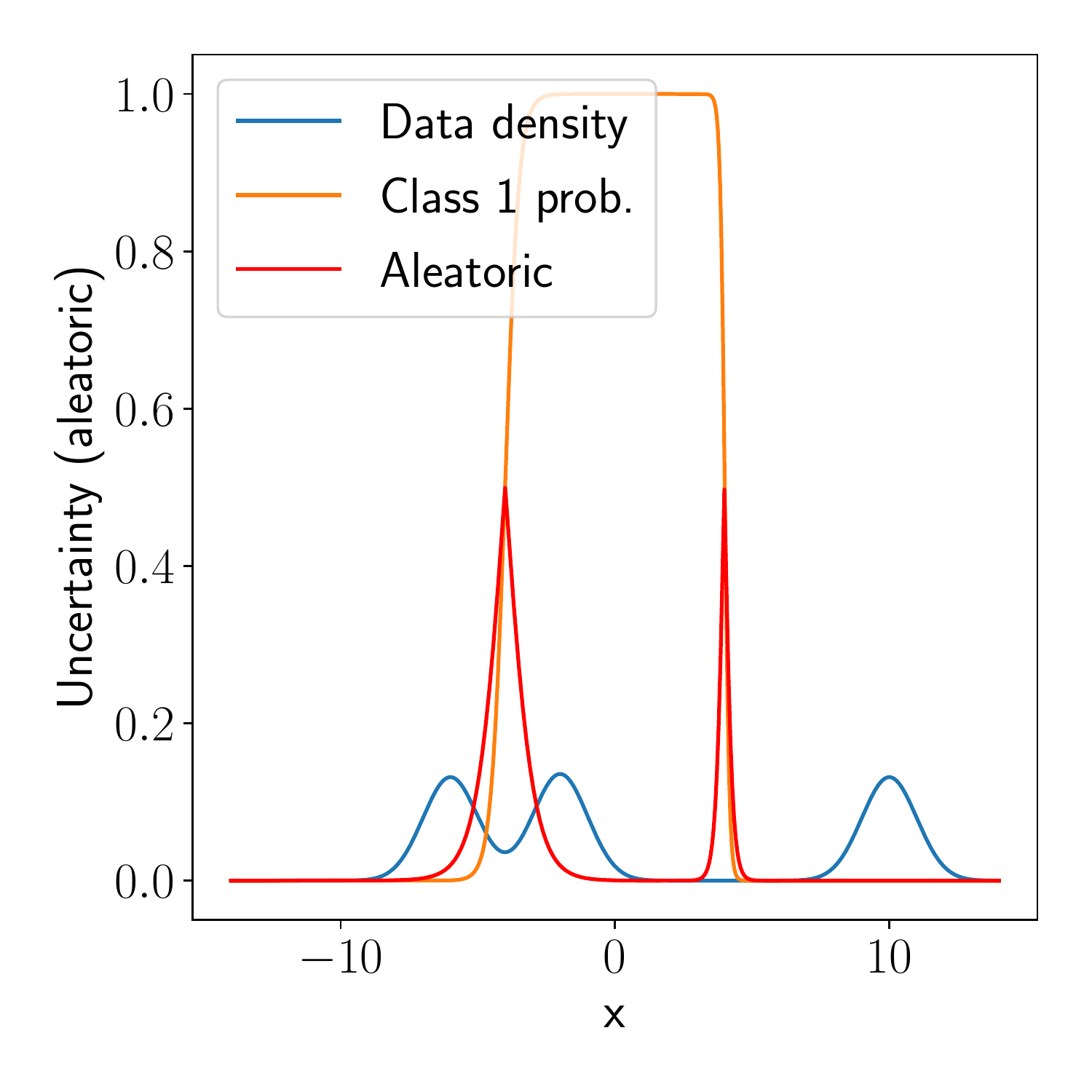} 
        \caption{}
        \label{fig:toy_example_appendix_aleatoric}
    \end{subfigure}

    \caption{(a): Mixture of one dimensional Gaussians we took samples from. Color denotes class label. (b): Epistemic uncertainty our model assigns to data points. Note that the uncertainty is quite high in the region of 3-5. For the sake of visualization, we clipped the maximum value to be 1. (c): Our approximation of aleatoric uncertainty is built along with the true conditional probability.}
  \label{fig:toy_example_appendix}
  \end{figure*}

\section{Additional Experiments on Image Datasets}
\label{sec:additional_image}

\subsection{Rotated MNIST}
\label{appendix:rotated_mnist}

  \begin{figure*}[!t]
    \centering
    \begin{subfigure}{0.33\linewidth}
      \centering
      \includegraphics[width=\textwidth]{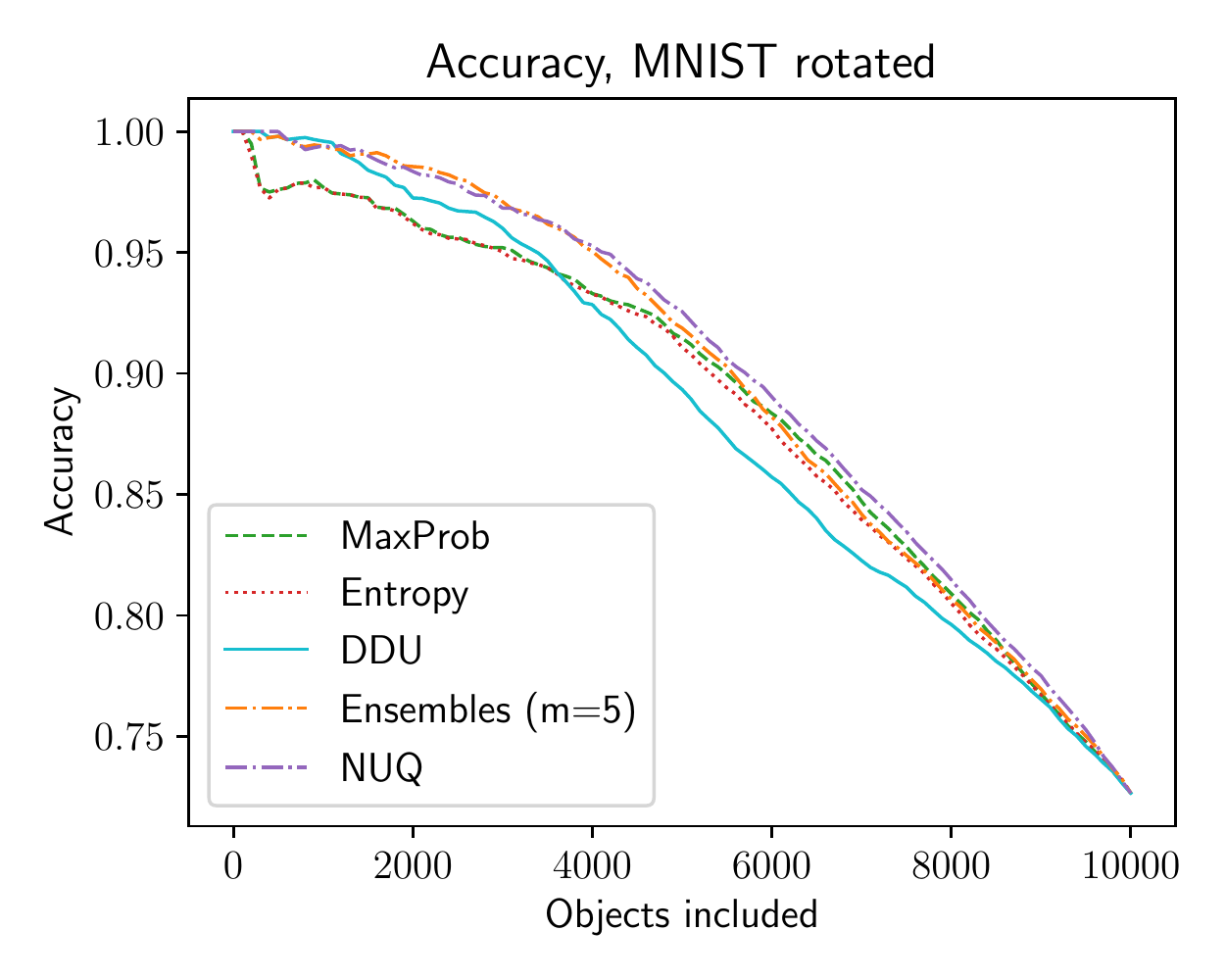}
    \vspace{-20pt}
    \caption{}
    \label{fig:mnist_accuracy_ensembles}
    \end{subfigure}%
    \hspace*{\fill}   
    \begin{subfigure}{0.33\linewidth}
      \centering
      \includegraphics[width=\textwidth]{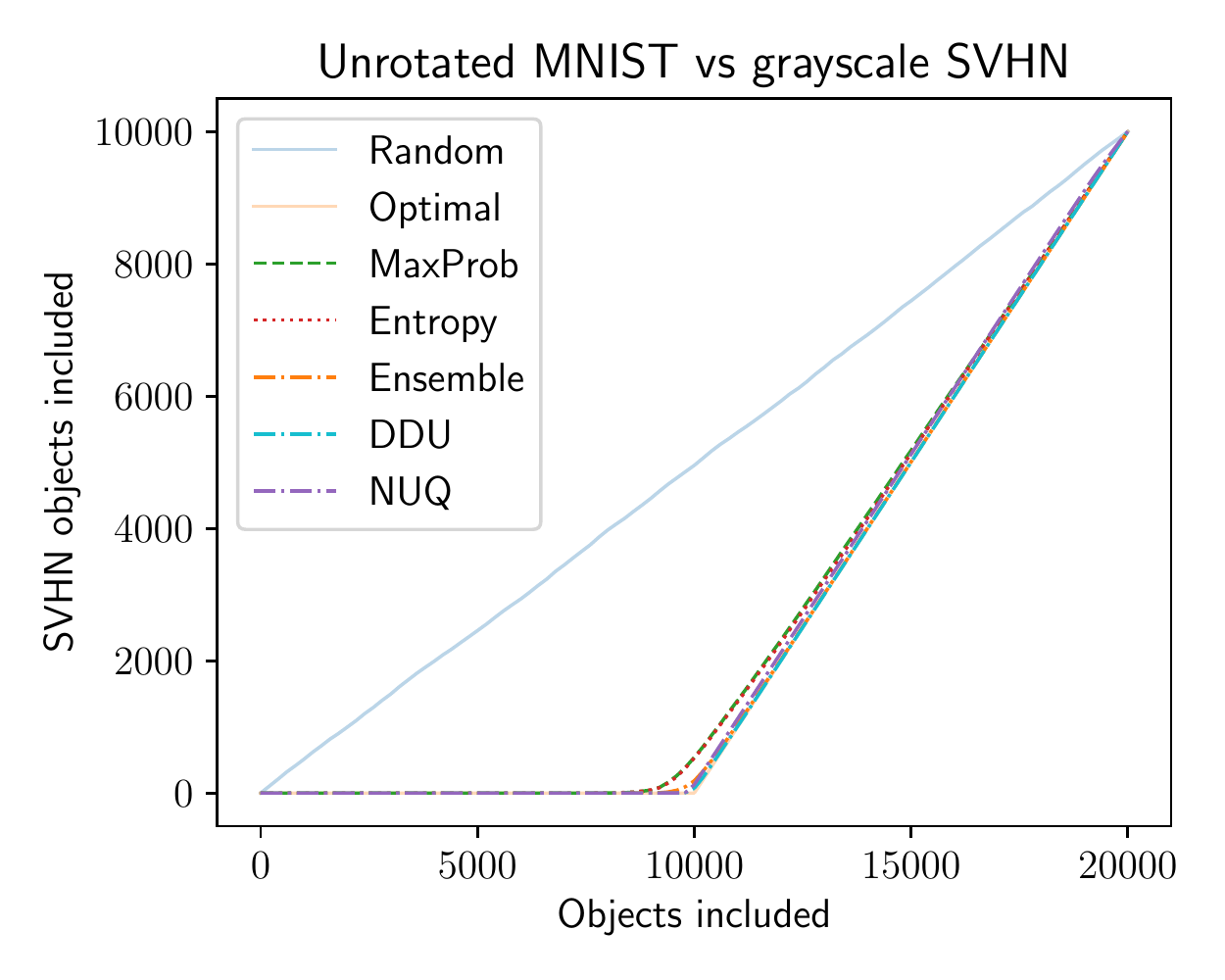}
    \vspace{-20pt}
     \caption{}
     \label{fig:mnist_unrotated_ensembles}
    \end{subfigure}%
    \hspace*{\fill}   
    \begin{subfigure}{0.33\linewidth}
    \centering
        \includegraphics[width=\textwidth]{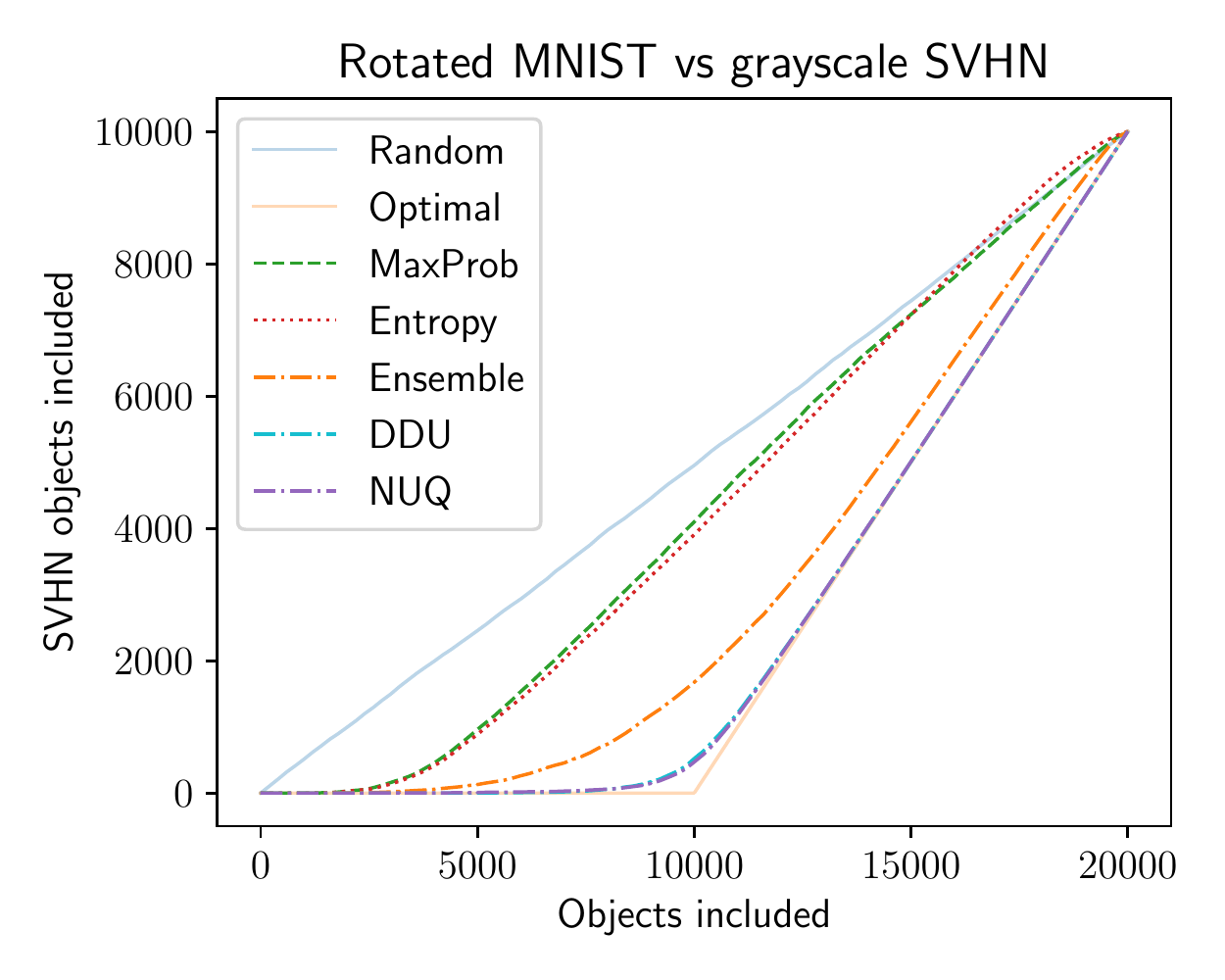}
    \vspace{-20pt}
    \caption{}
    \label{fig:mnist_rotated_ensembles}
    \end{subfigure}
    \vspace{-10pt}
    \caption{(a) Accuracy for images sorted by uncertainty on rotated MNIST. (b) Number of SVHN images included into consideration vs unrotated MNIST. In this simpler version, even the basic entropy manages to achieve a good result. (c) More challenging task -- number of SVHN images included into consideration vs rotated MNIST. NUQ still distinguish between datasets with close to an optimal solution.}
  \label{fig:mnist_ensembles}
  \end{figure*}

  The first example is classification under covariate shift on MNIST~\citep{lecun2010mnist}. We train a small convolutional neural network with three convolution layers, see Section~\ref{sec:architechtures}. This is the base model we use to obtain logits for the input objects. We consider a particular instance of distribution shift for evaluation by using a test set of MNIST images rotated at a random angle in the range from 30 to 45 degrees. This set contains 10000 images. The range of angles reassures that the data does not look like the original MNIST data, though many resulting pictures can still remind the ones from training. 

  This experiment considers two simple baselines: MaxProb and Entropy-based uncertainty estimates of the base model. We compliment them with two more challenging baselines: Deep ensemble and DDU~\citep{mukhoti2021deterministic} (as the most successful among deterministic methods). We compare them all with NUQ-based estimate of epistemic uncertainty \(\hat{\textbf{U}}_e(\xv)\).
  To evaluate the quality of the uncertainty estimates, we sort the objects from the test dataset in order of ascending uncertainties. 
  
  Then we obtain the model's predictions and plot how accuracy changes with the number of objects taken into consideration; see Figure~\ref{fig:mnist_accuracy_ensembles}. The valid uncertainty estimation method is expected to produce the plot with accuracy decreasing when more objects are taken into account. Moreover, the higher is the plot, the better is the quality of the corresponding uncertainty estimate.
  
  We see that the plots for all the considered methods show the expected trend, while uncertainties obtained by NUQ are more reliable. NUQ distinctly outperforms DDU and has comparable performance with deep ensembles.

\subsection{MNIST vs. SVHN}
\label{appendix:mnist_svhn}
  To make the problem more challenging, we consider the SVHN dataset~\citep{svhn}, convert it to grayscale, and resize it to the shape of 28x28. The size of this additional SVHN-based dataset is again 10000. We take the base model trained on MNIST from the previous section and consider the problem of OOD detection with SVHN being the OOD dataset. As in-distribution data, we first consider the test set of 10000 MNIST images. We again compute uncertainties for each object on this concatenated dataset (10000 of MNIST and 10000 of SVHN) and sort them by their uncertainties in ascending order. 
  The goal for uncertainty quantification methods is to sort all objects so that all MNIST images have lower uncertainty values than SVHN ones.
  Note that the optimal decision rule in this case is a ReLU-shaped function, with a break at point 10000.
  
  For NUQ we use epistemic uncertainty \(\hat{\textbf{U}}_e(\xv)\) in this experiment. In Figure~\ref{fig:mnist_unrotated_ensembles} we plot the number of objects included from the SVHN dataset. It is seen that Ensembles, NUQ, and DDU outperform MaxProb and Entropy. All of these three methods perform almost as an optimal decision.

  Next, we consider more challenging problem of separation between rotated MNIST (see Section~\ref{appendix:rotated_mnist}) and SVHN. We expect it is harder to distinguish between them as rotated MNIST images differ from those used to train the network. However, Figure~\ref{fig:mnist_rotated_ensembles} shows that NUQ still does a very good job and allows for almost perfect separation. 
  Interestingly, this example shows that ensembles are worse than DDU and NUQ. The performance of the last two is visually almost identical.

\subsection{Performance Difference on CIFAR-100 and ImageNet}
\label{sec:imagenet_performance}
  \begin{figure*}[t!]
    \begin{subfigure}{0.48\linewidth}
      \includegraphics[width=\textwidth]{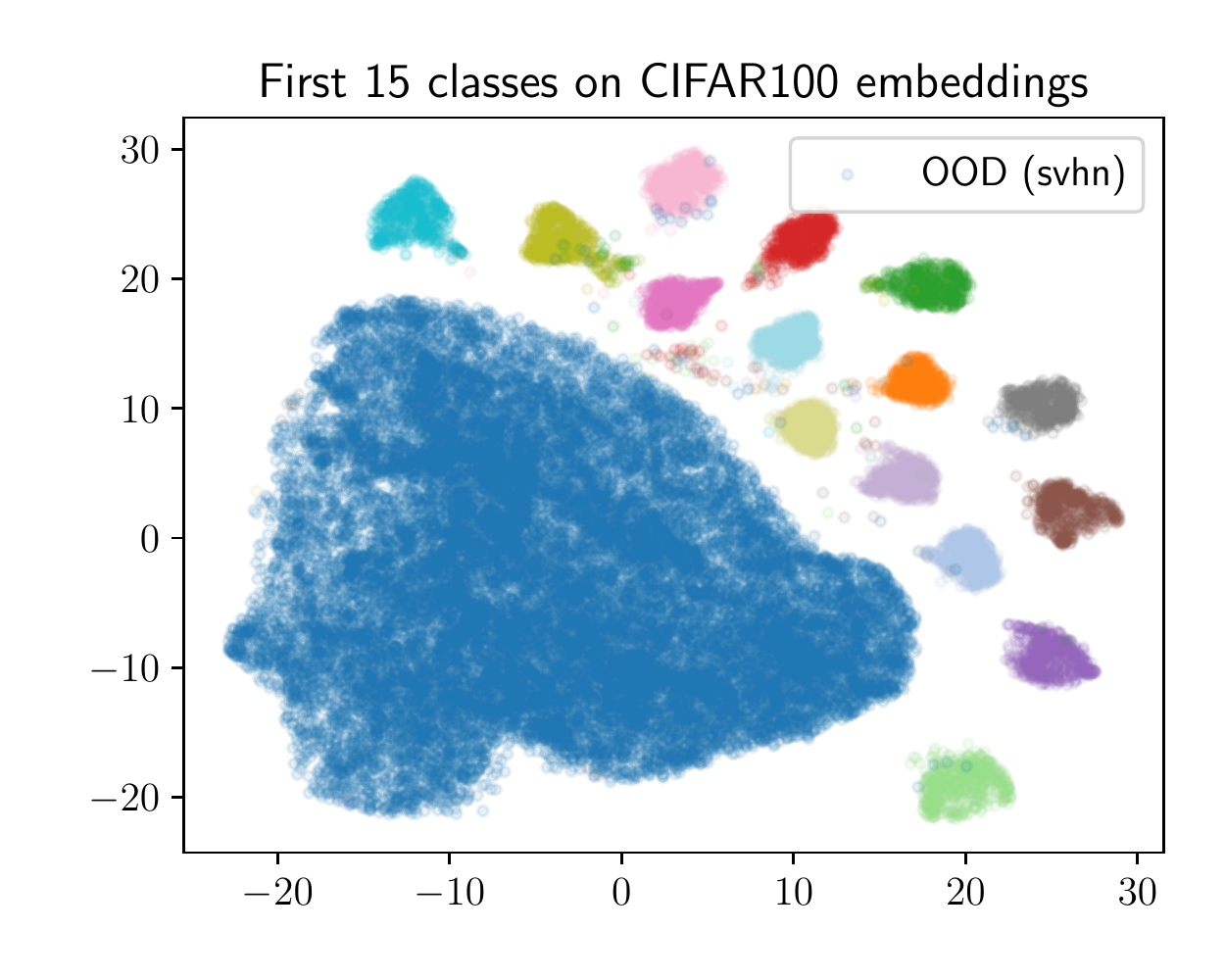}
      \caption{}
    \end{subfigure}%
    \hspace*{\fill}   
    \begin{subfigure}{0.48\linewidth}
      \includegraphics[width=\textwidth]{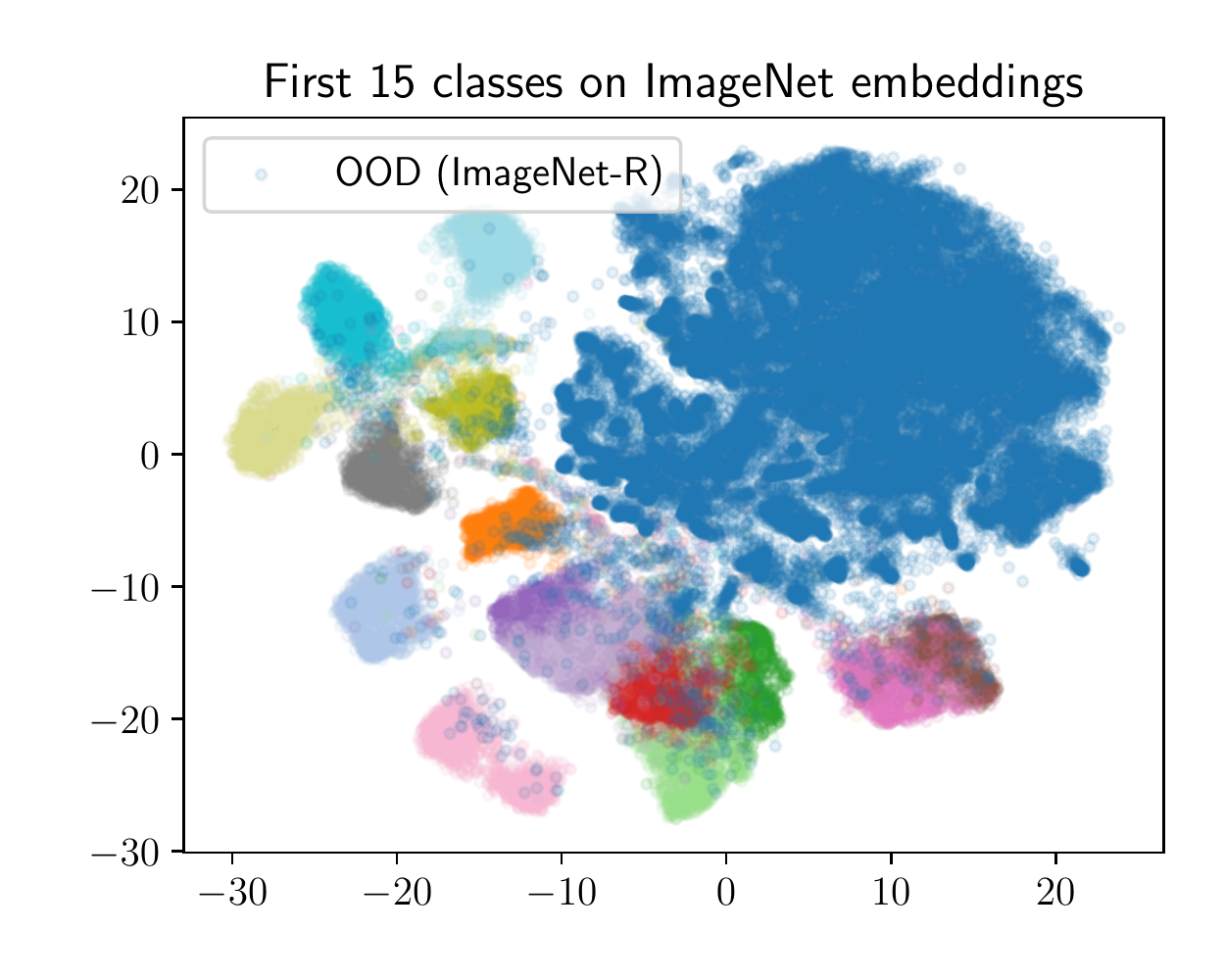}
      \caption{}
    \end{subfigure}%

    \caption{Embeddings space visualization for CIFAR (a) and ImageNet (b). We present the embeddings for first 15 classes on test dataset (in various colors) and all the embeddings for out-of-distribution datasets (in blue). The OOD dataset for CIFAR is SVHN and for ImageNet it is ImageNet-R.}
  \label{fig:ncvis}
  \end{figure*}

  One of the things that caught our attention is the superior performance of NUQ on ImageNet, given that it has very similar results with DDU on CIFAR-100. One of our hypotheses was that embeddings have a more complex and multi-modal distribution for a more complex ImageNet dataset compared to simpler CIFAR-100. To check this, we made t-SNE based embeddings of out-of-distribution and test data (see Figure~\ref{fig:ncvis}). While we understand the limitation of this type of visualization, the ImageNet embeddings appear to be much more irregular compared to well-shaped clusters for CIFAR-100. Because of that, the modeling of the class with a single Gaussian (in DDU) might not work very well for ImageNet. NUQ approach performs the modeling of distributions in a much more flexible way, which is beneficial for approximating complex distributions. We hypothesize that this is the reason for the NUQ's superior performance.

\subsection{Ablation Study on CIFAR-100}
\label{sec:ablation}
  We need an estimator of marginal density \(p(\xv)\) for our method, and there exist different options. We consider kernel method with RBF kernel and logistic kernel and Gaussian mixtures model (GMM). There is also a question about which embeddings to use -- the DDU paper~\cite{mukhoti2021deterministic} proposes to take the features from the second last layer; while logits from the last layer represent a reasonable choice as well. To validate the options, we conducted some ablation study on out-of-distribution detection for the CIFAR-100 dataset, similar to the main experiment.
  
  First, we compare the DDU and NUQ on embeddings from the  second last and last layers (Table~\ref{tab:ablation_layer}) on SVHN, LSUN, and Smooth datasets. Secondly, we compare the NUQ method on RBF, logistic kernel, and GMM for both last and penultimate layer embeddings (Table~\ref{tab:ablation_density}). As we can see from the tables, the optimal option is GMM density on the penultimate layer.

  \begin{table}[ht]
    \centering
    \begin{tabular}{|c|c|c|c|c|}
      \hline
             & DDU, features & DDU, logits & NUQ, features & NUQ,  logits \\ \hline
      SVHN   & 89.6±1.6      & 88.2±0.6    & 89.7±1.6       &   88.2±0.6  \\ \hline
      LSUN   & 92.1±0.6      & 90.9±0.4    & 92.3±0.6      &   90.9±0.4   \\ \hline
      Smooth & 97.1±3.1      & 96.3±4.1    & 96.8±3.8      &    96.2±4.1  \\ \hline
    \end{tabular}
    \caption{Comparison of DDU and NUQ predictions on different type of embeddings: logits (last layer) and features (second last layer).}
  \label{tab:ablation_layer}
  \end{table}
  
  \begin{table}[ht]
    \centering
    \begin{tabular}{|c|c|c|c|c|c|c|}
      \hline
             & RBF, f & RBF, l & Logistic, f & Logistic, l & GMM, f & GMM, l \\
             \hline
      SVHN   & 84.4±3.2      & 84.7±3.1    & 84.8±2.9          & 86.7±2.6         & 89.7±1.6      & 88.2±0.6    \\
      \hline
      LSUN   & 88.2±1.0      & 88.1±0.8    & 88.5±4.0          & 90.3±1.0         & 92.3±0.6      & 90.9±0.4    \\
      \hline
      Smooth & 85.5±6.8      & 87.7±9.4    & 86.2±8.2          & 90.8±7.8         & 96.8±3.8      & 96.2±4.1    \\
      \hline
    \end{tabular}
    \caption{Probability density methods comparison -- radial basis function kernel (RBF), logistic kernel, Gaussian mixture models (GMM). 'f' (features) marks models, built on embeddings from a second last layer and 'l' (logits) is for the ones built on embeddings from a last layer.}
  \label{tab:ablation_density}
  \end{table}

  Kernel-based methods rely on the ``reasonable'' geometry of the embedding space, meaning that embeddings of similar images should be not too far and different images should not collapse into a single point. Our motivation to use spectral normalization during training is to make the embedding space smooth with respect to input images. 
  We have conducted an extra ablation study, comparing the result for feature extractors with and without spectral normalization, see Table~\ref{tab:spectral}. 
  The results confirm our hypothesis, as the spectral-normalized version performs better, though the NUQ beats the baseline even without applying the modification to the ResNet training. We also show here that entropy performs better than maximum probability as an uncertainty measure.

  \begin{table}[ht!]
    \centering
    \begin{tabular}{|c|c|c|c|c|}
      \hline
      OOD dataset  & DDU & DDU (spectral) & NUQ     & NUQ (spectral)     \\ \hline
      SVHN        & 88.7±4.3 & 89.6±1.6 & 86.8±1.2 & 89.7±1.6 \\ \hline
      LSUN        & 91.3±0.9 & 92.1±0.6 & 91.2±1.1 & 92.3±0.6 \\ \hline
      Smooth     & 95.7±1.2 & 97.1±3.1 & 95.5±1.3 & 96.8±3.8 \\ \hline
    \end{tabular}
    \caption{Comparing the influence of spectral normalization on the model performance for OOD detection, ROC-AUC.}
  \label{tab:spectral}
  \end{table}

\subsection{Sensitivity to Ensemble Size}
\label{sec:ensemble:size}
  In this section, we explore the ensemble model performance regarding the number of models. From Table~\ref{tab:ensemble_cifar}, we can see that 5 models is a reasonable amount number for CIFAR-100. For the ImageNet dataset (Table~\ref{tab:ensemble_imagenet}), increasing the number of models gives a steady gain, but still 5 models provide the gain within the error margin.
 
  \begin{table}[ht!]
    \centering
    \begin{tabular}{|l|l|l|l|l|l|}
      \hline
      OOD dataset & \multicolumn{1}{r|}{2} & \multicolumn{1}{r|}{3} & \multicolumn{1}{r|}{5} & \multicolumn{1}{r|}{7} & \multicolumn{1}{r|}{10} \\
      \hline
      SVHN        & 82.3 ± 1.3               & 82.4 ± 0.7               & 82.9 ± 0.9               & 82.7 ± 0.7               & 82.6 ± 0.5                \\
      \hline
      LSUN        & 85.1 ± 0.6               & 85.9 ± 0.6               & 86.5 ± 0.8               & 87.1 ± 0.6               & 87.1 ± 0.6                \\
      \hline
      Smooth      & 83.7 ± 6.5               & 83.4 ± 3.2               & 83.7 ± 1.2               & 83.3 ± 1.5               & 83.2 ± 1.6                \\
      \hline
    \end{tabular}
    \caption{Ablation study for ensemble size on CIFAR100 in-distribution dataset for OOD detection. The number of models above 5 give almost no gain (even some loss sometimes) within an error margin.}
  \label{tab:ensemble_cifar}
  \end{table}

  \begin{table}[ht!]
    \centering
    \begin{tabular}{|l|r|r|r|r|r|}
      \hline
      OOD dataset & 2    & 3    & 5    & 7    & 9    \\
      \hline
      ImageNet-O  & 49.8 & 50.8 & 51.9 & 52.1 & 52.6 \\
      \hline
      ImageNet-R  & 84.7 & 85.3 & 85.8 & 86   & 86.1 \\
      \hline
    \end{tabular}
    \caption{Ablation study for ensemble size on ImageNet out-of-distribution detection task.
    }
  \label{tab:ensemble_imagenet}
  \end{table}

\subsection{Additional Experiments with DUQ}
\label{sec:duq_additional_experiments}
  DUQ~\cite{van2020uncertainty} is one of the baselines in the paper. We chose it as similarly to our method it requires only a single forward pass and uses post-processing for embeddings. In the original article, the method shows a good result on a relatively small dataset CIFAR-10 with a ResNet18 model. We tried to train it on CIFAR-100 and ImageNet with a larger model, ResNet50, but there were difficulties with training process as method failed to converge to the model of reasonable quality. We believe the cause of the problem was gradient penalty as a regularization method, and we switched to spectral normalization instead. DUQ training requires a balance between hyperparameters such as length scale, momentum, and learning rate, so we initially trained the model with a pre-trained feature extractor. With careful selection of parameters, we managed to train end-to-end as well, and we observed improvement in all experiments (see Tables~\ref{tab:duq_cifar} and~\ref{tab:duq}), although methods like DDU and NUQ had more stable training in our experience and a better final result.

  \begin{table}[ht!]
    \centering
    \resizebox{\textwidth}{!}{
    \begin{tabular}{|l|l|l|l|l|l|l|}
      \hline
      OOD dataset & Ensembles & TTA       & DDU               & NUQ               & DUQ Head & DUQ end-to-end \\
      \hline
      SVHN        & 82.9 ± 0.9  & 81.6 ± 1.2  & 89.6 ± 1.6    & \textbf{89.7 ± 1.6} & 83.6 ± 4.0  & 88.7 ± 6.3 \\
      \hline
      LSUN        & 86.5 ± 0.8  & 85.0 ± 2.7  & 92.1 ± 0.6    & \textbf{92.3 ± 0.6} & 87.2 ± 2.1 & 90.8 ± 6.7 \\
      \hline
      Smooth      & 83.7 ± 1.2  & 73.2 ± 10.8 & \textbf{97.1 ± 3.1} & 96.8 ± 3.8    & 83.8 ± 11.4 & 91.1 ± 8.4 \\
      \hline
    \end{tabular}
    }
      \caption{Performance on OOD detection for CIFAR-100}
      \label{tab:duq_cifar}
  \end{table}
  
  \begin{table}[ht!]
  \centering
    \begin{tabular}{|l|r|r|r|r|r|r|r|}
      \hline
      OOD dataset & \multicolumn{1}{l|}{Ensemble} & \multicolumn{1}{l|}{DDU*} & \multicolumn{1}{l|}{NUQ (spectral)*} & \multicolumn{1}{l|}{DUQ Head}  & \multicolumn{1}{l|}{DUQ end-to-end}\\
      \hline
      ImageNet-R                                     & 84.4                          & 74.2                      & \textbf{99.5}                                 & 57.4           & 73.3              \\
      \hline
      ImageNet-O  &  51.9                          & 74.1                      & \textbf{82.4}                                 & 67.3             &71.4            \\
      \hline
    \end{tabular}
    \caption{Performance on OOD detection on ImageNet}
  \label{tab:duq}
  \end{table}

\subsection{Computational Costs}
\label{appendix:computation_costs}
  We benchmarked the training and inference time overhead for our ImageNet experiments. Base model training time corresponds to training of a ResNet-50 feature extractor on a corresponding dataset. The NUQ training time is measured on a full training dataset and includes the parameter search time. The inference time is measured on a full test dataset. Results are presented in Table~\ref{tab:nuq_speed}, where we can see that training overhead is less than 1\% and inference overhead is less than 10\%, while for ensembles, the overhead would be n-fold.

  \begin{table}[h]
    \centering
    \begin{tabular}{|l|c|c|c|}
      \hline
      & Base model & NUQ & Overhead \\
      \hline
      CIFAR-100 training time & 11.5 hours &  42.6 seconds  & 0.10\% \\
      \hline
      CIFAR-100 inference time &  81 seconds  & 1.8 seconds & 2.25\% \\
      \hline
      ImageNet training time & 49 hours & 28 minutes &  0.96\% \\
      \hline
      ImageNet inference time & 225 seconds & 21 seconds &  9.30\% \\
      \hline
    \end{tabular}
    \vspace{0.2cm}
    \caption{Execution time overhead for NUQ on CIFAR-100 and ImageNet datasets}
  \label{tab:nuq_speed}
 \end{table}

\subsection{Choice of Uncertainty Measure for Ensembles}
\label{appendix:choice_of_uncertainty_measure_for_ensemble}
  In the case of ensembles, we have multiple predictions for a single point. It also applies to other methods like test-time augmentation and Monte-Carlo dropout. To get a single value of uncertainty, we must choose the particular uncertainty measure based on . We have tried a few options. Namely, we used mean maximum probability, a standard deviation of predicted probabilities between different runs, mutual information, and entropy. However, in the manuscript, we decided to focus on only one approach (entropy) to reduce clutter, while in our experiments, it performed better on average. We present the results for test time augmentation and ensembles on ImageNet here, see Table~\ref{tab:reduction_study}.

  \begin{table}[]
        \begin{tabular}{|l|lrrr|lrrr|}
        \hline
                   & \multicolumn{4}{l|}{TTA}                                                                                                   & \multicolumn{4}{l|}{Ensemble}                                                                                              \\ \hline
                   & \multicolumn{1}{l|}{MaxProb} & \multicolumn{1}{l|}{Entropy} & \multicolumn{1}{l|}{Std}  & \multicolumn{1}{l|}{MI} & \multicolumn{1}{l|}{MaxProb} & \multicolumn{1}{l|}{Entropy} & \multicolumn{1}{l|}{Std}  & \multicolumn{1}{l|}{MI} \\ \hline
        Imagenet-O & \multicolumn{1}{r|}{29.2}    & \multicolumn{1}{r|}{30.5}    & \multicolumn{1}{r|}{32.3} & 34.8                             & \multicolumn{1}{r|}{48.3}    & \multicolumn{1}{r|}{51.9}    & \multicolumn{1}{r|}{58.5} & 59.1                             \\ \hline
        Imagenet-R & \multicolumn{1}{r|}{82.8}    & \multicolumn{1}{r|}{85.8}    & \multicolumn{1}{r|}{74.1} & 84.7                             & \multicolumn{1}{r|}{81}      & \multicolumn{1}{r|}{84.4}    & \multicolumn{1}{r|}{70.2} & 76.1                             \\ \hline
        \end{tabular}
        \vspace{0.2cm}
        \caption{Study on ensemble and test time augmentation reduction methods on OOD detection. For in-distribution we used test ImageNet dataset and for out-of-distribution ImageNet-O and ImageNet-R were used, respectively. The metric is ROC-AUC.}
       \label{tab:reduction_study}
   \end{table}

\subsection{Detecting OOD Images for ImageNet}
\label{appendix:additional_imagenet_experiments}
  By analogy with Section~\ref{appendix:rotated_mnist}, we conducted experiments with vanilla ImageNet images vs ImageNet-R and ImageNet-O. The idea is to mix images from two datasets and then sort them by uncertainty. For normal images uncertainty tends to be lower, so we can illustratively see the methods performance (Figure~\ref{fig:mixed_imagenet_r}). In case of ImageNet-O, basic models like max probability and ensemble have difficulties, while NUQ and DDU performed reasonably well. For ImageNet-R, NUQ has performance close to optimal.

  \begin{figure}
     \centering
     \begin{subfigure}{0.48\textwidth}
        \centering
        \includegraphics[width=\textwidth]{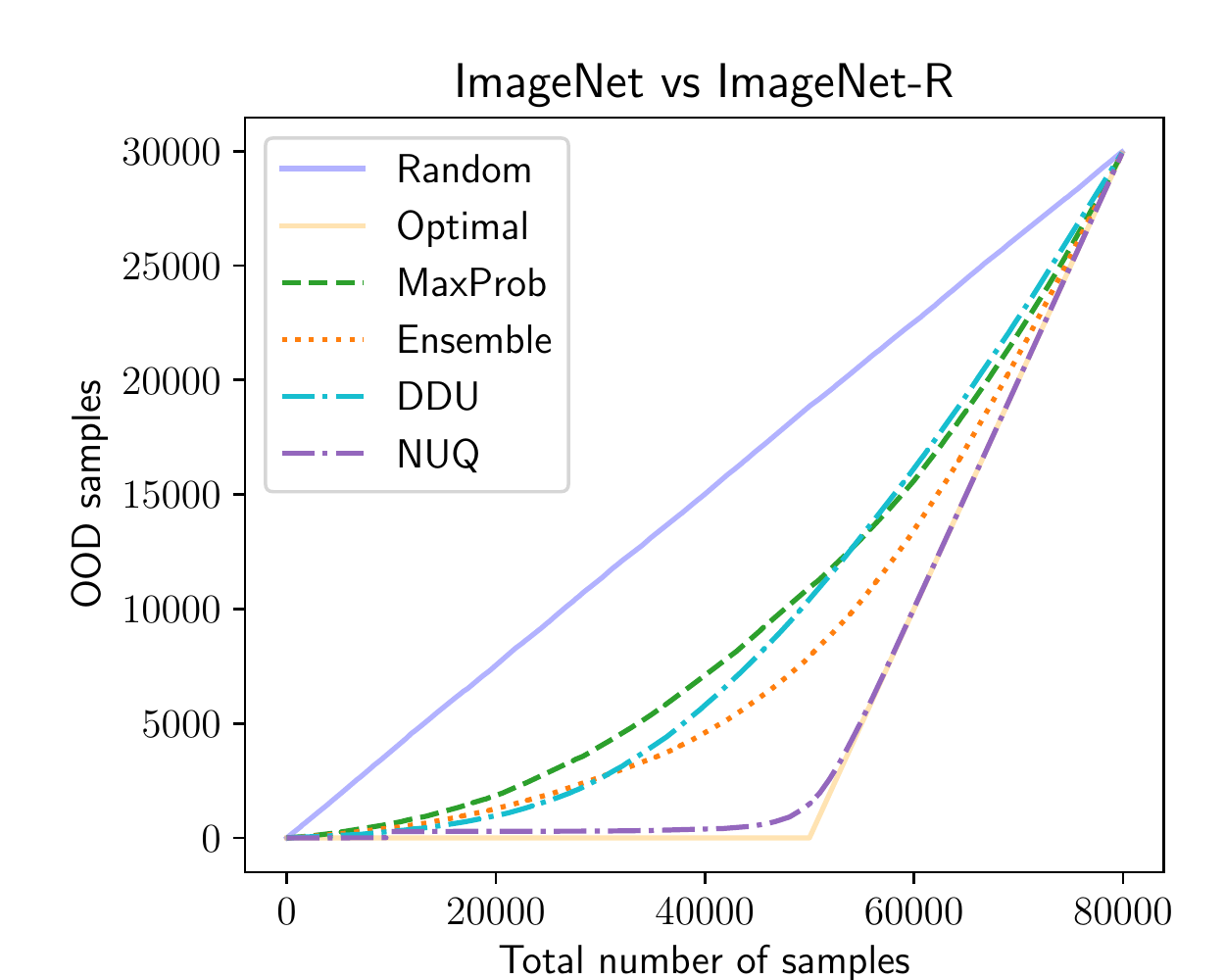}
        \caption{ImageNet-R as out-of-distribution}
        \label{subfig: nuq vs plug-in data distribution}
     \end{subfigure}
          \begin{subfigure}{0.48\textwidth}
        \centering
        \includegraphics[width=\textwidth]{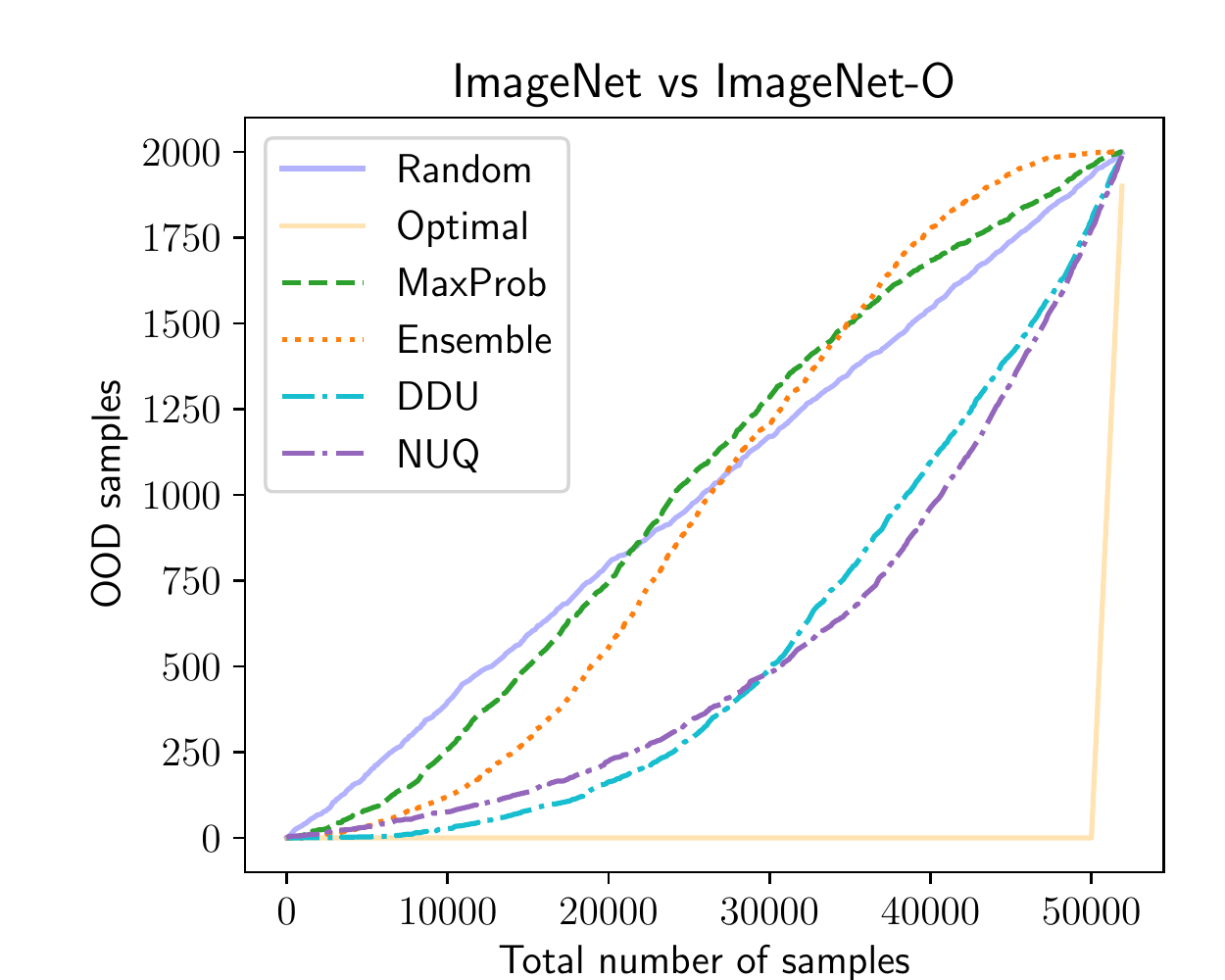}
        \caption{ImageNet-O as out-of-distribution}
        \label{subfig: nuq vs plug-in data distribution}
     \end{subfigure}
    \caption{Detecting OOD images. We sort images by increasing uncertainty and plot the number of in distribution points (lower is better). Ideal model should select samples from original dataset first.}
    \label{fig:mixed_imagenet_r}
  \end{figure}

\section{Hyperparameter Values and Dataset Statistics for Experiments on Textual Data}
\label{sec:textual_hyperparams}

For the optimal hyperparameter search in the experiments on textual datasets, we use Bayesian optimization with an early stopping algorithm. We divide the original training data into validation and training subsets in a ratio of 20 to 80 and perform optimization using sets of pre-defined values for each hyperparameter, which are given in the caption of Table \ref{tab:hyperparam}. As an objective metric, we use the accuracy score. The dataset statistics are presented in Table \ref{tab:ner_dataset}. 

  \begin{table*}[!ht]
    \resizebox{\textwidth}{!}{
    \begin{tabular}{|l|l|l|l|l|l|l|l|}
    \hline
    Dataset &  Objective Score & Spect. Norm. & SNGP & Learning Rate &  Num. Epochs &  Batch Size & Weight Decay \\ \hline 
     MRPC &            0.867 &            - &    - &          5e-05 &           12 &          32 &           1e-1 \\
       MRPC &            0.858 &            + &    - &          3e-05 &           11 &          32 &           1e-1 \\
       MRPC &            0.873 &            + &    + &         1e-4 &            5 &          16 &           0 \\\hline
       CoLA &             0.88 &            - &    - &          1e-05 &            8 &           4 &           1e-1 \\
       CoLA &            0.876 &            + &    - &          3e-05 &           15 &          32 &           1e-1 \\
       CoLA &            0.884 &            + &    + &          7e-06 &            5 &           8 &           0 \\\hline
      SST-2 &            0.936 &            - &    - &          1e-05 &           15 &          64 &           1e-1 \\
      SST-2 &            0.939 &            + &    - &          5e-05 &            7 &          64 &          1e-2 \\
      SST-2 &            0.921 &            + &    + &          2e-05 &           15 &           8 &           1e-1 \\\hline
      CLINC &            0.979 &            - &    - &          3e-05 &            7 &          16 &           1e-1 \\
      CLINC &             0.98 &            + &    - &          7e-05 &            9 &          64 &           1e-1 \\
      CLINC &            0.978 &            + &    + &          7e-05 &           13 &          64 &          1e-2 \\\hline
      ROSTD &            0.994 &            - &    - &          7e-06 &            6 &          32 &           1e-1 \\
      ROSTD &            0.995 &            + &    - &          7e-06 &            6 &          32 &           0 \\
      ROSTD &            0.994 &            + &    + &          3e-05 &           13 &          64 &           0 \\

    \hline
    \end{tabular}
    }
    \caption{\label{tab:hyperparam} Optimal hyperparameters for the experiments with ELECTRA on textual data. ``Objective score'' refers to the accuracy score for classification on the validation sample. We select hyperparameter values from the following pre-defined list: \\ 
    \textbf{Learning rate}: [5e-6, 6e-6, 7e-6, 9e-6, 1e-5, 2e-5, 3e-5, 5e-5, 7e-5, 1e-4]; \\ \textbf{Num. of epochs}: $\{n \in \mathbb{N} | 2 \leq n \leq 15\}$; \\ 
    \textbf{Batch size}: [4, 8, 16, 32, 64]; \\ 
    \textbf{Weight decay}: [0, 1e-2, 1e-1].}
  \end{table*}

  \begin{table}[ht!]
    \centering
    \resizebox{.45\textwidth}{!}{
    \begin{tabular}{|l|c|c|c|}
      \hline
      Datasets &          Train & 
      Test & 
      \# Labels \\
       \hline

       MRPC &    3.7K &   0.4K &  2 \\ 
       CoLA &    8.6K &   1.0K &  2 \\ 
       SST-2 & 67.3K & 0.9K & 2 \\ \hline
       CLINC & 15K & 5.5K & 150 \\ 
       ROSTD & 30.5K & 11.7K & 12 \\ \hline
       20 News Groups &    11.3K &   7.5K &  20 \\ 
       IMDB &    20K &   25K &  2 \\ 
       TREC-10 & 5.5K & 0.5K & 6 \\ 
       WMT-16 & 4500K & 3K & - \\ 
       Amazon & 207.4K & 29.6K & 5 \\ 
       MNLI &    392.7K &   9.8K &  3 \\ 
       RTE &    2.5K &   0.3K &  2 \\

      \hline
    \end{tabular}
    }
    \vspace{0.2cm}
    \caption{\label{tab:ner_dataset} Dataset statistics. The table presents the number of sequences for the training and test parts of the datasets. For the datasets from the GLUE benchmark (MRPC, CoLA, SST-2), we used the available validation set as the test set. For CLINC and ROSTD we present the size of the training part only for in-domain intents. From the last 7 datasets, we use only the test part as OOD instances.}
  \end{table}

\end{document}